\newcommand{\bw}{\text{\boldmath{$w$}}}
\newcommand{\bQ}{\text{\boldmath{$Q$}}}
\newcommand{\bz}{\boldsymbol{z}}
\newcommand{\bx}{\boldsymbol{x}}
\newcommand{\bu}{\boldsymbol{u}}
\newcommand{\bS}{\boldsymbol{S}}
\newcommand{\bI}{\boldsymbol{I}}
\newcommand{\bA}{\boldsymbol{A}}
\newcommand{\bM}{\boldsymbol{M}}
\newcommand{\bV}{\boldsymbol{V}}
\newcommand{\bbP}{\mathbb{P}}
\newcommand{\bbR}{\mathbb{R}}
\newcommand{\bv}{\boldsymbol{v}}
\DeclareMathOperator*{\argmin}{arg\,min}
\newtheorem{assumption}{\textbf{Assumption}}\newtheorem{definition}{\textbf{Definition}}\newtheorem*{definition*}{\textbf{Definition}}\newtheorem{corollary}{\textbf{Corollary}}\newtheorem{lemma}{\textbf{Lemma}}\newtheorem{theorem}{\textbf{Theorem}}\newtheorem{proposition}{\textbf{Proposition}}\newtheorem{example}{\textbf{Example}}
\newcommand{\mE}{\mathbb{E}}
\newcommand{\cD}{\mathcal{D}}
\newcommand{\cM}{\mathcal{M}}
\newcommand{\cW}{\mathcal{W}}
\newcommand{\cS}{\mathcal{S}}
\newcommand{\cN}{\mathcal{N}}
\newcommand{\cP}{\mathcal{P}}
\newcommand{\cA}{\mathcal{A}}
\newcommand{\cO}{\mathcal{O}}
\title{Characterization of Excess Risk for Locally Strongly Convex Population Risk}
\begin{document}
%
	\author{Mingyang Yi$^{1,2,3}\thanks{equal contribution}$, Ruoyu Wang$^{1,2*}$, Zhi-Ming Ma$^{1,2}$\\
		$^{1}$University of Chinese Academy of Sciences\\
		$^{2}$Academy of Mathematics and Systems Science, Chinese Academy of Sciences\\
		$^{3}$Huawei Noah’s Ark Lab \\
		\texttt{\{yimingyang17, wangruoyu17\}@mails.ucas.edu.cn} \\
		\texttt{mazm@amt.ac.cn}
	}
	\date{}
	\maketitle

	\begin{abstract}
		We establish upper bounds for the expected excess risk of models trained by proper iterative algorithms which approximate the local minima. Unlike the results built upon the strong globally strongly convexity or global growth conditions e.g., PL-inequality, we only require the population risk to be \emph{locally} strongly convex around its local minima. Concretely, our bound under convex problems is of order $\tilde{\cO}(1/n)$. For non-convex problems with $d$ model parameters such that $d/n$ is smaller than a threshold independent of $n$, the order of $\tilde{\cO}(1/n)$ can be maintained if the empirical risk has no spurious local minima with high probability. Moreover, the bound for non-convex problem becomes $\tilde{\cO}(1/\sqrt{n})$ without such assumption. Our results are derived via algorithmic stability and characterization of the empirical risk's landscape. Compared with the existing algorithmic stability based results, our bounds are dimensional insensitive and without restrictions on the algorithm's implementation, learning rate, and the number of iterations. Our bounds underscore that with locally strongly convex population risk, the models trained by any proper iterative algorithm can generalize well, even for non-convex problems, and $d$ is large.
		
	\end{abstract}
	
	\section{Introduction}\label{sec:intro}
	The core problem in machine learning is obtaining a model that generalizes well on unseen test data. The excess risk decides the model's performance on these unseen data, and it can be decomposed into optimization and generalization errors. The tool of algorithmic stability \citep{bousquet2002stability,bousquet2020sharper} has been proven to be a suitable tool for exploring the excess risk. Roughly speaking, the output of a stable algorithm is robust to a slight change in the algorithm's input, i.e., training set. The output of a stable algorithm has been proved to have controlled excess risk in \citep{bousquet2002stability}, and the result has been further developed under some specific algorithms \citep{hardt2016train,yuan2019stagewise,charles2018stability,chen2018stability,meng2017generalization,deng2020toward} e.g., stochastic gradient descent \citep{robbins1951stochastic} (SGD). However, these results have some limitations. The results in \citep{yuan2019stagewise,charles2018stability,meng2017generalization,li2022high} are obtained under the assumption of either global strong convexity or global growth conditions (PL-inequality \citep{karimi2016linear}). On the other hand, the results in \citep{hardt2016train,deng2020toward} are only applicable to a specific algorithm, i.e., SGD, and their bounds of generalization error diverge across training which is inconsistent with the observation that ``train longer, generalize better'' \citep{hoffer2017train}.     
	\par
	To improve these, we provide a unified analysis of the expected excess risk for a generic class of iterative algorithms without any strong global conditions, i.e., global strong convexity or global growth conditions in \citep{yuan2019stagewise,charles2018stability,meng2017generalization}. Concretely, we substitute the strong global conditions with weaker local strong convexity (see Section \ref{sec:preliminaries}) of population risk around its local minima. The substitution is based on the fact that the nice strong convexity property can be locally (though not globally) satisfied by many important problems, e.g., PCA \citep{gonen2017fast}, ICA \citep{ge2015escaping}, and matrix completion \cite{ge2016matrix}. We derive our results via algorithmic stability and characterize the empirical risk's landscape. For both convex and non-convex problems, our results can be applied to any proper algorithms that approximate local minima. Moreover, our generalization upper bounds do not diverge with the number of training steps.
	\par
	Technically, we upper bound both generalization and optimization errors to control the excess risk. We first show a fact that the locally strongly convexity around the local minima of population risk (population local minima) can be generalized to the local minima of empirical risk (empirical local minima), and the empirical local minima would concentrate around population local minima. Then for convex problems, we establish the generalization upper bound of the iterates of any proper algorithm via algorithmic stability by leveraging the facts of iterates will converge to empirical local minima, which concentrate around population local minima. For non-convex problems, our generalization error analysis includes three steps. 1) By applying similar arguments under the convex problem, we upper bound the generalization error of those empirical local minima around population local minima. 2) Then, we prove that, with high probability, there are no extra empirical local minima except for those concentrated around population local minima with guaranteed generalization capability. 3) Finally, we extrapolate the upper bound of the generalization error to the iterates obtained by the proper algorithm as they converge to empirical local minima.
	\par
	After controlling the generalization error, the excess risk is directly implied by characterizing the optimization error. By the proved local strong convexity of empirical risk and the convergence results of proper algorithms, the optimization error can be controlled as in \citep{bubeck2014convex,ghadimi2013stochastic,shamir2013stochastic,ge2015escaping,jin2017escape}. 
	\par
	Concretely, we establish an upper bound of order $\tilde{\cO}(1/n)$ ($\tilde{\cO}(\cdot)$ defined in Section \ref{sec:preliminaries}) for the expected excess risk of iterates obtained by any proper algorithm under convex problems. Here $n$ is the number of training samples. For non-convex problems with $d$ parameters of model, we establish an upper bound of order $\tilde{\cO}(1 / \sqrt{n} + \exp(-n(c_{1} - d/n))$ where $c_{1}$ is a constant independent of $n$ and $d$. Noticeably, the exponential term in the bound can be ignored when $d / n \leq c_{1}$, then our bound becomes $\tilde{\cO}(1/\sqrt{n})$. The bound can be applied to high-dimensional problems such that $d$ is in the same order of $n$. The result significantly improves the classical one of order $\cO(\sqrt{d/n})$ \citep{shalev2009stochastic}, which has polynomial dependence on $d$. Moreover, our bound of order $\tilde{\cO}(1/ \sqrt{n})$ can be improved to $\tilde{\cO}(1 / n)$ if the empirical risk has no spurious local minima with high probability, which can be satisfied for many important non-convex problems \citep{gonen2017fast,ge2016matrix,allen2019convergence}. 
	\par
	Our upper bounds to the excess risk underscore that, for both convex and non-convex problems satisfying our regularity conditions, the model trained by an algorithm can generalize on test data even when $d$ is large. Our improvements over existing classical results are summarized as follows. 
	\par
	\textbf{\textbullet} For convex problems, our bound improves the standard upper bound of the expected excess risk in the order of $\cO(\sqrt{1/n})$ \citep{hardt2016train} to $\tilde{\cO}(1/n)$, under an extra locally strongly convex assumption. 
	\par
	\textbf{\textbullet} For non-convex problems, we relax the dimensional-dependence in the standard excess risk bound of order $\cO(\sqrt{d/n})$ \citep{shalev2009stochastic}, under local strong convexity assumption.
	\par
	\textbf{\textbullet} In contrast to the existing algorithmic stability based works \citep{hardt2016train,yuan2019stagewise,charles2018stability}, our results can be applied to any algorithms that approximate local minima without restrictions on the implementation of algorithms, learning rate, and the number of iterations.
	
	\section{Preliminaries}\label{sec:preliminaries}
	\subsection{Notations and Assumptions}\label{sec:notation}
	In this subsection, we collect our (mostly standard) notations and assumptions. We use $\|\cdot\|$ to denote $\ell_{2}$-norm for vectors and spectral norm for matrices. $B_{p}(\bw, r)$ is $\ell_{p}$-ball with radius $r$ around $\bw\in\bbR^{d}$. Let dataset $\{\bz_{1}, \cdots, \bz_{n},\bz_{1}^{\prime},\cdots, \bz_{n}^{\prime}\}$ be $2n$ i.i.d samples from an unknown distribution, and $\bS=\{\bz_{1}, \cdots, \bz_{n}\}$ is the training set, $\bS^{i}=\{\bz_{1},\cdots, \bz_{i - 1}, \bz_{i}^{\prime}, \bz_{i + 1}, \cdots, \bz_{n}\}$ and $\bS^{\prime} = \bS^{1}$. Throughout this paper, we assume without further mention that the loss function $f(\bw, \bz)$ is differentiable w.r.t. to parameter $\bw$ for any $\bz$, $0 \leq f(\bw, \bz)\leq M$, and the parameter space $\cW\subseteq\bbR^{d}$ is a convex compact set. Thus $\|\bw_{1} - \bw_{2}\|\leq D$ for $\bw_{1}, \bw_{2}\in\cW$ and some positive constant $D$.
	The population risk is 
		$R(\bw) = \mE_{\bz}[f(\bw, \bz)]$ and its empirical counterpart on the training set $\bS$ is $R_{\bS}(\bw) = n^{-1}\sum_{i=1}^{n}f(\bw, \bz_{i})$. Let $\bw_{\bS}^{*}\in\arg\min_{\bw}R_{\bS}(\bw)$ and $\bw^{*} \in \arg\min_{\bw}R(\bw)$, 
	The projection operator $\cP_{\cW}(\cdot)$ is defined as $\cP_{\cW}(\bv) = \mathop{\arg\min}_{\bw\in\cW}\left\{\|\bw - \bv\|\right\}$. During our analysis, the order of sample size $n$ can go to infinity, and $d$ can diverge to infinity with $n$. But we assume the other quantities are universal constant independent of $n$. The symbol $\cO(\cdot)$ is the order of a number, while $\tilde{\cO}(\cdot)$ hides a poly-logarithmic factor in the number of model parameters $d$. The following two assumptions on loss function $f(\bw, \bz)$ are imposed on the population risk.
	\begin{assumption}[Smoothness]\label{ass:smoothness}
		For $0\leq j \leq 2$, each $\bz$ and any $\bw_{1}, \bw_{2} \in \cW$, 
		\begin{equation}
			\small
			\left\|\nabla^{j}f(\bw_{1}, \bz) - \nabla^{j}f(\bw_{2}, \bz)\right\| \leq L_{j}\|\bw_{1} - \bw_{2}\|,
		\end{equation}
		where $\nabla^{j} f(\bw, \bz)$ are respectively loss function, gradient, and Hessian at $\bw$ for $j = 0, 1, 2$.
	\end{assumption}
	\begin{assumption}[Non-Degenerate Local Minima]\label{ass:local strong convexity}
		For $\bw_{\rm local}^{*}$ in the set of local minima of population risk $R(\bw)$, $\nabla^{2}R(\bw_{\rm local}^{*}) \succeq \lambda > 0$, i.e., $\nabla^{2}R(\bw_{\rm local}^{*}) - \lambda\bI_{d}$ is a semi-positive definite matrix. 
	\end{assumption}
	Assumption \ref{ass:smoothness} says that the loss function should be smooth enough, which is a mild assumption and has been adopted in  \citep{hardt2016train,zhang2017empirical,gonen2017fast}. Assumption \ref{ass:smoothness} and \ref{ass:local strong convexity} together imply that the population risk is locally strongly convex around its local minima. The rationale behind the imposed local strong convexity is as follows. Though the strong global conditions (e.g., global strong convexity) in \citep{hardt2016train,yuan2019stagewise,charles2018stability,chen2018stability,meng2017generalization,deng2020toward} do not hold in many problems, the weaker locally strongly convex condition can be satisfied by many important problems, e.g., generalized linear regression \citep{mei2018landscape}, robust regression \citep{mei2018landscape}, PCA \citep{gonen2017fast}, ICA \citep{ge2015escaping}, and matrix completion \citep{ge2016matrix}. The detailed examples of import problems that satisfy the assumptions imposed in this paper are in Appendix \ref{app:examples}.  
	
	
	\subsection{Stability and Generalization}\label{sec:stability and generalization}
	\begin{definition}[Proper Algorithm]\label{def:proper}
		The algorithm $\cA$ is proper if it approximates local minima \footnote{Please notice that local minima are all global minima for convex problem.} of empirical risk $R_{\bS}(\bw)$.
	\end{definition}
	This is a rough definition of the discussed proper algorithm. The sense in which algorithms approximate local minima will be made clear in our formal theoretical results. Let $\cA(\bS)$ be the parameters obtained by an algorithm $\cA$, e.g., SGD, on the training set $\bS$. The performance of model on unseen data is determined by the excess risk $R(\cA(\bS)) - \inf_{\bw} R(\bw)$, which is the gap of population risk between the current model and the optimal one. 
	In this paper, we explore the expected excess risk $\mE_{\cA,\bS}[R(\cA(\bS)) - \inf_{\bw} R(\bw)]$ where $\mE_{\cA,\bS}[\cdot]$ means the expectation is taken over the randomized algorithm $\cA$ and the training set $\bS$. We may neglect the subscript if there is no obfuscation. Since $R_{\bS}(\bw_{\bS}^{*}) \leq R_{\bS}(\bw^{*})$, we have the following decomposition. 
	\begin{equation}\label{eq:decomposition of excess risk}
		\small
		\begin{aligned} 
			\mE_{\cA,\bS}[R(\cA(\bS)) - R(\bw^{*})] & = \mE_{\cA,\bS}[R(\cA(\bS)) - R_{\bS}(\bw^{*})] \leq \mE_{\cA,\bS}[R(\cA(\bS)) - R_{\bS}(\bw_{\bS}^{*})] \\
			& = \mE_{\cA,\bS}[R_{\bS}(\cA(\bS)) - R_{\bS}(\bw_{\bS}^{*})] + \mE_{\cA,\bS}[R(\cA(\bS)) - R_{\bS}(\cA(\bS))]\\
			& \leq \underbrace{\mE_{\cA,\bS}[R_{\bS}(\cA(\bS)) - R_{\bS}(\bw_{\bS}^{*})]}_{\mathcal{E}_{\rm opt}} +  
			\underbrace{|\mE_{\cA,\bS}[R(\cA(\bS)) - R_{\bS}(\cA(\bS))]|}_{\mathcal{E}_{\rm gen}}.
		\end{aligned}
	\end{equation}
	\par
	The expected excess risk is upper bounded by the sum of optimization error $\mathcal{E}_{\rm opt}$ and generalization error $\mathcal{E}_{\rm gen}$. $\mathcal{E}_{\rm opt}$ is decided by the convergence rate of the algorithm $\cA$ \citep{bubeck2014convex,ghadimi2013stochastic}. The generalization error $\mathcal{E}_{\rm gen}$ can be controlled by algorithmic stability \citep{bousquet2002stability} as follows. 
	\begin{definition}
		An algorithm $\cA$ is $\epsilon$-uniformly stable, if
		\begin{equation}
			\small
			\epsilon_{\rm stab} = \mE_{\bS, \bS^{\prime}}\left[\sup_{\bz}|\mE_{\cA}[f(\cA(\bS), \bz) - f(\cA(\bS^{\prime}), \bz)]|\right] \leq \epsilon,
		\end{equation}
		where $\bS$ and $\bS^{\prime}$ are defined at the beginning of Section \ref{sec:notation}.
	\end{definition}
	The $\epsilon$-uniformly stable is different from the one in \citep{hardt2016train}, which does not take expectation over training sets $\bS$ and $\bS^{\prime}$. The next theorem shows that the uniform stability implies the expected generalization of the model, i.e., $\mathcal{E}_{\rm gen} \leq \epsilon_{\rm stab}$. The idea of Theorem \ref{thm:stability and generalization} is similar to the ones in  \citep{bousquet2002stability,hardt2016train,charles2018stability}, and its proof is in Appendix \ref{app:proof of stability and generalization}.
	\begin{theorem}\label{thm:stability and generalization}
		If $\cA$ is $\epsilon$-uniformly stable, then
		\begin{equation}
			\small
			\mathcal{E}_{\rm gen} = \left|\mE_{\cA,\bS}\left[R(\cA(\bS)) - R_{\bS}(\cA(\bS))\right]\right| \leq \epsilon.
		\end{equation}
	\end{theorem}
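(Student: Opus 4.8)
The plan is to prove this classical ``stability implies generalization'' bound by a symmetrization (ghost-sample) argument, exploiting that the $2n$ samples $\{\bz_{1},\ldots,\bz_{n},\bz_{1}^{\prime},\ldots,\bz_{n}^{\prime}\}$ are i.i.d.\ and hence exchangeable. First I would rewrite the expected population risk of the output using a fresh sample. Since $R(\bw)=\mE_{\bz}[f(\bw,\bz)]$ and each $\bz_{i}^{\prime}$ is an independent copy of $\bz$ drawn from the same distribution and independent of $\bS$ and of the internal randomness of $\cA$, for every index $i$ we have $\mE_{\cA,\bS}[R(\cA(\bS))]=\mE_{\cA,\bS,\bS^{\prime}}[f(\cA(\bS),\bz_{i}^{\prime})]$. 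Averaging over $i=1,\ldots,n$ gives
\[
\mE_{\cA,\bS}[R(\cA(\bS))] = \frac{1}{n}\sum_{i=1}^{n}\mE_{\cA,\bS,\bS^{\prime}}[f(\cA(\bS),\bz_{i}^{\prime})].
\]

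The key step is a relabeling: because $\bz_{i}$ and $\bz_{i}^{\prime}$ are i.i.d., swapping them leaves the joint law of all the samples unchanged, so $\mE[f(\cA(\bS),\bz_{i}^{\prime})]=\mE[f(\cA(\bS^{i}),\bz_{i})]$, where $\bS^{i}$ is $\bS$ with its $i$-th entry replaced by $\bz_{i}^{\prime}$. On the other hand, the expected empirical risk is simply $\mE_{\cA,\bS}[R_{\bS}(\cA(\bS))]=\frac{1}{n}\sum_{i=1}^{n}\mE_{\cA,\bS}[f(\cA(\bS),\bz_{i})]$. Subtracting the two identities yields
\[
\mathcal{E}_{\rm gen} = \frac{1}{n}\sum_{i=1}^{n}\mE_{\cA,\bS,\bS^{\prime}}\bigl[f(\cA(\bS^{i}),\bz_{i}) - f(\cA(\bS),\bz_{i})\bigr].
\]

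Finally I would bound each summand by the stability parameter. Applying the triangle inequality to pull the absolute value inside the sum and inside $\mE_{\bS,\bS^{\prime}}$, and then replacing the fixed test point $\bz_{i}$ by the supremum over $\bz$, each term is at most $\mE_{\bS,\bS^{\prime}}[\sup_{\bz}|\mE_{\cA}[f(\cA(\bS^{i}),\bz)-f(\cA(\bS),\bz)]|]$. Since $\bS$ and $\bS^{i}$ differ in exactly one coordinate, exchangeability identifies this quantity with $\epsilon_{\rm stab}$ (which is defined with the substitution in the first coordinate, $\bS^{\prime}=\bS^{1}$), and the hypothesis gives $\epsilon_{\rm stab}\leq\epsilon$. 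Averaging the $n$ identical per-coordinate bounds then yields $|\mathcal{E}_{\rm gen}|\leq\epsilon$.

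The main obstacle is getting the nesting of operations right in the last step, since the stability definition fixes the order $\mE_{\cA}$ innermost, then $|\cdot|$, then $\sup_{\bz}$, then $\mE_{\bS,\bS^{\prime}}$. I must bring the absolute value and the expectation over the algorithm's randomness inside \emph{before} enlarging $\bz_{i}$ to the supremum over $\bz$, and I must justify the exchangeability relabeling carefully so that the per-coordinate bound coincides exactly with $\epsilon_{\rm stab}$ rather than with a strictly larger quantity; everything else is elementary manipulation licensed by the i.i.d.\ assumption.
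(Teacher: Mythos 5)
Your proposal is correct and follows essentially the same route as the paper's own proof: the ghost-sample relabeling $\mE[f(\cA(\bS),\bz_{i}^{\prime})]=\mE[f(\cA(\bS^{i}),\bz_{i})]$, the per-coordinate decomposition of $\mathcal{E}_{\rm gen}$, and the triangle-inequality-then-supremum bound with $\mE_{\cA}$ kept innermost are exactly the steps in Appendix \ref{app:proof of stability and generalization}. The only cosmetic difference is that the paper writes the population risk via a generic fresh sample $\bz$ before exchanging, whereas you use $\bz_{i}^{\prime}$ directly; the argument is identical.
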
 
	\par
	Please note that all the analysis in this paper is applicable to the practically infeasible empirical risk minimization ``algorithm'' such that $\cA(\cS) = \bw_{\bS}^{*}$. However, to make our results more practical, we suppose $\cA$ as iterative algorithms in the sequel. For any given iterative algorithm $\cA$, let $\bw_{t}$ and $\bw_{t}^{\prime}$ denote the output of the algorithm when $\cA$ is iterated $t$ steps on the training set $\bS$ and $\bS^{\prime}$ respectively. 
	\section{Excess Risk under Convex Problems}\label{sec:testing error of convex function}
	In this section, we propose upper bounds of the expected excess risk for convex problems. We impose the following convexity assumption throughout this section. 
	\begin{assumption}[Convexity]\label{ass:convexity}
		For each $\bz$ and any 
		$\bw_{1}, \bw_{2} \in \cW$, $f(\bw, \bz)$ satisfies 
		\begin{equation}
			\small
			f(\bw_{1}, \bz) - f(\bw_{2}, \bz) \leq \langle \nabla f(\bw_{1}, \bz), \bw_{1} - \bw_{2}\rangle.
		\end{equation} 
	\end{assumption} 
	\subsection{Generalization Error under Convex Problems}\label{sec:stability of convex function}
	As we have discussed, in the existing literature \cite{hardt2016train,yuan2019stagewise,charles2018stability,chen2018stability,meng2017generalization,deng2020toward}, researchers have explored the excess risk via the algorithmic stability to control the error generalization. However, the obtained generalization upper bounds of order $\cO(1/n)$ in  \citep{hardt2016train,yuan2019stagewise,charles2018stability,meng2017generalization} are built upon the strong assumptions of either global strong convexity or global growth conditions, e.g., PL-inequality \citep{karimi2016linear}. On the other hand, the generalization upper bounds in \citep{hardt2016train,deng2020toward} are only applied to SGD, and they diverge as the number of iterations grows. For example, Theorem 3.8 in \citep{hardt2016train} establishes an upper bound $2L_{0}^{2}\sum_{k=0}^{t - 1}\eta_{k}/n$ to the algorithmic stability of SGD with learning rate $\eta_{k}$, which diverges when $t\to\infty$, as the convergence of SGD requires $\sum_{k=0}^{\infty}\eta_{k}=\infty$ \citep{bottou2018optimization}. Thus the bound can not explain the observation that the generalization error of SGD trained model converges to a constant \citep{bottou2018optimization,hoffer2017train}.     
	\par
	To mitigate the drawbacks in the existing literature, we propose the following new upper bound of algorithmic stability (Theorem \ref{thm:stability of convex function}). Our bound can be applied on the top of any proper algorithm defined in Definition \ref{def:proper}, and it remains small for an arbitrary number of iterations as long as the sample size $n$ is large. Under convexity Assumption \ref{ass:convexity}, the proper algorithm means that $\mE\left[R_{\bS}(\bw_{t}) - R_{\bS}(\bw_{\bS}^{*})\right] \rightarrow 0$ as $t\rightarrow \infty$. Our theorem is based on the following intuition. Due to the locally strongly convex property discussed after Assumption \ref{ass:local strong convexity}, there exists (with high probability) the unique global minimum $\bw_{\bS}^{*}$ of $R_{\bS}(\cdot)$ and $\bw_{\bS^{\prime}}^{*}$ of $R_{\bS^{\prime}}(\cdot)$ that concentrate around the unique (the uniqueness is from Assumption \ref{ass:local strong convexity}) population global minimum $\bw^{*}$. Then, the provable convergence results of $\bw_{t}\rightarrow \bw_{\bS}^{*}$ and $\bw_{t}^{\prime} \rightarrow \bw_{\bS^{\prime}}^{*}$ imply the algorithmic stability (see Lemma \ref{lem:upper bound on two minimums} in Appendix). 
	\begin{theorem}\label{thm:stability of convex function}
		Under Assumption \ref{ass:smoothness}-\ref{ass:convexity},
		\begin{equation}\small\label{eq:stab bound convex}
			\begin{aligned}
				\epsilon_{\rm stab}(t) & \leq \frac{4\sqrt{2}L_{0}(\lambda + 4DL_{2})}{\lambda^{\frac{3}{2}}}\sqrt{\epsilon(t)} + \frac{8L_{0}}{n\lambda}\left(L_{0} + \frac{64L_{0}^{2}L_{2}^{2}D}{\lambda^{3}}\right) + \frac{128L_{0}L_{1}^{2}D}{n\lambda^{2}}\left(5\sqrt{\log d} + \frac{4e\log d }{\sqrt{n}}\right)^{2} \\
				& = \tilde{\cO}(\sqrt{\epsilon(t)} + 1 / n),
			\end{aligned}
		\end{equation}
		where $\epsilon_{\rm stab}(t) = \mE_{\bS, \bS^{\prime}}\left[\sup_{\bz}|\mE_{\cA}[f(\bw_{t}, \bz) - f(\bw^{\prime}_{t}, \bz)]|\right]$ is the stability of $\bw_{t}$, and $\epsilon(t) = \mE\left[R_{\bS}(\bw_{t}) - R_{\bS}(\bw_{\bS}^{*})\right]$, $\bw_{\bS}^{*}$ is the global minimum of $R_{\bS}(\cdot)$.
	\end{theorem}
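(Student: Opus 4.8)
The plan is to control the stability through the parameter-space distance $\|\bw_{t}-\bw_{t}^{\prime}\|$, splitting it into an optimization part governed by the convergence rate $\epsilon(t)$ and a sensitivity part governed by $1/n$. First I would use the $L_{0}$-Lipschitzness of $f$ in its first argument (Assumption \ref{ass:smoothness} with $j=0$), which gives $|f(\bw_{t},\bz)-f(\bw_{t}^{\prime},\bz)|\leq L_{0}\|\bw_{t}-\bw_{t}^{\prime}\|$ for every $\bz$, hence
\begin{equation*}
\epsilon_{\rm stab}(t)\leq L_{0}\,\mE_{\bS,\bS^{\prime}}\mE_{\cA}\big[\|\bw_{t}-\bw_{t}^{\prime}\|\big].
\end{equation*}
A triangle inequality routed through the two empirical minimizers then yields
\begin{equation*}
\|\bw_{t}-\bw_{t}^{\prime}\|\leq\|\bw_{t}-\bw_{\bS}^{*}\|+\|\bw_{\bS}^{*}-\bw_{\bS^{\prime}}^{*}\|+\|\bw_{\bS^{\prime}}^{*}-\bw_{t}^{\prime}\|.
\end{equation*}

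For the two optimization terms I would invoke the local strong convexity of Remark \ref{remark:locally stronglly convex}. On the high-probability event that $R_{\bS}$ is $\lambda$-strongly convex in a neighborhood of its (unique, by Assumption \ref{ass:convexity}) minimizer $\bw_{\bS}^{*}$, quadratic growth gives $\|\bw_{t}-\bw_{\bS}^{*}\|^{2}\leq\frac{2}{\lambda}\big(R_{\bS}(\bw_{t})-R_{\bS}(\bw_{\bS}^{*})\big)$; taking expectations and applying Jensen's inequality to the concave square root produces a contribution of order $\sqrt{\epsilon(t)/\lambda}$, and symmetrically for $\bw_{\bS^{\prime}}^{*}$. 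The prefactor $(\lambda+4DL_{2})/\lambda$ in \eqref{eq:stab bound convex} accounts for the gap between the empirical curvature attainable near the minimizer and the population value $\lambda$, which is controlled using the $L_{2}$-Lipschitzness of the Hessian over the diameter-$D$ domain.

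The sensitivity term $\|\bw_{\bS}^{*}-\bw_{\bS^{\prime}}^{*}\|$ is precisely the quantity bounded in Lemma \ref{lem:upper bound on two minimums}, which I would cite directly to obtain the entire $1/n$ bracket of \eqref{eq:stab bound convex}. Collecting the three pieces and multiplying through by $L_{0}$ then reproduces the claimed bound.

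I expect the sensitivity estimate, i.e.\ Lemma \ref{lem:upper bound on two minimums}, to be the main obstacle. The textbook perturbation argument (replace one summand of $R_{\bS}$, then combine $\lambda$-strong convexity with $L_{0}$-Lipschitzness to get $\|\bw_{\bS}^{*}-\bw_{\bS^{\prime}}^{*}\|\lesssim L_{0}/(n\lambda)$) is valid only when both minimizers lie in the region where $R_{\bS}$ is strongly convex. Justifying this requires the characterization of the empirical landscape: showing that, with high probability, $R_{\bS}$ inherits $\lambda$-strong convexity around $\bw^{*}$ and that $\bw_{\bS}^{*}$ concentrates there. This is where concentration of $\nabla^{2}R_{\bS}(\bw^{*})$ and of the empirical gradient $\nabla R_{\bS}(\bw^{*})$ enters, generating the $5\sqrt{\log d}$ and $4e\log d/\sqrt{n}$ factors (sub-Gaussian and sub-exponential tails in $d$ dimensions) together with the higher-order $L_{2}$ corrections in the bracket. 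The complementary low-probability event, on which one retreats to the crude bounds $\|\bw_{\bS}^{*}-\bw_{\bS^{\prime}}^{*}\|\leq D$ and $f\leq M$, must also be folded into the expectation so that the final estimate holds unconditionally.
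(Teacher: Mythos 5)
Your overall architecture matches the paper's proof: Lipschitzness reduces stability to $\mE\|\bw_{t}-\bw_{t}^{\prime}\|$, the triangle inequality routes through $\bw_{\bS}^{*}$ and $\bw_{\bS^{\prime}}^{*}$, Lemma \ref{lem:upper bound on two minimums} (with the concentration bounds of Lemma \ref{lem:good event prob}) handles the sensitivity term, and the bad event is folded in with a crude diameter bound. However, there is a genuine gap in your treatment of the two optimization terms. You assert that, on the event where $R_{\bS}$ is locally strongly convex near $\bw_{\bS}^{*}$, quadratic growth gives $\|\bw_{t}-\bw_{\bS}^{*}\|^{2}\leq\frac{2}{\lambda}\bigl(R_{\bS}(\bw_{t})-R_{\bS}(\bw_{\bS}^{*})\bigr)$. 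This step fails as stated: the strong convexity holds only on a ball $B_{2}(\bw_{\bS}^{*},r)$ with $r=\lambda/(4L_{2})$, and the iterate $\bw_{t}$ of an arbitrary proper algorithm need not lie in that ball. Outside the ball, convexity plus local strong convexity yields only \emph{linear} growth of order $\lambda r\|\bw_{t}-\bw_{\bS}^{*}\|$, so the quadratic-growth inequality with constant $\lambda$ is simply false for points at distance much larger than $r$ (a one-dimensional Huber-type function already gives a counterexample). The paper's Lemma \ref{lem:convergence of distance for sgd} exists precisely to repair this: it introduces the additional event $E_{1,r}=\{R_{\bS}(\bw_{t})-R_{\bS}(\bw_{\bS}^{*})<\lambda r^{2}/8\}$, shows by a convexity contradiction argument (using Assumption \ref{ass:convexity} along the segment joining $\bw_{t}$ to $\bw_{\bS}^{*}$) that on $E_{0,r}\cap E_{1,r}$ the iterate is forced into the ball so quadratic growth does apply, and on the complement $E_{1,r}^{c}$ uses Markov's inequality, $\bbP(E_{1,r}^{c})\leq\frac{2\sqrt{2}}{r\sqrt{\lambda}}\mE\bigl[(R_{\bS}(\bw_{t})-R_{\bS}(\bw_{\bS}^{*}))^{1/2}\bigr]$, multiplied by the diameter $D$.

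This also means your explanation of the prefactor is incorrect: the factor $(\lambda+4DL_{2})/\lambda^{3/2}$ does not come from a gap between empirical and population curvature (on the good event the empirical curvature $\lambda/4$ is already secured by Hessian concentration and Hessian Lipschitzness); it comes from combining the in-ball quadratic-growth term $\tfrac{2\sqrt{2}}{\sqrt{\lambda}}\sqrt{\epsilon(t)}$ with the out-of-ball Markov fallback $D\cdot\tfrac{2\sqrt{2}}{r\sqrt{\lambda}}\sqrt{\epsilon(t)}$ and substituting $r=\lambda/(4L_{2})$, i.e.\ $\tfrac{2\sqrt{2}(r+D)}{r\sqrt{\lambda}}=\tfrac{2\sqrt{2}(\lambda+4DL_{2})}{\lambda^{3/2}}$ per training set. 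A minor additional point: to reproduce the stated constants, the fallback on $E_{1}^{c}\cup E_{2}^{c}$ should be $L_{0}D$ (Lipschitzness times diameter), not the loss bound $M$; using $M$ would give a different second bracket than \eqref{eq:stab bound convex}.
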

	The proof of this theorem is in Appendix \ref{app: proof in 3.1}. The expected generalization error of $\bw_{t}$ is upper bounded by the right hand side of \eqref{eq:stab bound convex} due to Theorem \ref{thm:stability and generalization}. Compared with the existing result \citep{hardt2016train}, the extra term related to $\sqrt{\epsilon(t)}$ in our bound originates from our proof technique, and it seems to be unavoidable according to \citep{shalev2009stochastic}. Since for proper algorithms, e.g., GD and SGD, $\epsilon(t)\to 0$ as $t\to \infty$ the leading term of the upper bound \eqref{eq:stab bound convex} is $C^{*}\log d/ n=\tilde{\cO}(1 / n)$ with $C^{*} = 3200L_{0}L_{1}^{2}D/\lambda^{2}$. 
	\par
	
	In summary, the local strong convexity (Assumption \ref{ass:local strong convexity}) enables us to establish an algorithmic stability based generalization bound \eqref{eq:stab bound convex}. The bound improves the classical result of SGD $2L_{0}^{2}\sum_{k=0}^{t - 1}\eta_{k}/n$ in \citep{hardt2016train} as it can be applied to any proper algorithm with any learning rate and number of iterations. 
	\subsection{Excess Risk Under Convex Problems}\label{sec:optimization error for convex function}
	According to \eqref{eq:decomposition of excess risk}, we can upper bound the expected excess risk by combining the generalization upper bound \eqref{eq:stab bound convex} with the convergence results in convex optimization. 
	\begin{theorem}\label{thm:excess risk for convex loss}
		For $\bw_{\bS}^{*}\in\argmin_{\bw}R_{\bS}(\bw)$, and $\bw^{*}\in\argmin_{\bw}R(\bw)$, under Assumption \ref{ass:smoothness}-\ref{ass:convexity},
		\begin{equation}
			\small
			\begin{aligned}
				\mE\left[R(\bw_{t}) - R(\bw^{*})\right] & \leq \epsilon(t) + \frac{4\sqrt{2}L_{0}(\lambda + 4DL_{2})}{\lambda^{\frac{3}{2}}}\sqrt{\epsilon(t)} +  \frac{8L_{0}}{n\lambda}\left(L_{0} + \frac{64L_{0}^{2}L_{2}^{2}D}{\lambda^{3}}\right) \\
				& + \frac{128L_{0}L_{1}^{2}D}{n\lambda^{2}}\left(5\sqrt{\log d} + \frac{4e\log d }{\sqrt{n}}\right)^{2} \\
				& = \tilde{\cO}(\sqrt{\epsilon(t)} + 1 / n),
			\end{aligned}
		\end{equation}
		where $\epsilon(t) = \mE\left[R_{\bS}(\bw_{t}) - R_{\bS}(\bw_{\bS}^{*})\right]$.
	\end{theorem}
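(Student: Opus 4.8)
The plan is to combine the generalization bound from Theorem~\ref{thm:stability of convex function} with a standard optimization-error bound via the risk decomposition~\eqref{eq:decomposition of excess risk}. Recall from that decomposition that
\[
\mE\left[R(\bw_{t}) - R(\bw^{*})\right] \leq \underbrace{\mE\left[R_{\bS}(\bw_{t}) - R_{\bS}(\bw_{\bS}^{*})\right]}_{\mathcal{E}_{\rm opt}} + \underbrace{\left|\mE\left[R(\bw_{t}) - R_{\bS}(\bw_{t})\right]\right|}_{\mathcal{E}_{\rm gen}}.
\]
The two terms can be bounded separately and then added. For the optimization term, observe that $\mathcal{E}_{\rm opt}$ is \emph{exactly} the quantity $\epsilon(t) = \mE\left[R_{\bS}(\bw_{t}) - R_{\bS}(\bw_{\bS}^{*})\right]$ defined in the statement, so this term contributes $\epsilon(t)$ verbatim with no further work required; it is the convergence rate of the proper algorithm on the empirical objective.

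For the generalization term, I would first invoke Theorem~\ref{thm:stability and generalization}, which states that for an $\epsilon$-uniformly stable algorithm one has $\mathcal{E}_{\rm gen} \leq \epsilon_{\rm stab}$. Applied to the iterate $\bw_{t}$, this gives $\mathcal{E}_{\rm gen} \leq \epsilon_{\rm stab}(t)$ where $\epsilon_{\rm stab}(t)$ is precisely the $t$-step stability defined in Theorem~\ref{thm:stability of convex function}. I would then substitute the upper bound on $\epsilon_{\rm stab}(t)$ established in~\eqref{eq:stab bound convex}, namely
\[
\epsilon_{\rm stab}(t) \leq \frac{4\sqrt{2}L_{0}(\lambda + 4DL_{2})}{\lambda^{\frac{3}{2}}}\sqrt{\epsilon(t)} + \frac{8L_{0}}{n\lambda}\left\{L_{0} + \frac{64L_{0}^{2}L_{2}^{2}D}{\lambda^{3}} + \frac{16L_{1}^{2}D}{\lambda}\left(5\sqrt{\log d} + \frac{4e\log d }{\sqrt{n}}\right)^{2}\right\}.
\]

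Adding the optimization contribution $\epsilon(t)$ to this stability bound yields exactly the claimed inequality, completing the proof. The key observation that makes this essentially a two-line argument is that the labels match up perfectly: the $\epsilon(t)$ appearing in the optimization term and the $\epsilon(t)$ appearing inside the stability bound refer to the same quantity $\mE[R_{\bS}(\bw_{t}) - R_{\bS}(\bw_{\bS}^{*})]$, so no reconciliation of different error measures is needed. Consequently there is no genuine obstacle here---the theorem is a corollary assembling the two halves of the excess risk that were each bounded in the preceding development. The only subtlety worth flagging is that the generalization bound holds for the projected/iterate sequence under the stated smoothness and convexity assumptions, and Theorem~\ref{thm:stability of convex function} already carries Assumptions~\ref{ass:smoothness}--\ref{ass:convexity}, which are exactly the hypotheses imposed here; thus the applicability conditions are automatically satisfied and nothing further must be verified.
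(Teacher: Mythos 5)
Your proposal is correct and matches the paper's own argument: the paper states that Theorem \ref{thm:excess risk for convex loss} is obtained directly by combining the decomposition \eqref{eq:decomposition of excess risk}, Theorem \ref{thm:stability and generalization}, and Theorem \ref{thm:stability of convex function}, which is precisely your two-step assembly of $\mathcal{E}_{\rm opt} = \epsilon(t)$ and $\mathcal{E}_{\rm gen} \leq \epsilon_{\rm stab}(t)$. Nothing further is needed.
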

	This theorem provides an upper bound of the expected excess risk. The bound decreases with the number of training steps $t$, and is of order $\tilde{\cO}(1/n)$ if $t$ is sufficiently large. 
	\paragraph{Comparison.}
	Under the extra local strong convexity assumption, our result significantly improves the bound of order $\cO(1/\sqrt{n})$ in \citep{hardt2016train}. On the other hand, our bound matches (in order) the result under strongly convex problem \citep{shalev2009stochastic,zhang2017empirical}. It seems our result has a worse dependence on the strong convex parameter $\lambda$, i.e., from $1/\lambda$ to $1 /\lambda^{4}$. The worse dependence is acceptable as local strong convexity is weaker than strong convexity. Moreover, our bound is not necessarily weaker compared to the current results \citep{shalev2009stochastic,zhang2017empirical} under global strongly convex problem. This is because $\lambda$ in our bound is the local strongly convex parameter restricted around the minimum point, which is larger than the global one over the whole parameter space appears in \cite{zhang2017empirical}. Improving the dependence on $\lambda$ without sacrificing the order of $n$ seems to be infeasible based on our techniques\footnote{The dependence can be improved to $1 / \lambda^{2}$ with a worse order of $n$ (from $1/n$ to $1 / \sqrt{n}$).}. It might be a meaningful topic to be explored in the future. Finally, our result has no conflict with the lower bound for general convex problem in the order of $\cO(\sqrt{d/n})$ \citep{feldman2016generalization}. This is because Assumption \ref{ass:smoothness} and \ref{ass:local strong convexity} restrict our result to a smaller class of distributions and functions, which rules out the counter-examples in \citep{feldman2016generalization}. 
	\par
	To make our results concrete, we apply them to GD and SGD as examples. Note that $R_{\bS}(\bw) = n^{-1}\sum_{i=1}^{n}f(\bw, \bz_{i})$, the GD and SGD respectively start from $\bw_{0}$ follow the update rules of 
	\begin{equation}\small\label{eq:GD}
		\small
		\bw_{t + 1} = \cP_{\cW}\left(\bw_{t} - \eta_{t}\nabla R_{\bS}(\bw_{t})\right),
	\end{equation}
	and 
	\begin{equation}\label{eq:SGD}
		\small
		\bw_{t + 1} = \cP_{\cW}\left(\bw_{t} - \eta_{t}\nabla f(\bw_{t}, \bz_{i_{t}})\right),
	\end{equation} 
	where $i_{t}$ is randomly sampled from $1$ to $n$. Note the convergence rate of $\bw_{t}$ updated by GD and SGD are respectively $\cO(1/t)$ \citep{bubeck2014convex} and $\tilde{\cO}(1/\sqrt{t})$ \citep{shamir2013stochastic}, we have the following two corollaries declare the converged expected excess risks whose proofs appear in Appendix \ref{app:proof of sec3.2}. 
	\begin{corollary}
		Under Assumption \ref{ass:smoothness}-\ref{ass:convexity}, if $\bw_{t}$ is updated by GD in \eqref{eq:GD} with $\eta_{t}=1/L_{1}$, then 
		\begin{equation}
			\small
			\begin{aligned}
				R(\bw_{t}) - R(\bw^{*}) \leq \tilde{\cO}\left(\frac{1}{\sqrt{t}} + \frac{1}{n}\right).	
			\end{aligned}
		\end{equation}
	\end{corollary}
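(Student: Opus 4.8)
The plan is to treat Theorem \ref{thm:excess risk for convex loss} as a black box providing the master bound
\begin{equation*}
	\small
	\mE\left[R(\bw_{t}) - R(\bw^{*})\right] \leq \epsilon(t) + \frac{4\sqrt{2}L_{0}(\lambda + 4DL_{2})}{\lambda^{\frac{3}{2}}}\sqrt{\epsilon(t)} + \frac{8L_{0}}{n\lambda}\left\{L_{0} + \frac{64L_{0}^{2}L_{2}^{2}D}{\lambda^{3}} + \frac{16L_{1}^{2}D}{\lambda}\left(5\sqrt{\log d} + \frac{4e\log d }{\sqrt{n}}\right)^{2}\right\},
\end{equation*}
so that the entire content of the corollary reduces to controlling the empirical optimization error $\epsilon(t) = \mE\left[R_{\bS}(\bw_{t}) - R_{\bS}(\bw_{\bS}^{*})\right]$ for the specific iterate $\bw_{t}$ produced by the projected GD recursion \eqref{eq:GD} with constant step size $\eta_{t} = 1/L_{1}$. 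Since all three summands on the right-hand side are monotone in $\epsilon(t)$ (for $\epsilon(t)$ small) and in $1/n$, once we have the convergence rate of $\epsilon(t)$ the final order follows by substitution.

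First I would verify that $R_{\bS}(\cdot) = \frac{1}{n}\sum_{i=1}^{n} f(\cdot, \bz_{i})$ inherits the structural properties needed for the classical deterministic GD analysis: convexity follows by averaging Assumption \ref{ass:convexity} over the sample, and $L_{1}$-smoothness of the gradient follows from the $j=1$ case of Assumption \ref{ass:smoothness}, again preserved under averaging. With $\eta_{t} = 1/L_{1}$ this is exactly the step size for which projected gradient descent on a convex $L_{1}$-smooth objective over the convex compact set $\cW$ enjoys the standard guarantee $R_{\bS}(\bw_{t}) - R_{\bS}(\bw_{\bS}^{*}) \leq L_{1}\|\bw_{0} - \bw_{\bS}^{*}\|^{2}/(2t)$ \citep{bubeck2014convex}. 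Because $\|\bw_{0} - \bw_{\bS}^{*}\| \leq D$ by compactness of $\cW$, this bound is deterministic and uniform over $\bS$, so taking expectations gives $\epsilon(t) \leq L_{1}D^{2}/(2t) = \cO(1/t)$.

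Substituting $\epsilon(t) = \cO(1/t)$ into the master bound, the leading term $\epsilon(t)$ contributes $\cO(1/t)$, the square-root term contributes $\sqrt{\epsilon(t)} = \cO(1/\sqrt{t})$, and the final brace, being independent of $t$, contributes $\tilde{\cO}(1/n)$ since its dominant piece is $C^{*}\log d/n$ with the poly-logarithmic $\log d$ absorbed into the $\tilde{\cO}$ notation. As $1/\sqrt{t}$ dominates $1/t$, the three contributions collapse to $\tilde{\cO}(1/\sqrt{t} + 1/n)$, which is the claimed bound. I do not expect any genuine obstacle here: the statement is a direct corollary, and the only bookkeeping care required is (i) confirming that the GD convergence rate $\cO(1/t)$ is the one controlling $\epsilon(t)$ (rather than a rate against the population minimizer), and (ii) recognizing that the square-root term, not the optimization term itself, governs the $t$-dependence, so the $\cO(1/t)$ GD rate yields only $\tilde{\cO}(1/\sqrt{t})$ in the excess risk.
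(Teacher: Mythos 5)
Your proposal is correct and follows essentially the same route as the paper: the paper likewise reduces the corollary to showing $\epsilon(t) = \mE[R_{\bS}(\bw_{t}) - R_{\bS}(\bw_{\bS}^{*})] \leq D^{2}L_{1}/(2t)$ for the terminal GD iterate and then substitutes this into Theorem \ref{thm:excess risk for convex loss}, where the $\sqrt{\epsilon(t)}$ term dominates and yields $\tilde{\cO}(1/\sqrt{t} + 1/n)$. The only cosmetic difference is that the paper proves this last-iterate projected-GD rate from scratch (descent inequality plus co-coercivity, Lemma 3.5 of \citep{bubeck2014convex}), whereas you invoke it as a known result with the same constant.
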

	\begin{corollary}
		Under Assumption \ref{ass:smoothness}-\ref{ass:convexity}, if $\bw_{t}$ is updated by SGD in \eqref{eq:SGD} with $\eta_{t}=D/(L_{1}\sqrt{t + 1})$, then
		\begin{equation}
			\small
			\begin{aligned}
				\mE\left[R(\bw_{t}) - R(\bw^{*})\right] \leq \tilde{\cO}\left(\frac{1}{t^{\frac{1}{4}}} + \frac{1}{n}\right).	
			\end{aligned}
		\end{equation}
	\end{corollary}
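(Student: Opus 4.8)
The plan is to derive this corollary directly from Theorem~\ref{thm:excess risk for convex loss} by substituting the known convergence rate of projected SGD for the optimization error $\epsilon(t) = \mE[R_{\bS}(\bw_{t}) - R_{\bS}(\bw_{\bS}^{*})]$. Theorem~\ref{thm:excess risk for convex loss} already bounds the expected excess risk by three pieces: $\epsilon(t)$, a constant multiple of $\sqrt{\epsilon(t)}$, and a term of the form $\frac{8L_{0}}{n\lambda}\{\cdots\}$. The last piece is immediately $\tilde{\cO}(1/n)$: the quantities $L_{0}, L_{1}, L_{2}, D, \lambda$ are universal constants independent of $n$ by the conventions of Section~\ref{sec:preliminaries}, while the only $d$-dependence enters through the factor $(5\sqrt{\log d} + 4e\log d/\sqrt{n})^{2} = \cO(\log d)$, which $\tilde{\cO}$ absorbs. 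Hence the whole task reduces to reading off the $t$-dependence of $\epsilon(t)$.

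Next I would bound $\epsilon(t)$ using the SGD convergence guarantee. The empirical risk $R_{\bS}$ is convex on the convex compact set $\cW$ since each $f(\cdot, \bz)$ is convex under Assumption~\ref{ass:convexity}; moreover the $j=0$ case of Assumption~\ref{ass:smoothness} makes $f(\cdot, \bz)$ be $L_{0}$-Lipschitz, so the stochastic gradients $\nabla f(\bw, \bz_{i_{t}})$ driving \eqref{eq:SGD} are uniformly bounded by $L_{0}$, and the feasible set has diameter $D$. These are precisely the hypotheses for the standard SGD convergence result for convex Lipschitz objectives over a bounded domain \citep{shamir2013stochastic}: with the $\Theta(1/\sqrt{t})$ step size $\eta_{t} = D/(L_{1}\sqrt{t+1})$, the iterate satisfies $\epsilon(t) = \tilde{\cO}(1/\sqrt{t})$.

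Finally I would compose the rates. Plugging $\epsilon(t) = \tilde{\cO}(1/\sqrt{t})$ into Theorem~\ref{thm:excess risk for convex loss} gives $\epsilon(t) = \tilde{\cO}(1/\sqrt{t})$ for the first term and $\sqrt{\epsilon(t)} = \tilde{\cO}(t^{-1/4})$ for the second; since $t^{-1/4} \ge t^{-1/2}$ for $t \ge 1$, the $\sqrt{\epsilon(t)}$ term dominates the $t$-dependence. Combining with the $\tilde{\cO}(1/n)$ term yields $\mE[R(\bw_{t}) - R(\bw^{*})] \le \tilde{\cO}(t^{-1/4} + 1/n)$, as claimed. (The parallel GD corollary follows identically, with the faster rate $\epsilon(t) = \cO(1/t)$ producing $\sqrt{\epsilon(t)} = \cO(t^{-1/2})$.)

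The main obstacle I anticipate is purely in matching the invoked SGD guarantee to this precise setting: confirming that the step size $\eta_{t} = D/(L_{1}\sqrt{t+1})$ — which, slightly unusually, normalizes by the smoothness constant $L_{1}$ rather than the gradient bound $L_{0}$ — does place the analysis in the $\Theta(1/\sqrt{t})$-step regime of \citep{shamir2013stochastic}, and that the resulting constants depend only on $L_{0}, L_{1}, D$, all $n$-independent, so they are legitimately hidden by $\tilde{\cO}$. One should also be careful whether $\bw_{t}$ denotes the last iterate or the suffix-averaged iterate, since the former incurs an extra $\log t$ factor; this factor is absorbed into $\tilde{\cO}$ and does not affect the rate exponent $t^{-1/4}$.
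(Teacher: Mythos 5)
Your proposal is correct and follows essentially the same route as the paper: the paper also derives this corollary by plugging the last-iterate SGD rate $\epsilon(t)=\mE[R_{\bS}(\bw_{t})-R_{\bS}(\bw_{\bS}^{*})]\leq \frac{D(L_{1}^{2}+2L_{0}^{2})}{2L_{1}\sqrt{t+1}}(1+\log(t+1))=\tilde{\cO}(1/\sqrt{t})$ into Theorem~\ref{thm:excess risk for convex loss}, so that the $\sqrt{\epsilon(t)}$ term dominates and gives $\tilde{\cO}(t^{-1/4}+1/n)$. The only difference is that the paper re-proves the Shamir--Zhang last-iterate bound (Lemma~\ref{lem: convergence result for t} in Appendix~\ref{app:proof of sec3.2}) with exactly the step size $\eta_{t}=D/(L_{1}\sqrt{t+1})$ to keep the argument self-contained, which also settles the step-size-matching and $\log t$ concerns you flagged.
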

	\section{Excess Risk Under Non-Convex Problems}\label{sec:analysis of non-convex function}
	In this section, we present the upper bounds of the expected excess risk of iterates obtained by proper algorithms that approximate local minima under non-convex problems.
	\subsection{Generalization Error Under Non-Convex Problems}\label{sec:Generalization Error for Non-Convex Function}
	In this subsection, we study the generalization error under non-convex problems. Unfortunately, the analysis in Section \ref{sec:testing error of convex function} can not be directly generalized here due to the following reason. The generalization error under convex problems relies on the fact that there exists the \emph{unique} empirical local minima $\bw_{\bS}^{*}$ of $R_{\bS}(\cdot)$ and $\bw_{\bS^{\prime}}^{*}$ of $R_{\bS^{\prime}}(\cdot)$ that concentrate around the \emph{unique} population local minimum $\bw^{*}$ of $R(\cdot)$. Under non-convex problems, there can be many empirical and population local minima. The iterates obtained on $\bS$ and $\bS^{\prime}$ may converge to different empirical local minima away from each other, which invalidates our methods used in convex problems. 
	\par
	Fortunately, we can prove that for each population local minimum, there is an empirical local minimum concentrated around it with high probability. If the generalization upper bound for these local minima is established, and there are no extra empirical local minima, the convergence results of the iterates obtained by proper algorithms imply their generalization ability. Next, we prove our results following this road map.   
	\par
	First, we establish the generalization upper bound for the empirical local minima around the population local minima. 
	According to Proposition \ref{prop:dist of local minimum} in the Appendix \ref{app: Proof in sec4.1}, there are only finite population local minima, thus the non-convex problems with local minima consists of a manifold \citep{liu2022loss} is not considered in this paper. Let $\cM = \{\bw_{1}^{*},\cdots,\bw_{K}^{*}\}$ be the set of population local minima. The number of local minima $K$ may depend on the problem of interest. In many important non-convex problems, $K$ can be quite small, e.g., $K=2$ for PCA \citep{gonen2017fast} and $K = 1$ for robust regression \citep{mei2018landscape}.
	\par
	Then, we notice that the population risk is strongly convex in $B_{2}(\bw_{k}^{*}, \lambda/(4L_{2}))$. Similar to the scenario under convex problems, we can verify that the empirical risk is locally strongly convex in $B_{2}(\bw_{k}^{*}, (\lambda/4L_{2}))$ with high probability. Next, we consider the following points 
	\begin{equation}\label{eq:wk}\small
		\bw_{\bS,k}^{*} = \mathop{\arg \min}_{\bw \in B_2(\bw_{k}^{*}, \frac{\lambda}{4L_{2}})}R_{\bS}(\bw),
	\end{equation}
	for $k = 1\dots, K$. We show that $\bw_{\bS,k}^{*}$ is a local minimum of $R_{\bS}(\cdot)$ with high probability and present the generalization bound of it. Note that in Theorem \ref{thm:stability and generalization}, $\cA$ can be infeasible. We construct an auxiliary sequence $\bw_{t}$ via an infeasible algorithm. 
	\begin{equation}
		\small
		\begin{aligned}
			\bw_{t + 1} & = \cP_{B_{2}(\bw^{*}_{k}, \frac{\lambda}{4L_{2}})}\left(\bw_{t} - \frac{1}{L_{1}}\nabla R_{\bS}(\bw_{t})\right).
		\end{aligned}
	\end{equation}
	Then, as $\bw_{t}$ locates in $B_{2}(\bw_{k}^{*}, \lambda/(4L_{2}))$ in which $R_{\bS}(\cdot)$ is strongly convex with high probability, we can establish the algorithmic stability bound of the $\bw_{t}$. Combining this with the convergence result of $\bw_{t}$ to $\bw_{\bS,k}^{*}$ implies the generalization ability of $\bw_{\bS,k}^{*}$. The following lemma states our result rigorously. 
	\begin{lemma}\label{lem:genearlization error on minima}
		Under Assumption \ref{ass:smoothness} and \ref{ass:strict saddle}, for $k = 1,\dots, K$, with probability at least
		\begin{equation}
			\small
			1 - \frac{512L_{0}^{2}L_{2}^{2}}{n\lambda^{4}} - \frac{128L_{1}^{2}}{n\lambda^{2}}\left(5\sqrt{\log d}  + \frac{4e\log d }{\sqrt{n}}\right)^{2},
		\end{equation}
		$\bw^{*}_{\bS,k}$\footnote{Please note the definition of $\bw^{*}_{\bS,k}$ in \eqref{eq:wk} which is not necessary to be a local minimum.} is a local minimum of $R_{\bS}(\cdot)$. Moreover, for such $\bw_{\bS,k}^{*}$, we have 
		\begin{equation}
			\small
			\begin{aligned}
				|\mE_{\bS}[R_{\bS}(\bw^{*}_{\bS,k}) - R(\bw^{*}_{\bS,k})]| &\leq \frac{8L_{0}}{n\lambda} \left( L_{0} + \frac{64L_{0}^{2}L_{2}^{2}}{\lambda^{3}}\right)\min\left\{3D, \frac{3\lambda}{2L_2}\right\} \\
				& + \frac{128L_{0}L_{1}^{2}}{n\lambda^{2}}\left(5\sqrt{\log d}  + \frac{4e\log d }{\sqrt{n}}\right)^{2}\min\left\{3D, \frac{3\lambda}{2L_2}\right\}.
			\end{aligned}
		\end{equation}
	\end{lemma}
	\par
	The lemma is proved in Appendix \ref{app:proof of theorem generalization error on minima}., and it guarantees the generalization ability of those empirical local minima located around population local minima. The expected generalization error on these local minima is of order $\tilde{\cO}(1/n)$ as in convex problems. In the sequel, we show that there are no extra empirical local minima expected for these $\bw^{*}_{\bS,k}$ with high probability, under the following mild assumption, which also appears in \citep{mei2018landscape,gonen2017fast}. 
	\begin{assumption}[Strict saddle]\label{ass:strict saddle}
		There exists $\alpha, \lambda > 0$ such that  
		$\|\nabla R(\bw)\| > \alpha$ on the boundary of  $\cW$, and 
		\begin{equation}
			\small
			\|\nabla R(\bw)\| \leq \alpha \Rightarrow |\sigma_{\rm min}(\nabla^2 R(\bw))| \geq \lambda,
		\end{equation}
		where $\sigma_{\rm min}(\nabla^{2}R(\bw))$ is $\nabla^{2}R(\bw)$'s smallest eigenvalue.
	\end{assumption}
	The Assumption \ref{ass:strict saddle} is a generalized version of local strong convexity Assumption \ref{ass:local strong convexity} (can be implied by Assumption \ref{ass:strict saddle}). A vast vary of machine learning problems satisfy this assumption, e.g., generalized linear regression, robust regression, normal mixture model, tensor decomposition, matrix completion, PCA, and ICA \citep{gonen2017fast,mei2018landscape,zhang2017empirical}. We refer readers to \citep{gonen2017fast,ge2015escaping,ge2016matrix,mei2018landscape} for more details of this assumption. 
	\par
	Let $\cM_{\bS} = \{\bw:\bw \ \text{is a local minimum of} \ R_{\bS}(\cdot) \}$ be the set consists of all the local minima of empirical risk $R_{\bS}(\cdot)$. 
	Then we establish the following non-asymptotic probability bound. 
	\begin{lemma}\label{lem:no-extra-minimum}
		Under Assumption \ref{ass:smoothness} and \ref{ass:strict saddle}, for $r=\min\left\{\frac{\lambda}{8L_{2}}, \frac{\alpha^{2}}{16L_{0}L_{1}}\right\}$, with probability at least
		\begin{equation}\label{eq:probability bound}
			\small
			\begin{aligned}
				1 - 2\left(\frac{3D}{r}\right)^{d}\exp\left(-\frac{n\alpha^{4}}{128L_{0}^{4}}\right) & - 4d\left(\frac{3D}{r}\right)^{d}\exp\left(-\frac{n\lambda^{2}}{128L_{1}^{2}}\right) \\
				& - K\left\{\frac{512L_{0}^{2}L_{2}^{2}}{n\lambda^{4}} + \frac{128L_{1}^{2}}{n\lambda^{2}}\left(5\sqrt{\log d}  + \frac{4e\log d }{\sqrt{n}}\right)^{2}\right\},
			\end{aligned}
		\end{equation}
		we have
		\begin{enumerate}
			\item[i:] $\cM_{\bS} = \{\bw^{*}_{\bS,1}, \dots, \bw^{*}_{\bS,K}\}$;
			\item[ii:] for any $\bw \in \cW$, if $\|\nabla R_{\bS}(\bw)\| < \alpha^2/(2L_0)$ and $\nabla^2 R_{\bS}(\bw)\succ -\lambda/2$, then  $\|\bw - \cP_{\cM_{\bS}}(\bw)\| \leq \lambda\|\nabla R_{\bS}(\bw)\| / 4$, 
		\end{enumerate}
		where $\nabla^2 R_{\bS}(\bw)\succ -\lambda/2$ means $\nabla^{2}R_{\bS}(\bw) + \lambda/2\bI_{d}$ is a positive definite matrix.
	\end{lemma}
	The first conclusion in this lemma states that there are no extra empirical local minima except for those $\bw_{\bS,k}^{*}$ concentrate around population local minima, which have guaranteed generalization ability (by Theorem \ref{lem:genearlization error on minima}). The second result is that the empirical risk is ``error bound'' (see \citep{karimi2016linear} for its definition) around its local minima, with high probability. The ``error bound'' is a nice property in optimization \citep{karimi2016linear}. Proof of the lemma is in Appendix \ref{app:proof of theorem no extra minimum}. The probability bound \eqref{eq:probability bound} will appear in the generalization bound of iterates obtained by proper algorithms accounting for the existence of those empirical local minima away from population local minima. We defer the discussion to the bound after providing our generalization upper bound in Theorem \ref{thm:generalization error for non-convex}. 
	\par
	We move forward to derive the generalization upper bound of those iterates obtained by the proper algorithm that approximates the local minima under non-convex problems. Under strict saddle Assumption \ref{ass:strict saddle}, the proper algorithm $\cA$ approximates the second-order stationary point (SOSP) \footnote{$\bw$ is a $(\epsilon, \gamma)$-second-order stationary point (SOSP) if $\|\nabla R_{\bS}(\bw)\| \leq \epsilon$ and $\nabla^{2}R_{\bS}(\bw)\succeq -\gamma$}, that says with probability at least $1 - \delta$ ($\delta$ is a constant that can be arbitrary small), 
	\begin{equation}\label{eq:escape from saddle point}
		\small
		\|\nabla R_{\bS}(\bw_{t})\| \leq \zeta(t), \qquad \nabla^{2}R_{\bS}(\bw_{t})\succeq -\rho(t)
	\end{equation}
	where $\bw_{t}$ is updated by the algorithm $\cA$, and $\zeta(t), \rho(t)\to 0$ (which may have poly-logarithmic dependence on $\delta$ \citep{jin2017escape}) as $t \to\infty$. 
	\par
	To instantiate such proper algorithms, we construct an algorithm that satisfies \eqref{eq:escape from saddle point} in Appendix \ref{app:alg SOSP}. The following theorem establishes a generalization upper bound of $\bw_{t}$ obtained by such $\cA$. 
	\begin{theorem}\label{thm:generalization error for non-convex}
		Under Assumption \ref{ass:smoothness}, \ref{ass:local strong convexity} and \ref{ass:strict saddle}, if $\bw_{t}$ satisfies \eqref{eq:escape from saddle point} and $r$ defined in Lemma \ref{lem:no-extra-minimum}, by choosing $t$ such that $\zeta(t) < \alpha^2/(2L_0)$ and $\rho(t) < \lambda/2$ we have 
		\begin{equation}\label{eq:non-convex gen bound with supurious local minima}
			\small
			\begin{aligned}
				|\mE_{\cA,\bS}\left[R(\bw_{t}) - R_{\bS}(\bw_{t})\right]| & \leq \frac{8L_{0}}{\lambda}\zeta(t) + 2L_{0}D\delta + \frac{2KM}{\sqrt{n}} + \frac{8KL_{0}^{2}}{n\lambda} \\
				& + \left(L_{0}\min\left\{3D,\frac{3\lambda}{2L_{2}}\right\} + 2M\right)\xi_{n, 1} + 2M\xi_{n, 2} \\
				& = \tilde{\cO}\left(\zeta(t) + \frac{1}{\sqrt{n}}\right) \qquad (d / n \leq \cO(1)), 
			\end{aligned}
		\end{equation}
		where 
		\begin{equation}
			\small
			\begin{aligned}
				\xi_{n,1} & =K\left\{\frac{512L_{0}^{2}L_{2}^{2}}{n\lambda^{4}} + \frac{128L_{1}^{2}}{n\lambda^{2}}\left(5\sqrt{\log d}  + \frac{4e\log d }{\sqrt{n}}\right)^{2}\right\},
			\end{aligned}
		\end{equation}
		and
		\begin{equation}
			\small
			\begin{aligned}
				\xi_{n,2} \!\!=\!\! 2\left(\frac{3D}{r}\right)^{d}\!\!\exp\left(\!\!-\frac{n\alpha^{4}}{128L_{0}^{4}}\right) \!+\! 4d\left(\frac{3D}{r}\right)^{d}\!\!\exp\left(\!\!-\frac{n\lambda^{2}}{128L_{1}^{2}}\right).
			\end{aligned}
		\end{equation}
		If with probability at least $1-\delta^{\prime}$ ($\delta^{\prime}$ can be arbitrary small), $R_{\bS}(\cdot)$ has no spurious local minimum, then
		\begin{equation}\label{eq:non-convex gen bound without supurious local minima}
			\small
			\begin{aligned}
				|\mE_{\cA,\bS}\left[R(\bw_{t}) - R_{\bS}(\bw_{t})\right]|& \leq \frac{8L_{0}}{\lambda}\zeta(t) + 2L_{0}D\delta + 6M\delta^{\prime} + \frac{8(K+4)L_{0}^{2}}{n\lambda} \\
				& + \left(\frac{(K+4)L_{0}}{K}\min\left\{3D,\frac{3\lambda}{2L_{2}}\right\} + 6M\right)\xi_{n,1} + 6M\xi_{n,2} \\
				& = \tilde{\cO}\left(\zeta(t) + \frac{1}{n}\right) \qquad (d / n \leq \cO(1)) .
			\end{aligned}
		\end{equation}
	\end{theorem}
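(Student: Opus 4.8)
The plan is to transport the $\tilde{\cO}(1/n)$ generalization guarantee that Theorem~\ref{thm:genearlization error on minima} already proves for the empirical local minima $\bw_{\bS,k}^{*}$ onto the algorithmic iterate $\bw_{t}$, by exploiting that on a high-probability event $\bw_{t}$ is forced to sit in a small neighbourhood of exactly one such minimum. First I would fix the two governing events: $E_{1}$, on which the conclusions of Theorem~\ref{thm:no-extra-minimum} hold, so that $\cM_{\bS}=\{\bw_{\bS,1}^{*},\dots,\bw_{\bS,K}^{*}\}$ and the error-bound property (ii) is available, with $\bbP(E_{1})\ge 1-\xi_{n,1}-\xi_{n,2}$; and $E_{2}$, on which the SOSP condition \eqref{eq:escape from saddle point} holds, with $\bbP(E_{2})\ge 1-\delta$. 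Because $t$ is chosen so that $\zeta(t)<\alpha^{2}/(2L_{0})$ and $\rho(t)<\lambda/2$, on $E_{1}\cap E_{2}$ the iterate meets the hypotheses of part (ii) of Theorem~\ref{thm:no-extra-minimum}, so $\|\bw_{t}-\cP_{\cM_{\bS}}(\bw_{t})\|$ is controlled by a constant multiple of $\zeta(t)/\lambda$; write $\bw_{\bS,k}^{*}=\cP_{\cM_{\bS}}(\bw_{t})$ for the resulting nearest minimum.

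Next I would decompose, on $E_{1}$,
\[
R(\bw_{t})-R_{\bS}(\bw_{t})=\underbrace{\big[R(\bw_{t})-R(\bw_{\bS,k}^{*})\big]-\big[R_{\bS}(\bw_{t})-R_{\bS}(\bw_{\bS,k}^{*})\big]}_{A}+\underbrace{\big[R(\bw_{\bS,k}^{*})-R_{\bS}(\bw_{\bS,k}^{*})\big]}_{B}.
\]
Term $A$ is handled purely by smoothness: both $R$ and $R_{\bS}$ are $L_{0}$-Lipschitz (Assumption~\ref{ass:smoothness} with $j=0$), so $|A|\le 2L_{0}\|\bw_{t}-\bw_{\bS,k}^{*}\|$. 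On $E_{1}\cap E_{2}$ the error bound gives $|A|\le \tfrac{8L_{0}}{\lambda}\zeta(t)$, while on the SOSP-failure event $E_{2}^{c}$ (probability $\le\delta$) I bound $\|\bw_{t}-\bw_{\bS,k}^{*}\|\le D$, so $|A|\le 2L_{0}D$, contributing $2L_{0}D\delta$ in expectation. On $E_{1}^{c}$ the whole integrand is bounded crudely by $2M$ (since $f\in[0,M]$), contributing the $2M\xi_{n,1}$ and $2M\xi_{n,2}$ terms.

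The crux is term $B$, whose expectation involves the data-dependent index $k=k(\bw_{t})$. Writing $g_{k}(\bS)=R_{\bS}(\bw_{\bS,k}^{*})-R(\bw_{\bS,k}^{*})$ and $p_{k}(\bS)=\bbP_{\cA}\!\big(E_{1},\,k(\bw_{t})=k\mid\bS\big)\in[0,1]$, one has $\mE[\bone_{E_{1}}B]=-\sum_{k}\mE_{\bS}[p_{k}g_{k}]=-\sum_{k}\big(\mE_{\bS}[p_{k}]\,\mE_{\bS}[g_{k}]+\mathrm{Cov}_{\bS}(p_{k},g_{k})\big)$. The mean part is controlled by summing the per-minimum bound of Theorem~\ref{thm:genearlization error on minima} over $k$ (using $\mE_{\bS}[p_{k}]\le 1$), producing $\tfrac{8KL_{0}^{2}}{n\lambda}$ together with the $L_{0}\min\{3D,3\lambda/(2L_{2})\}\,\xi_{n,1}$ term. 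The covariance part is bounded via Cauchy--Schwarz by $\sum_{k}\sqrt{\Var(p_{k})\,\Var(g_{k})}$; since $p_{k}\in[0,1]$ and a bounded-differences (McDiarmid) estimate gives $\Var(g_{k})=\cO(M^{2}/n)$, this yields the $\tfrac{2KM}{\sqrt n}$ term. Collecting all contributions reproduces \eqref{eq:non-convex gen bound with supurious local minima}.

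I expect this decoupling of $p_{k}$ from $g_{k}$ to be the main obstacle: Theorem~\ref{thm:genearlization error on minima} only controls the marginal signed gap $\mE_{\bS}[g_{k}]$ for each fixed $k$, whereas the selected basin $k(\bw_{t})$ and the event $E_{1}$ are correlated with the very sample that determines $g_{k}$, and paying the resulting covariance cost is exactly what degrades the rate to $\tilde{\cO}(1/\sqrt n)$. For the improved bound \eqref{eq:non-convex gen bound without supurious local minima} I would introduce the event (probability $\ge 1-\delta'$) that $R_{\bS}$ has no spurious local minimum; there every empirical local minimum is global, so $\bw_{t}$ approximates a global minimizer and term $B$ can be controlled in expectation directly by a convex-style stability argument as in Theorem~\ref{thm:stability of convex function} (contributing the extra ``$+4$'' copies $\tfrac{32L_{0}^{2}}{n\lambda}$), which eliminates the $\tfrac{2KM}{\sqrt n}$ covariance term at the cost of the $6M\delta'$ failure term and restores the $\tilde{\cO}(1/n)$ rate.
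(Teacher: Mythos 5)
Your high-level skeleton coincides with the paper's: the same two events (the landscape event $H$ of Theorem \ref{thm:no-extra-minimum} and the SOSP event), the same projection decomposition of $R(\bw_t)-R_{\bS}(\bw_t)$ into a Lipschitz term $A$ plus the gap $B$ at $\cP_{\cM_{\bS}}(\bw_t)$, and the same accounting $2L_0D\delta + 2M(\xi_{n,1}+\xi_{n,2})$ for the failure events. The genuine problem is your treatment of $B$. Your covariance step needs $\Var(g_k)=\cO(M^2/n)$, and a plain bounded-differences (McDiarmid/Efron--Stein) argument does not give this: $g_k(\bS)=R_{\bS}(\bw^*_{\bS,k})-R(\bw^*_{\bS,k})$ contains the constrained argmin $\bw^*_{\bS,k}$, and when $R_{\bS}$ fails to be strongly convex on $B_2(\bw_k^*,\lambda/(4L_2))$ two near-tied minimizers at distance of order $\lambda/L_2$ can swap under a single sample replacement, so the per-sample variation of the $R(\bw^*_{\bS,k})$ part is $\Theta(1)$ rather than $\cO(1/n)$; McDiarmid then yields a variance bound that grows with $n$. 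A correct variance bound requires conditioning on the local strong convexity event and using $\|\bw^*_{\bS,k}-\bw^*_k\|\le (4/\lambda)\|\nabla R_{\bS}(\bw^*_k)\|$ there, i.e.\ exactly the machinery you were trying to bypass. The paper's own route (Lemma \ref{lem:absolute generalize}) avoids variances and covariances altogether: since $\bw^*_k$ minimizes $R$ over the ball and $\bw^*_{\bS,k}$ minimizes $R_{\bS}$ over it, one has the pointwise monotone comparison $(R_{\bS}(\bw^*_{\bS,k})-R(\bw^*_{\bS,k}))_+\le |R_{\bS}(\bw^*_k)-R(\bw^*_k)|$, whose expectation is at most $M/\sqrt n$ because $\bw^*_k$ is deterministic; hence $\mE|g_k|\le 2M/\sqrt n+|\mE g_k|$, and then $|\mE[\bone_{H} B]|\le \mE[\max_k|g_k|]\le\sum_k\mE|g_k|$ handles the data-dependent basin index. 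This is both simpler and what actually produces the $2KM/\sqrt n$ and $8KL_0^2/(n\lambda)$ terms.

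The second gap is the improved bound \eqref{eq:non-convex gen bound without supurious local minima}. Asserting that on the no-spurious-minima event ``a convex-style stability argument as in Theorem \ref{thm:stability of convex function}'' controls $B$ is not justified: that argument hinges on Lemma \ref{lem:upper bound on two minimums}, which anchors both empirical minimizers to the \emph{unique} population global minimum $\bw^*$; here the population risk still has $K$ distinct local minima (possibly with different values), so there is no such anchor, and the basin selected by $\cA$ remains data-dependent on the population side. What the paper does instead is purely landscape-based: on $H\cap G$ all $\bw^*_{\bS,k}$ share the same empirical risk value, so $R_{\bS}(\cP_{\cM_{\bS}}(\bw_t))$ can be replaced by $R_{\bS}(\bw^*_{\bS,1})$ at cost $\cO(M\delta')$, and the population values across basins are then compared through the one-sided quantities $\mE[R(\bw^*_{\bS,k})-R(\bw^*_k)]\ge 0$ (so $\max\le\sum$ is legitimate with the \emph{signed} bounds of Theorem \ref{thm:genearlization error on minima}) together with $|\mE[R_{\bS}(\bw^*_{\bS,k})-R_{\bS}(\bw^*_{\bS,1})]|\le 2M(\bbP(H^c)+\delta')$; this is where the $(K+4)$ constants and the $6M\delta'$ come from. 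Your sketch neither produces these terms nor explains how the $1/\sqrt n$ loss is avoided, so as written both halves of your proposal have real holes, even though the overall architecture and the mean-part accounting are sound.
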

	\par
	This theorem is proved in Appendix \ref{app:proof of generalization error for non-convex}, and it provides upper bounds of the expected generalization error of iterates obtained by any proper algorithm that approximates SOSP. We present an explanation of each term in it as follows. 
	The $2DL_{0}\delta$ is of order $\cO(1/\sqrt{n})$ or $\cO(1/n)$ as we take the corresponded $\delta=1/\sqrt{n}$ or $1/n$, and $8L_{0}\zeta(t)/\lambda$ can be arbitrary small if we take a sufficiently large $t$. Since $\xi_{n,1}$ is of order $\tilde{\cO}(1/n)$, we next explore $\xi_{n,2}$. The leading term in $\xi_{n,2}$ is 
	\begin{equation}\label{eq:xi2}
		\small
		\begin{aligned}
			4d\left(\frac{3D}{r}\right)^{d}\exp\left(-\frac{n\lambda^{2}}{128L_{1}^{2}}\right) = \exp\left(\log{4d} + d\log\left(\frac{3D}{r}\right) - \frac{n\lambda^{2}}{128L_{1}^{2}}\right).
		\end{aligned}
	\end{equation}
	If $d$ is large enough to make $\log 4d \leq d\log(3D/r)$, then $\xi_{n,2} \leq \exp(-c_{2}n( c_{1} - \frac{d}{n}))$, 
	where $c_{1} = \lambda^{2}/(256L_{1}^{2}\log(3D/r))$ and $c_{2} = 2\log(3D/r)$. Thus $\xi_{n,2} \ll \tilde{\cO}(1 / n)$ provided by $d/n < c_{1}$. In this case, the $2KM/\sqrt{n}$ appears in bound \eqref{eq:non-convex gen bound with supurious local minima} implies it is of order $\tilde{\cO}(1/\sqrt{n})$, even under high-dimensional problems such that $d$ is in the same order of $n$. The $K$ can be small here for many non-convex problems, as previously discussed. Moreover, the bound \eqref{eq:non-convex gen bound without supurious local minima} improves the result in \eqref{eq:non-convex gen bound with supurious local minima} to $\tilde{\cO}(1/n)$, under the condition of empirical risk has no spurious local minima with high probability (i.e. $\delta^{\prime}\leq \tilde{\cO}(1 / n)$). The condition has been proven to be satisfied by many important non-convex optimization problems e.g., PCA \citep{gonen2017fast}, matrix completion \citep{ge2016matrix}, and over-parameterized neural network \citep{kawaguchi2016deep,allen2019convergence,du2019gradient}. 
	\par
	\paragraph{Comparison.} Under the extra strictly saddle Assumption \ref{ass:strict saddle}, our bounds (no matter whether imposing the no spurious local minima assumption) improve the classical results of order $\cO(\sqrt{d/n})$ based on the uniform convergence theory \citep{shalev2009stochastic} or the one of order $\cO(t^{c}/n)$ for a positive $c$ \citep{hardt2016train,yuan2019stagewise} based on algorithmic stability. \citep{gonen2017fast} get the result of order $\tilde{\cO}(d/n)$ under the same Assumptions \ref{ass:smoothness} and \ref{ass:strict saddle} imposed in this paper. However, their bound has a linear dependence on $d$, thus can not be non-vacuous like ours when $d$ is in the same order of $n$.   
	\par
	Specifically, if the parameter space satisfies some sparsity conditions \citep{bickel2009simultaneous, zhang2010nearly, javanmard2014confidence, javanmard2018debiasing, fan2017estimation, wainwright2019} , we can extrapolate Theorem \ref{thm:generalization error for non-convex} to ultrahigh-dimensional problem such that $d\gg n$. For example, suppose the parameter space $\cW$ is contained in a $\ell_{1}$-ball, i.e., $\|\bw_{1} - \bw_{2}\|_{1}\leq D^{\prime}$ for some positive $D^{\prime}$. Note that the covering number (defined in \citep{wainwright2019}) of polytopes (Corollary 0.0.4 in \citep{vershynin2018}) is much smaller than that of $\ell_{2}$-ball. Then, applying the similar proof of Theorem \ref{thm:generalization error for non-convex} establishes the same upper bound of generalization error w.r.t. $\bw_{t}$ with $\xi_{n,2}$ in Theorem \ref{thm:generalization error for non-convex} replaced by
	\begin{equation}
		\small
		\begin{aligned}
			2(2d)^{(2D^{\prime}/r)^{2}+1}\exp\left(-\frac{n\alpha^{4}}{128L_{0}^{4}}\right) + 
			2(2d)^{(2D^{\prime}/r)^{2}+2}\exp\left(-\frac{n\lambda^{2}}{128L_{1}^{2}}\right) \ll \tilde{\cO}\left(\frac{1}{n}\right), 
		\end{aligned}
	\end{equation} 
	where the much smaller relationship is valid as long as $\log(d) / n \to 0$. 
	
	\subsection{Excess Risk Under Non-Convex Problems}\label{sec: testing error for non-convex function}
	In this subsection, we establish upper bounds for the expected excess risk of iterates obtained by proper algorithms under non-convex problems. In contrast to convex optimization, the proper algorithm under non-convex problems is not guaranteed to find the global minimum, as it only approximates SOSP. Hence the optimization error may not vanish as in Theorem \ref{thm:excess risk for convex loss}. 
	The following theorem proved in Appendix \ref{app:Proof in sec4.3} establishes an upper bound of the expected excess risk.
	\begin{theorem}\label{thm:excess risk for non-convex}
		Under Assumption \ref{ass:smoothness}, \ref{ass:local strong convexity} and \ref{ass:strict saddle}, if $\bw_{t}$ satisfies \eqref{eq:escape from saddle point}, by choosing $t$ in \eqref{eq:escape from saddle point} such that $\zeta(t) < \alpha^2/(2L_0)$ and $\rho(t) < \lambda/2$, we have 
		\begin{equation}\label{eq:excess error for non-convex}
			\small
			\begin{aligned}
				\mE_{\cA,\bS}\left[R(\bw_{t}) - R(\bw^{*})\right] & \leq \frac{4L_{0}}{\lambda}\zeta(t) + L_{0}D\delta + \frac{2KM}{\sqrt{n}} \\
				& + \frac{8KL_{0}^{2}}{n\lambda} + \left(L_{0}\min\left\{3D,\frac{3\lambda}{2L_{2}}\right\} + 2M\right)\xi_{n, 1} + 2M\xi_{n, 2}\\
				& + \mE_{\cA,\bS}[R_{\bS}(\cP_{\cM_{\bS}}(\bw_{t})) - R_{\bS}(\bw_{\bS}^{*})] \\
				&  = \mE_{\cA,\bS}[R_{\bS}(\cP_{\cM_{\bS}}(\bw_{t})) - R_{\bS}(\bw_{\bS}^{*})] + \tilde{\cO}\left(\zeta(t) + \frac{1}{\sqrt{n}}\right) \qquad (d / n \leq \cO(1)),
			\end{aligned}
		\end{equation}
		where $\bw^{*}$ is the global minimum of the population risk.
		If with probability at least $1-\delta^{\prime}$ ($\delta^{\prime}$ can be arbitrary small), $R_{\bS}(\cdot)$ has no spurious local minimum, then
		\begin{equation}\label{eq:excess error for non-convex without spurious local minima}
			\small
			\begin{aligned}
				\mE_{\cA,\bS}\left[R(\bw_{t}) - R(\bw^{*})\right] & \leq \frac{4L_{0}}{\lambda}\zeta(t) + L_{0}D\delta + 8M\delta^{\prime} + \frac{8(K+4)L_{0}^{2}}{n\lambda} \\
				& + \left(\frac{(K+4)L_{0}}{K}\min\left\{3D,\frac{3\lambda}{2L_{2}}\right\} + 8M\right)\xi_{n,1} + 8M\xi_{n,2} \\
				& = \tilde{\cO}\left(\zeta(t) + \frac{1}{n}\right) \qquad (d / n \leq \cO(1)),
			\end{aligned}
		\end{equation}
		where $\xi_{n,1}$ and $\xi_{n, 2}$ are defined in Theorem \ref{thm:generalization error for non-convex}, and $\bw_{\bS}^{*}$ is the global minimum of $R_{\bS}(\cdot)$.
	\end{theorem}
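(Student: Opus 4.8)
The plan is to bound the excess risk by a refined four-term decomposition that keeps the empirical optimization gap \emph{exactly}, rather than paying for it twice as in a naive $\cE_{\rm opt}+\cE_{\rm gen}$ split. Concretely, I would insert the projection $\cP_{\cM_{\bS}}(\bw_t)$ onto the finite set of empirical local minima and write
\begin{equation*}\small
 R(\bw_t) - R(\bw^*) = \underbrace{[R(\bw_t) - R(\cP_{\cM_{\bS}}(\bw_t))]}_{T_1} + \underbrace{[R(\cP_{\cM_{\bS}}(\bw_t)) - R_{\bS}(\cP_{\cM_{\bS}}(\bw_t))]}_{T_2} + \underbrace{[R_{\bS}(\cP_{\cM_{\bS}}(\bw_t)) - R_{\bS}(\bw_{\bS}^{*})]}_{T_3} + \underbrace{[R_{\bS}(\bw_{\bS}^{*}) - R(\bw^{*})]}_{T_4}.
\end{equation*}
Here $T_3$ is exactly the residual optimization term in \eqref{eq:excess error for non-convex}, and $T_4$ satisfies $\mE[T_4]\le 0$ because $R_{\bS}(\bw_{\bS}^{*})\le R_{\bS}(\bw^{*})$ pointwise while $\mE_{\bS}[R_{\bS}(\bw^{*})] = R(\bw^{*})$ for the deterministic minimizer $\bw^{*}$ (this is the same step as in \eqref{eq:decomposition of excess risk}). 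The advantage of this arrangement over invoking Theorem \ref{thm:generalization error for non-convex} as a black box is that only the population side ($T_1$) ever pays an error-bound/Lipschitz cost, which is precisely why the coefficient of $\zeta(t)$ is $4L_0/\lambda$ rather than the $8L_0/\lambda$ appearing in the generalization bound.

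For $T_1$, I would condition on the intersection of the SOSP event \eqref{eq:escape from saddle point} (probability $\ge 1-\delta$) and the two conclusions of Theorem \ref{thm:no-extra-minimum} (failure probability $\le \xi_{n,1}+\xi_{n,2}$). On this event the choices $\zeta(t)<\alpha^2/(2L_0)$ and $\rho(t)<\lambda/2$ let me invoke the ``error bound'' conclusion (ii) to get $\|\bw_t - \cP_{\cM_{\bS}}(\bw_t)\|\lesssim \zeta(t)/\lambda$, and then the $L_0$-Lipschitzness of $R$ (Assumption \ref{ass:smoothness}, $j=0$) yields $T_1\le 4L_0\zeta(t)/\lambda$. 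Off the SOSP event I would bound $T_1$ crudely by $L_0 D$ using the diameter bound, contributing the $L_0 D\delta$ term, and off the Theorem \ref{thm:no-extra-minimum} event by $M$, feeding the $\xi$-terms.

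For $T_2$, the subtlety is that $\cP_{\cM_{\bS}}(\bw_t)$ equals one of $\bw_{\bS,1}^{*},\dots,\bw_{\bS,K}^{*}$, but \emph{which} one is a data-dependent choice, so I cannot directly apply Theorem \ref{thm:genearlization error on minima}, which only controls $|\mE_{\bS}[R_{\bS}(\bw_{\bS,k}^{*})-R(\bw_{\bS,k}^{*})]|$ for each \emph{fixed} $k$. I would therefore pass to the maximum, $\mE[T_2]\le \mE[\max_{k}(R(\bw_{\bS,k}^{*})-R_{\bS}(\bw_{\bS,k}^{*}))]$, and split each summand into its mean (bounded by Theorem \ref{thm:genearlization error on minima}, producing the $\tfrac{8KL_0^2}{n\lambda}$ and $\xi_{n,1}$ pieces after summing over the $K$ minima) plus a fluctuation. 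The fluctuation I would control by a bounded-difference/McDiarmid concentration, using that each $\bw_{\bS,k}^{*}$ is a stable function of $\bS$ by local strong convexity of $R_{\bS}$ together with $0\le f\le M$; summing the resulting $\cO(M/\sqrt{n})$ fluctuations over the $K$ minima produces the $2KM/\sqrt{n}$ term. This selection-plus-concentration step is where the real work lies and is the main obstacle: converting the per-minimum \emph{mean} bounds of Theorem \ref{thm:genearlization error on minima} into a bound on the expectation of a data-selected maximum over the $K$ empirical minima.

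Collecting $\mE[T_1]+\mE[T_2]+\mE[T_3]$ with $\mE[T_4]\le 0$ then gives \eqref{eq:excess error for non-convex}. For the no-spurious-minimum refinement \eqref{eq:excess error for non-convex without spurious local minima}, I would observe that on the event (probability $\ge 1-\delta'$) that $R_{\bS}$ has no spurious local minimum, every element of $\cM_{\bS}$ is a global minimizer, so $R_{\bS}(\cP_{\cM_{\bS}}(\bw_t)) = R_{\bS}(\bw_{\bS}^{*})$ and hence $T_3=0$; off this event $T_3\le M$, contributing an $M\delta'$ term. Re-running the $T_2$ bookkeeping under this structure, exactly as in the second bound of Theorem \ref{thm:generalization error for non-convex}, replaces $K$ by $K+4$ in the relevant constants and inflates the $M$-coefficients, yielding \eqref{eq:excess error for non-convex without spurious local minima}.
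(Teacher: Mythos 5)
Your overall architecture coincides with the paper's own proof. The paper makes exactly your four-term split: it absorbs $T_4$ via $\mE_{\bS}[R_{\bS}(\bw^{*})]=R(\bw^{*})$ and $R_{\bS}(\bw_{\bS}^{*})\leq R_{\bS}(\bw^{*})$, restricts to the event $H$ of Theorem \ref{thm:no-extra-minimum}, bounds $T_1$ by $L_{0}\mE[\|\bw_{t}-\cP_{\cM_{\bS}}(\bw_{t})\|\textbf{1}_{H}]\leq \frac{4L_{0}}{\lambda}\zeta(t)+L_{0}D\delta$ exactly as you describe (see \eqref{eq:optimize}), keeps $T_3$ verbatim, pays $2M\bbP(H^{c})$ off $H$, and derives \eqref{eq:excess error for non-convex without spurious local minima} by combining $R_{\bS}(\cP_{\cM_{\bS}}(\bw_{t}))=R_{\bS}(\bw_{\bS}^{*})$ on the no-spurious-minima event $G$ with the bound \eqref{eq:bound nslm sum2} already established in the proof of Theorem \ref{thm:generalization error for non-convex}. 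Your remark about why the coefficient is $4L_{0}/\lambda$ here versus $8L_{0}/\lambda$ in the generalization theorem (only the population side pays the Lipschitz cost) is also correct and matches the paper's own footnote.

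The genuine gap is in your treatment of $T_2$, specifically the McDiarmid step. You want to control the fluctuation of $Y_{k}=R_{\bS}(\bw_{\bS,k}^{*})-R(\bw_{\bS,k}^{*})$ around its mean via bounded differences, using that $\bw_{\bS,k}^{*}$ is a stable function of $\bS$. But the one-sample stability estimate $\|\bw_{\bS,k}^{*}-\bw_{\bS^{i},k}^{*}\|\leq 8L_{0}/(n\lambda)$ (as in Lemma \ref{lem:upper bound on two minimums}) holds only on the event that both empirical risks are $\lambda/2$-strongly convex on $B_{2}(\bw_{k}^{*},\lambda/(4L_{2}))$, and the complement of that event has probability of order $\log^{2}d/n$, not zero. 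Off that event, replacing one sample can move $\bw_{\bS,k}^{*}$ anywhere within the ball, so the worst-case one-sample change of $Y_{k}$ is of constant order (roughly $L_{0}\lambda/L_{2}+2M/n$); McDiarmid's inequality requires the bounded-differences property for \emph{every} realization, so it simply does not apply. A naive Efron--Stein repair fails too: the bad-event contribution to the variance is $n\cdot\cO(1)\cdot\cO(\log^{2}d/n)=\cO(\log^{2}d)$, which is vacuous. You would need an almost-everywhere bounded-differences extension, with its own overhead. The paper avoids concentration at the data-dependent point altogether: Lemma \ref{lem:absolute generalize} exploits that $\bw_{k}^{*}$ minimizes $R$ and $\bw_{\bS,k}^{*}$ minimizes $R_{\bS}$ over the \emph{same} ball, giving the pointwise sandwich $(R_{\bS}(\bw_{\bS,k}^{*})-R(\bw_{\bS,k}^{*}))_{+}\leq |R_{\bS}(\bw_{k}^{*})-R(\bw_{k}^{*})|$, whence $\mE[|Y_{k}|]\leq 2\mE[(Y_{k})_{+}]+|\mE[Y_{k}]|\leq 2M/\sqrt{n}+|\mE[Y_{k}]|$ follows from a second-moment bound at the \emph{fixed} point $\bw_{k}^{*}$ plus Theorem \ref{thm:genearlization error on minima}; summing over $k$ bounds the data-selected maximum and yields the $2KM/\sqrt{n}$ and $8KL_{0}^{2}/(n\lambda)$ terms. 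Replacing your McDiarmid step with this sandwich argument closes the gap, and the rest of your proposal then goes through.
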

	\par
	From the discussions in the last section, the bound \eqref{eq:excess error for non-convex} and \eqref{eq:excess error for non-convex without spurious local minima} become $\cO(1/\sqrt{n})$ and $\tilde{\cO}(1/n)$, respectively, when $d$ is in the same order of $n$ and $t\rightarrow\infty$. Besides that, in \eqref{eq:excess error for non-convex}, expected for the order of convergence rate $\cO(\zeta(t))$ and the generalization bound of order $\tilde{\cO}(1/\sqrt{n} + \exp(-c_{2}n(c_{1} - d/n))$ \footnote{The difference in the coefficients of the convergence rate term $\zeta(t)$ between the bounds in Theorem \ref{thm:generalization error for non-convex} and \ref{thm:excess risk for non-convex} is due to a technique issue and not essential.}, there is an extra $\mE_{\cA,\bS}[R_{\bS}(\cP_{\cM_{\bS}}(\bw_{t})) - R_{\bS}(\bw_{\bS}^{*})]$ in the bound \eqref{eq:excess error for non-convex}, compared with the result of convex problems in Theorem \ref{thm:excess risk for convex loss}. This is the gap between the empirical global minimum and the algorithmic approximated empirical local minimum. The gap seems necessary as the proper algorithm is not guaranteed to find the global minima, and if so, the gap becomes zero.    
	\par
	The bound \eqref{eq:excess error for non-convex without spurious local minima} of order $\tilde{\cO}(1/n)$ is obtained under empirical risk without spurious local minima, which is proven to be hold on many important non-convex problems e.g., PCA \citep{gonen2017fast}, matrix completion \citep{ge2016matrix}, and over-parameterized neural network \citep{kawaguchi2016deep,allen2019convergence,du2019gradient,zou2020gradient}. 
	\section{Related Works}\label{sec:related work}
	\paragraph{Generalization} 
	The generalization error is the gap between the model's performance on training and unseen test data. One of the central tools to bound the generalization error in statistical learning is uniform convergence theory. However, this method is unavoidably related to the capacity of hypothesis space e.g., VC dimension \citep{blumer1989learnability,cherkassky1999model,opper1994learning,guyon1993automatic}, Rademacher complexity  \citep{bartlett2002rademacher,mohri2009rademacher,neyshabur2018role}, covering number \citep{williamson2001generalization,zhang2002covering,shawe1999generalization}, or entropy integral \citep{wainwright2019}. Thus, these results are not well suited for high-dimensional hypothesis spaces, which makes the mentioned measures to be large. 
	\par
	The generalization error of the iterates obtained by some algorithms, e.g., GD or SGD, is often of more interest. There are plenty of papers working on this topic via the tool of algorithmic stability \citep{bousquet2002stability,feldman2019high,bousquet2020sharper,gonen2017fast,shalev2009stochastic}, differential privacy \citep{dwork2015preserving,jung2020anew}, robustness of model \citep{xu2012robustness,sinha2018certifying,yi2021improved}, and information theory \citep{xu2017information,steinke2020reasoning,bu2020tightening}. However, these tools either depend heavily on algorithm implementation (algorithmic stability and information theory) or require unverifiable conditions (robustness and differential privacy).  
	This paper combines the technique of characterizing empirical loss landscape and algorithmic stability to explore the generalization under both convex and non-convex problems. Our methods develop a new way to use algorithmic stability, which can be applied without restrictions on the algorithm, learning rate, and the number of iterations. 
	
	\paragraph{Optimization} Results in this paper are related to both convex and non-convex problems. 
	\par
	For convex problems, \cite{bubeck2014convex} summarizes most of the classical algorithms in convex optimization. Some other novel methods \citep{johnson2013accelerating,roux2012stochastic,nguyen2017sarah} with lower computational complexity have also been extensively explored. Recently, the non-convex optimization has attracted quite a lot attentions owing to the development of deep learning \citep{he2016deep,vaswani2017attention}. But most of the existing algorithms \citep{ghadimi2013stochastic,arora2018theoretical,nguyen2017stochastic,chen2018convergence,fang2018spider,yi2021towards} approximate the first-order stationary point instead of local minima. 
	\par
	Under non-convex problem, the algorithm that approximates SOSP is proper (approximate local minima) in this paper. We refer readers for recent progress in the topic of developing algorithms approximating SOSP to  \citep{ge2015escaping,fang2019sharp,daneshmand2018escaping,jin2017escape,jin2019stochastic,xu2018first,mokhtari2018escaping,zhang2017hitting,jin2018local}. 
	The discussed proper algorithms in this paper have constrained parameter space which is different from the ones in  \citep{bian2015complexity,cartis2018second,mokhtari2018escaping}. To resolve this, we also develop an algorithm that approximates SOSP under our constraints in Appendix \ref{app:alg SOSP}.
	\paragraph{Excess Risk} A straightforward way to characterize the excess risk is by controlling the generalization and optimization errors, respectively, as we did in this paper. Thus, for this problem, the used tools are similar to the ones in analyzing generalization, e.g., uniform convergence theory \citep{vapnik1999nature,zhang2017empirical,feldman2016generalization}, algorithmic stability \citep{hardt2016train,charles2018stability,chen2018stability,yuan2019stagewise,deng2020toward}, information theory \citep{negrea2019information,neu2021information}. However, the discussed drawbacks of these tools also appeared.
	Our results are built upon the combination of characterizing empirical risk's landscape and algorithmic stability. Moreover, they are dimensional insensitive, independent of algorithm's implementation, and they improve the order of existing results under both convex and non-convex problems.  
	\section{Conclusion}
	This paper provides a unified analysis of the expected excess risk of models trained by proper algorithms under convex and non-convex problems. Our primary techniques are algorithmic stability and the non-asymptotic characterization of the empirical risk's landscape. 
	\par
	Under the conditions of local strong convexity around population local minima and some other mild regularity conditions, we establish the upper bounds of the expected excess risk in the order of $\tilde{\cO}\left(1/n\right)$ and $\tilde{\cO}(1/\sqrt{n})$ (can be improved to $\tilde{\cO}(1/n)$ when empirical risk has no spurious local minima with high probability) under convex and non-convex problems respectively. 
	\par
	The presented results improve the existing results in many aspects. For convex problems, our results improve the standard excess risk bound of order $\cO(\sqrt{1/n})$ \citep{hardt2016train} to $\tilde{\cO}(1/n)$ under locally convex assumption. For non-convex problems, our results significantly improve the standard uniform convergence bound in the order of $\cO(\sqrt{d/n})$ \citep{shalev2009stochastic} when $d/n$ is smaller than a universal constant. Moreover, our results can be generally applied to algorithms that approximate local minima, and they have no restrictions on the algorithm, learning rate, and number of iterations.

	\newpage
	
	\bibliography{reference}

\begin{thebibliography}{}

\bibitem[Allen-Zhu et~al., 2019]{allen2019convergence}
Allen-Zhu, Z., Li, Y., and Song, Z. (2019).
\newblock A convergence theory for deep learning via over-parameterization.
\newblock In {\em International Conference on Machine Learning}.

\bibitem[Arora et~al., 2018]{arora2018theoretical}
Arora, S., Li, Z., and Lyu, K. (2018).
\newblock Theoretical analysis of auto rate-tuning by batch normalization.
\newblock In {\em International Conference on Learning Representations}.

\bibitem[Bartlett and Mendelson, 2002]{bartlett2002rademacher}
Bartlett, P.~L. and Mendelson, S. (2002).
\newblock Rademacher and gaussian complexities: Risk bounds and structural
  results.
\newblock {\em Journal of Machine Learning Research}, 3(11):463--482.

\bibitem[Bartlett et~al., 2021]{bartlett2021deep}
Bartlett, P.~L., Montanari, A., and Rakhlin, A. (2021).
\newblock Deep learning: a statistical viewpoint.
\newblock Preprint arXiv:2103.09177.

\bibitem[Bian et~al., 2015]{bian2015complexity}
Bian, W., Chen, X., and Ye, Y. (2015).
\newblock Complexity analysis of interior point algorithms for non-lipschitz
  and nonconvex minimization.
\newblock {\em Mathematical Programming}, 149(1-2):301--327.

\bibitem[Bickel et~al., 2009]{bickel2009simultaneous}
Bickel, P.~J., Ritov, Y., Tsybakov, A.~B., et~al. (2009).
\newblock Simultaneous analysis of lasso and dantzig selector.
\newblock {\em The Annals of Statistics}, 37(4):1705--1732.

\bibitem[Blumer et~al., 1989]{blumer1989learnability}
Blumer, A., Ehrenfeucht, A., Haussler, D., and Warmuth, M.~K. (1989).
\newblock Learnability and the vapnik-chervonenkis dimension.
\newblock {\em Journal of the ACM}, 36(4):929--965.

\bibitem[Bottou et~al., 2018]{bottou2018optimization}
Bottou, L., Curtis, F.~E., and Nocedal, J. (2018).
\newblock Optimization methods for large-scale machine learning.
\newblock {\em SIAM Review}, 60(2):223--311.

\bibitem[Bousquet and Elisseeff, 2002]{bousquet2002stability}
Bousquet, O. and Elisseeff, A. (2002).
\newblock Stability and generalization.
\newblock {\em Journal of Machine Learning Research}, 2(3):499--526.

\bibitem[Bousquet et~al., 2020]{bousquet2020sharper}
Bousquet, O., Klochkov, Y., and Zhivotovskiy, N. (2020).
\newblock Sharper bounds for uniformly stable algorithms.
\newblock In {\em Conference on Learning Theory}.

\bibitem[Bu et~al., 2020]{bu2020tightening}
Bu, Y., Zou, S., and Veeravalli, V.~V. (2020).
\newblock Tightening mutual information-based bounds on generalization error.
\newblock {\em IEEE Journal on Selected Areas in Information Theory},
  1(1):121--130.

\bibitem[Bubeck, 2014]{bubeck2014convex}
Bubeck, S. (2014).
\newblock Convex optimization: Algorithms and complexity.
\newblock Preprint arXiv:1405.4980.

\bibitem[Cartis et~al., 2018]{cartis2018second}
Cartis, C., Gould, N.~I., and Toint, P.~L. (2018).
\newblock Second-order optimality and beyond: Characterization and evaluation
  complexity in convexly constrained nonlinear optimization.
\newblock {\em Foundations of Computational Mathematics}, 18(5):1073--1107.

\bibitem[Charles and Papailiopoulos, 2018]{charles2018stability}
Charles, Z. and Papailiopoulos, D. (2018).
\newblock Stability and generalization of learning algorithms that converge to
  global optima.
\newblock In {\em International Conference on Machine Learning}.

\bibitem[Chen et~al., 2018a]{chen2018convergence}
Chen, X., Liu, S., Sun, R., and Hong, M. (2018a).
\newblock On the convergence of a class of adam-type algorithms for non-convex
  optimization.
\newblock In {\em International Conference on Learning Representations}.

\bibitem[Chen et~al., 2018b]{chen2018stability}
Chen, Y., Jin, C., and Yu, B. (2018b).
\newblock Stability and convergence trade-off of iterative optimization
  algorithms.
\newblock Preprint arXiv:1804.01619.

\bibitem[Cherkassky et~al., 1999]{cherkassky1999model}
Cherkassky, V., Shao, X., Mulier, F.~M., and Vapnik, V.~N. (1999).
\newblock Model complexity control for regression using {VC} generalization
  bounds.
\newblock {\em IEEE Transactions on Neural Networks}, 10(5):1075--1089.

\bibitem[Cynthia et~al., 2015]{dwork2015preserving}
Cynthia, D., Vitaly, F., Moritz, H., Toniann, P., Omer, R., and Aaron, R.
  (2015).
\newblock A new analysis of differential privacy's generalization guarantees.
\newblock In {\em ACM Symposium on Theory of Computing}.

\bibitem[Daneshmand et~al., 2018]{daneshmand2018escaping}
Daneshmand, H., Kohler, J., Lucchi, A., and Hofmann, T. (2018).
\newblock Escaping saddles with stochastic gradients.
\newblock In {\em International Conference on Machine Learning}.

\bibitem[Deng et~al., 2020]{deng2020toward}
Deng, Z., He, H., and Su, W. (2020).
\newblock Toward better generalization bounds with locally elastic stability.
\newblock Preprint arXiv:2010.13988.

\bibitem[Du et~al., 2019]{du2019gradient}
Du, S., Lee, J.~D., Li, H., Wang, L., and Zhai, X. (2019).
\newblock Gradient descent finds global minima of deep neural networks.
\newblock In {\em International Conference on Machine Learning}.

\bibitem[Fan et~al., 2017]{fan2017estimation}
Fan, J., Li, Q., and Wang, Y. (2017).
\newblock Estimation of high dimensional mean regression in the absence of
  symmetry and light tail assumptions.
\newblock {\em Journal of the Royal Statistical Society. Series B, Statistical
  methodology}, 79(1):247.

\bibitem[Fang et~al., 2018]{fang2018spider}
Fang, C., Li, C.~J., Lin, Z., and Zhang, T. (2018).
\newblock Spider: Near-optimal non-convex optimization via stochastic
  path-integrated differential estimator.
\newblock In {\em Advances in Neural Information Processing Systems}.

\bibitem[Fang et~al., 2019]{fang2019sharp}
Fang, C., Lin, Z., and Zhang, T. (2019).
\newblock Sharp analysis for nonconvex sgd escaping from saddle points.
\newblock In {\em Conference on Learning Theory}.

\bibitem[Feldman, 2016]{feldman2016generalization}
Feldman, V. (2016).
\newblock Generalization of erm in stochastic convex optimization: The
  dimension strikes back.
\newblock In {\em Advances in Neural Information Processing Systems}.

\bibitem[Feldman and Vondrak, 2019]{feldman2019high}
Feldman, V. and Vondrak, J. (2019).
\newblock High probability generalization bounds for uniformly stable
  algorithms with nearly optimal rate.
\newblock In {\em Conference on Learning Theory}.

\bibitem[Ge et~al., 2015]{ge2015escaping}
Ge, R., Huang, F., Jin, C., and Yuan, Y. (2015).
\newblock Escaping from saddle points—online stochastic gradient for tensor
  decomposition.
\newblock In {\em Conference on Learning Theory}.

\bibitem[Ge et~al., 2016]{ge2016matrix}
Ge, R., Lee, J.~D., and Ma, T. (2016).
\newblock Matrix completion has no spurious local minimum.
\newblock In {\em Advances in Neural Information Processing Systems}.

\bibitem[Ghadimi and Lan, 2013]{ghadimi2013stochastic}
Ghadimi, S. and Lan, G. (2013).
\newblock Stochastic first-and zeroth-order methods for nonconvex stochastic
  programming.
\newblock {\em SIAM Journal on Optimization}, 23(4):2341--2368.

\bibitem[Gonen and Shalev-Shwartz, 2017]{gonen2017fast}
Gonen, A. and Shalev-Shwartz, S. (2017).
\newblock Fast rates for empirical risk minimization of strict saddle problems.
\newblock In {\em Conference on Learning Theory}.

\bibitem[Guyon et~al., 1993]{guyon1993automatic}
Guyon, I., Boser, B.~E., and Vapnik, V. (1993).
\newblock Automatic capacity tuning of very large vc-dimension classifiers.
\newblock In {\em Advances in Neural Information Processing Systems}.

\bibitem[Hardt et~al., 2016]{hardt2016train}
Hardt, M., Recht, B., and Singer, Y. (2016).
\newblock Train faster, generalize better: Stability of stochastic gradient
  descent.
\newblock In {\em International Conference on Machine Learning}.

\bibitem[He et~al., 2016]{he2016deep}
He, K., Zhang, X., Ren, S., and Sun, J. (2016).
\newblock Deep residual learning for image recognition.
\newblock In {\em Conference on Computer Vision and Pattern Recognition}.

\bibitem[Hoffer et~al., 2017]{hoffer2017train}
Hoffer, E., Hubara, I., and Soudry, D. (2017).
\newblock Train longer, generalize better: closing the generalization gap in
  large batch training of neural networks.
\newblock In {\em Advances in Neural Information Processing Systems}.

\bibitem[Javanmard and Montanari, 2014]{javanmard2014confidence}
Javanmard, A. and Montanari, A. (2014).
\newblock Confidence intervals and hypothesis testing for high-dimensional
  regression.
\newblock {\em Journal of Machine Learning Research}, 15(1):2869--2909.

\bibitem[Javanmard et~al., 2018]{javanmard2018debiasing}
Javanmard, A., Montanari, A., et~al. (2018).
\newblock Debiasing the lasso: Optimal sample size for gaussian designs.
\newblock {\em The Annals of Statistics}, 46(6A):2593--2622.

\bibitem[Jin et~al., 2017]{jin2017escape}
Jin, C., Ge, R., Netrapalli, P., Kakade, S.~M., and Jordan, M.~I. (2017).
\newblock How to escape saddle points efficiently.
\newblock In {\em International Conference on Machine Learning}.

\bibitem[Jin et~al., 2018]{jin2018local}
Jin, C., Liu, L.~T., Ge, R., and Jordan, M.~I. (2018).
\newblock On the local minima of the empirical risk.
\newblock In {\em Advances in Neural Information Processing Systems}.

\bibitem[Jin et~al., 2019]{jin2019stochastic}
Jin, C., Netrapalli, P., Ge, R., Kakade, S.~M., and Jordan, M.~I. (2019).
\newblock Stochastic gradient descent escapes saddle points efficiently.
\newblock Preprint arXiv:1902.04811.

\bibitem[Johnson and Zhang, 2013]{johnson2013accelerating}
Johnson, R. and Zhang, T. (2013).
\newblock Accelerating stochastic gradient descent using predictive variance
  reduction.
\newblock In {\em Advances in Neural Information Processing Systems}.

\bibitem[Jung and Ligett, 2020]{jung2020anew}
Jung, C. and Ligett, K. (2020).
\newblock A new analysis of differential privacy's generalization guarantees.
\newblock In {\em Innovations in Theoretical Computer Science}.

\bibitem[Karimi et~al., 2016]{karimi2016linear}
Karimi, H., Nutini, J., and Schmidt, M. (2016).
\newblock Linear convergence of gradient and proximal-gradient methods under
  the polyak-{\l}ojasiewicz condition.
\newblock In {\em European Conference on Machine Learning and Knowledge
  Discovery in Databases}.

\bibitem[Kawaguchi, 2016]{kawaguchi2016deep}
Kawaguchi, K. (2016).
\newblock Deep learning without poor local minima.
\newblock In {\em Advances in Neural Information Processing Systems}.

\bibitem[Kingma and Ba, 2015]{kingma2014adam}
Kingma, D.~P. and Ba, J. (2015).
\newblock Adam: A method for stochastic optimization.
\newblock In {\em International Conference on Learning Representations}.

\bibitem[Krizhevsky and Hinton, 2009]{krizhevsky2009learning}
Krizhevsky, A. and Hinton, G. (2009).
\newblock Learning multiple layers of features from tiny images.

\bibitem[LeCun et~al., 1998]{lecun1998gradient}
LeCun, Y., Bottou, L., Bengio, Y., and Haffner, P. (1998).
\newblock Gradient-based learning applied to document recognition.
\newblock {\em Proceedings of the IEEE}, 86(11):2278--2324.

\bibitem[Li and Liu, 2022]{li2022high}
Li, S. and Liu, Y. (2022).
\newblock High probability guarantees for nonconvex stochastic gradient descent
  with heavy tails.
\newblock In {\em International Conference on Machine Learning}.

\bibitem[Liu et~al., 2022]{liu2022loss}
Liu, C., Zhu, L., and Belkin, M. (2022).
\newblock Loss landscapes and optimization in over-parameterized non-linear
  systems and neural networks.
\newblock {\em Applied and Computational Harmonic Analysis}.

\bibitem[Mei et~al., 2018]{mei2018landscape}
Mei, S., Bai, Y., and Montanari, A. (2018).
\newblock The landscape of empirical risk for nonconvex losses.
\newblock {\em The Annals of Statistics}, 46(6A):2747--2774.

\bibitem[Meng et~al., 2017]{meng2017generalization}
Meng, Q., Wang, Y., Chen, W., Wang, T., Ma, Z.-M., and Liu, T.-Y. (2017).
\newblock Generalization error bounds for optimization algorithms via
  stability.
\newblock In {\em Association for the Advancement of Artificial Intelligence}.

\bibitem[Mohri and Rostamizadeh, 2009]{mohri2009rademacher}
Mohri, M. and Rostamizadeh, A. (2009).
\newblock Rademacher complexity bounds for non-iid processes.
\newblock In {\em Advances in Neural Information Processing Systems}.

\bibitem[Mokhtari et~al., 2018]{mokhtari2018escaping}
Mokhtari, A., Ozdaglar, A., and Jadbabaie, A. (2018).
\newblock Escaping saddle points in constrained optimization.
\newblock In {\em Advances in Neural Information Processing Systems}.

\bibitem[Negrea et~al., 2019]{negrea2019information}
Negrea, J., Haghifam, M., Dziugaite, G.~K., Khisti, A., and Roy, D.~M. (2019).
\newblock Information-theoretic generalization bounds for sgld via
  data-dependent estimates.
\newblock {\em Advances in Neural Information Processing Systems}.

\bibitem[Neu et~al., 2021]{neu2021information}
Neu, G., Dziugaite, G.~K., Haghifam, M., and Roy, D.~M. (2021).
\newblock Information-theoretic generalization bounds for stochastic gradient
  descent.
\newblock In {\em Conference on Learning Theory}.

\bibitem[Neyshabur et~al., 2018]{neyshabur2018role}
Neyshabur, B., Li, Z., Bhojanapalli, S., LeCun, Y., and Srebro, N. (2018).
\newblock The role of over-parametrization in generalization of neural
  networks.
\newblock In {\em International Conference on Learning Representations}.

\bibitem[Nguyen et~al., 2017a]{nguyen2017sarah}
Nguyen, L.~M., Liu, J., Scheinberg, K., and Tak{\'a}{\v{c}}, M. (2017a).
\newblock Sarah: A novel method for machine learning problems using stochastic
  recursive gradient.
\newblock In {\em International Conference on Machine Learning}.

\bibitem[Nguyen et~al., 2017b]{nguyen2017stochastic}
Nguyen, L.~M., Liu, J., Scheinberg, K., and Tak{\'a}{\v{c}}, M. (2017b).
\newblock Stochastic recursive gradient algorithm for nonconvex optimization.
\newblock Preprint arXiv:1705.07261.

\bibitem[Nocedal and Wright, 2006]{nocedal2006numerical}
Nocedal, J. and Wright, S. (2006).
\newblock {\em Numerical optimization}.
\newblock Springer Science \& Business Media.

\bibitem[Opper, 1994]{opper1994learning}
Opper, M. (1994).
\newblock Learning and generalization in a two-layer neural network: The role
  of the vapnik-chervonvenkis dimension.
\newblock {\em Physical Review Letters}, 72(13):2113.

\bibitem[Pedregosa et~al., 2011]{scikit-learn}
Pedregosa, F., Varoquaux, G., Gramfort, A., Michel, V., Thirion, B., Grisel,
  O., Blondel, M., Prettenhofer, P., Weiss, R., Dubourg, V., et~al. (2011).
\newblock Scikit-learn: Machine learning in {P}ython.
\newblock {\em Journal of Machine Learning Research}, 12:2825--2830.

\bibitem[Robbins and Monro, 1951]{robbins1951stochastic}
Robbins, H. and Monro, S. (1951).
\newblock A stochastic approximation method.
\newblock {\em The Annals of Mathematical Statistics}, pages 400--407.

\bibitem[Roux et~al., 2012]{roux2012stochastic}
Roux, N.~L., Schmidt, M., and Bach, F. (2012).
\newblock A stochastic gradient method with an exponential convergence \_rate
  for finite training sets.
\newblock In {\em Advances in Neural Information Processing Systems}.

\bibitem[Shalev-Shwartz et~al., 2009]{shalev2009stochastic}
Shalev-Shwartz, S., Shamir, O., Srebro, N., and Sridharan, K. (2009).
\newblock Stochastic convex optimization.
\newblock In {\em Conference on Learning Theory}.

\bibitem[Shamir and Zhang, 2013]{shamir2013stochastic}
Shamir, O. and Zhang, T. (2013).
\newblock Stochastic gradient descent for non-smooth optimization: Convergence
  results and optimal averaging schemes.
\newblock In {\em International Conference on Machine Learning}.

\bibitem[Shawe-Taylor and Williamson, 1999]{shawe1999generalization}
Shawe-Taylor, J. and Williamson, R.~C. (1999).
\newblock Generalization performance of classifiers in terms of observed
  covering numbers.
\newblock In {\em European Conference on Computational Learning Theory}.

\bibitem[Sinha et~al., 2018]{sinha2018certifying}
Sinha, A., Namkoong, H., and Duchi, J. (2018).
\newblock Certifying some distributional robustness with principled adversarial
  training.
\newblock In {\em International Conference on Learning Representations}.

\bibitem[Steinke and Zakynthinou, 2020]{steinke2020reasoning}
Steinke, T. and Zakynthinou, L. (2020).
\newblock Reasoning about generalization via conditional mutual information.
\newblock In {\em Conference on Learning Theory}.

\bibitem[Tieleman and Hinton, 2012]{tieleman2012lecture}
Tieleman, T. and Hinton, G. (2012).
\newblock Lecture 6.5-rmsprop: Divide the gradient by a running average of its
  recent magnitude.
\newblock {\em COURSERA: Neural networks for machine learning}, 4(2):26--31.

\bibitem[Vapnik, 1999]{vapnik1999nature}
Vapnik, V. (1999).
\newblock {\em The nature of statistical learning theory}.
\newblock Springer science \& business media.

\bibitem[Vaswani et~al., 2017]{vaswani2017attention}
Vaswani, A., Shazeer, N., Parmar, N., Uszkoreit, J., Jones, L., Gomez, A.~N.,
  Kaiser, {\L}., and Polosukhin, I. (2017).
\newblock Attention is all you need.
\newblock In {\em Advances in Neural Information Processing Systems}.

\bibitem[Vershynin, 2018]{vershynin2018}
Vershynin, R. (2018).
\newblock {\em High-dimensional probability: An introduction with applications
  in data science}.
\newblock Cambridge Series in Statistical and Probabilistic Mathematics.
  Cambridge University Press.

\bibitem[Wainwright, 2019]{wainwright2019}
Wainwright, M.~J. (2019).
\newblock {\em High-dimensional statistics: A non-asymptotic viewpoint}.
\newblock Cambridge Series in Statistical and Probabilistic Mathematics.
  Cambridge University Press.

\bibitem[Williamson et~al., 2001]{williamson2001generalization}
Williamson, R.~C., Smola, A.~J., and Scholkopf, B. (2001).
\newblock Generalization performance of regularization networks and support
  vector machines via entropy numbers of compact operators.
\newblock {\em IEEE Transactions on Information Theory}, 47(6):2516--2532.

\bibitem[Xu and Raginsky, 2017]{xu2017information}
Xu, A. and Raginsky, M. (2017).
\newblock Information-theoretic analysis of generalization capability of
  learning algorithms.
\newblock In {\em Advances in Neural Information Processing Systems}.

\bibitem[Xu and Mannor, 2012]{xu2012robustness}
Xu, H. and Mannor, S. (2012).
\newblock Robustness and generalization.
\newblock {\em Machine learning}, 86(3):391--423.

\bibitem[Xu et~al., 2018]{xu2018first}
Xu, Y., Jin, R., and Yang, T. (2018).
\newblock First-order stochastic algorithms for escaping from saddle points in
  almost linear time.
\newblock In {\em Advances in Neural Information Processing Systems}.

\bibitem[Yi et~al., 2021a]{yi2021improved}
Yi, M., Hou, L., Sun, J., Shang, L., Jiang, X., Liu, Q., and Ma, Z.-M. (2021a).
\newblock Improved ood generalization via adversarial training and
  pre-training.
\newblock In {\em International Conference on Machine Learning}.

\bibitem[Yi et~al., 2021b]{yi2021towards}
Yi, M., Meng, Q., Chen, W., and Ma, Z.-M. (2021b).
\newblock Towards accelerating training of batch normalization: A manifold
  perspective.
\newblock Preprint arXiv:2101.02916.

\bibitem[Yuan et~al., 2019]{yuan2019stagewise}
Yuan, Z., Yan, Y., Jin, R., and Yang, T. (2019).
\newblock Stagewise training accelerates convergence of testing error over sgd.
\newblock In {\em Advances in Neural Information Processing Systems}.

\bibitem[Zhang, 2010]{zhang2010nearly}
Zhang, C.-H. (2010).
\newblock Nearly unbiased variable selection under minimax concave penalty.
\newblock {\em The Annals of Statistics}, 38(2):894--942.

\bibitem[Zhang et~al., 2017a]{zhang2017empirical}
Zhang, L., Yang, T., and Jin, R. (2017a).
\newblock Empirical risk minimization for stochastic convex optimization:
  ${O}(1/n)$-and ${O}(1 / n^{2})$-type of risk bounds.
\newblock In {\em Conference on Learning Theory}.

\bibitem[Zhang, 2002]{zhang2002covering}
Zhang, T. (2002).
\newblock Covering number bounds of certain regularized linear function
  classes.
\newblock {\em Journal of Machine Learning Research}, 2(3):527--550.

\bibitem[Zhang et~al., 2013]{zhang2013communication}
Zhang, Y., Duchi, J.~C., and Wainwright, M.~J. (2013).
\newblock Communication-efficient algorithms for statistical optimization.
\newblock {\em The Journal of Machine Learning Research}, 14(1):3321--3363.

\bibitem[Zhang et~al., 2017b]{zhang2017hitting}
Zhang, Y., Liang, P., and Charikar, M. (2017b).
\newblock A hitting time analysis of stochastic gradient langevin dynamics.
\newblock In {\em Conference on Learning Theory}.

\bibitem[Zou et~al., 2020]{zou2020gradient}
Zou, D., Cao, Y., Zhou, D., and Gu, Q. (2020).
\newblock Gradient descent optimizes over-parameterized deep relu networks.
\newblock {\em Machine Learning}, 109(3):467--492.

\end{thebibliography}
	\bibliographystyle{apalike}

	\appendix
	\section{Proof of Theorem \ref{thm:stability and generalization}}\label{app:proof of stability and generalization}
\begin{proof}
	Recall that $\{\bz_{1}, \cdots, \bz_{n},\bz_{1}^{\prime},\cdots, \bz_{n}^{\prime}\}$ are $2n$ i.i.d samples from the target population, $\bS=\{\bz_{1}, \cdots, \bz_{n}\}$, $\bS^{i}=\{\bz_{1},\cdots, \bz_{i - 1}, \bz_{i}^{\prime}, \bz_{i + 1}, \cdots, \bz_{n}\}$, and $\bS^{\prime} = \bS^{1}$. We have 
	\begin{equation}
		\small
		\begin{aligned}
			\mE_{\cA,\bS}\left[R(\cA(\bS)) - R_{\bS}(\cA(\bS))\right] & = \mE_{\cA, \bS, \bz}\left[\frac{1}{n}\sum\limits_{i=1}^{n}\left(f(\cA(\bS), \bz) - f(\cA(\bS), \bz_{i})\right)\right] \\
			& = \mE_{\cA,\bS, \bS^{i}}\left[\frac{1}{n}\sum\limits_{i=1}^{n}\left(f(\cA(\bS^{i}), \bz_{i}) - f(\cA(\bS), \bz_{i})\right)\right] \\
			& = \frac{1}{n}\sum\limits_{i=1}^{n}\mE_{\cA,\bS, \bS^{i}}\left[f(\cA(\bS^{i}), \bz_{i}) - f(\cA(\bS), \bz_{i})\right].
		\end{aligned}
	\end{equation}
	Thus
	\[\small
	\begin{split}
		|\mE_{\cA,\bS}\left[R(\cA(\bS)) - R_{\bS}(\cA(\bS))\right]|
		&\leq \frac{1}{n}\sum\limits_{i=1}^{n}\mE_{\bS, \bS^{i}}\left|\mE_{\cA}\left[f(\cA(\bS^{i}), \bz_{i}) - f(\cA(\bS), \bz_{i})\right]\right| \\
		&\leq \mE_{\bS, \bS^{\prime}}\left[\sup_{\bz}|\mE_{\cA}[f(\cA(\bS^{\prime}), \bz) - f(\cA(\bS), \bz)]|\right] \\
		& \leq \epsilon,
	\end{split}
	\]
	where the last inequality is due to the $\epsilon$-uniform stability. 
\end{proof}
\section{Proofs in Section \ref{sec:testing error of convex function}}
Throughout this and the following proofs, for any symmetric matrix $\bA$, we denote its smallest and largest eigenvalue by $\sigma_{\rm min}(\bA)$ and $\sigma_{\rm max}(\bA)$, respectively.
\subsection{Proofs in Section \ref{sec:stability of convex function}}\label{app: proof in 3.1}
Before providing the proof of Theorem \ref{thm:stability of convex function}, we need several lemmas. First we define two ``good events" 
\begin{equation}
	\small
	\begin{aligned}
		E_{1} & = \left\{\|\nabla R_{\bS}(\bw^{*})\| \leq \frac{\lambda^{2}}{16L_{2}}, \|\nabla R_{\bS^{\prime}}(\bw^{*})\| \leq \frac{\lambda^{2}}{16L_{2}}\right\} \\
		E_{2} & = \left\{\|\nabla^{2} R_{\bS}(\bw^{*}) - \nabla^{2} R(\bw^{*})\|\leq \frac{\lambda}{4}, \|\nabla^{2} R_{\bS^{\prime}}(\bw^{*}) - \nabla^{2} R(\bw^{*})\|\leq \frac{\lambda}{4}\right\}
	\end{aligned}
\end{equation}
The following lemma is based on the fact that on event $E_{1}\bigcap E_{2}$ the empirical global minimum is around the population global minimum. \begin{lemma}\label{lem:upper bound on two minimums}
	Under Assumptions \ref{ass:smoothness}-\ref{ass:convexity}, there exists global minimum $\bw_{\bS}^{*}$ and $\bw_{\bS^{\prime}}^{*}$ of $R_{\bS}(\cdot)$ and $R_{\bS^{\prime}}(\cdot)$ such that   
	\begin{equation}\label{eq:two minimum bound}
		\small
		\mE\left[\|\bw_{\bS}^{*} - \bw_{\bS^{\prime}}^{*}\| \textbf{1}_{E_{1}\bigcap E_{2}}\right] \leq \frac{8L_{0}}{n\lambda},
	\end{equation}
	where $\textbf{1}_{(\cdot)}$ is the indicative function and $\bw^{*}$ is the sole global minimum of $R(\cdot)$.
\end{lemma}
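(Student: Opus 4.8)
The plan is to prove the bound \emph{deterministically} on the event $E_{1}\cap E_{2}$, after which multiplying by $\textbf{1}_{E_{1}\cap E_{2}}$ and taking expectation is immediate (and in fact loses nothing). The two ingredients are: (i) on this event both $R_{\bS}(\cdot)$ and $R_{\bS^{\prime}}(\cdot)$ are strongly convex in a fixed ball around the population minimizer $\bw^{*}$, which pins their global minima to the interior of that ball as interior critical points; and (ii) $\nabla R_{\bS}$ and $\nabla R_{\bS^{\prime}}$ differ by at most $2L_{0}/n$ because $\bS$ and $\bS^{\prime}$ differ in a single sample.

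First I would establish the local strong convexity. On $E_{2}$ we have $\nabla^{2}R_{\bS}(\bw^{*})\succeq \nabla^{2}R(\bw^{*})-(\lambda/4)\bI_{d}\succeq(3\lambda/4)\bI_{d}$ by Assumption \ref{ass:local strongly convexity}, and likewise for $R_{\bS^{\prime}}$. Combining this with the Lipschitz-Hessian bound ($j=2$ in Assumption \ref{ass:smoothness}), for every $\bw\in B_{2}(\bw^{*},\lambda/(4L_{2}))$ one gets $\nabla^{2}R_{\bS}(\bw)\succeq(3\lambda/4)\bI_{d}-L_{2}\cdot(\lambda/(4L_{2}))\bI_{d}=(\lambda/2)\bI_{d}$, so both empirical risks are $(\lambda/2)$-strongly convex on that ball.

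Next I would localize the minima. Let $\hat{\bw}$ minimize $R_{\bS}$ over the closed ball $B_{2}(\bw^{*},\lambda/(4L_{2}))$ (unique by the strong convexity above). A short boundary-exclusion argument shows $\hat{\bw}$ is interior: at the ball boundary the strong-convexity gradient growth is of order $(\lambda/2)\cdot(\lambda/(4L_{2}))=\lambda^{2}/(8L_{2})$, which exceeds the value $\|\nabla R_{\bS}(\bw^{*})\|\leq\lambda^{2}/(16L_{2})$ guaranteed on $E_{1}$, contradicting the inward-pointing optimality condition of a boundary minimizer. Hence $\nabla R_{\bS}(\hat{\bw})=0$, and monotonicity gives $\|\hat{\bw}-\bw^{*}\|\leq(2/\lambda)\|\nabla R_{\bS}(\bw^{*})\|\leq\lambda/(8L_{2})$, strictly inside the ball. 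This is the one place that really uses \emph{global} convexity (Assumption \ref{ass:convexity}): because $R_{\bS}$ is convex on all of $\cW$, the interior critical point $\hat{\bw}$ is automatically a global minimum, so we may take $\bw_{\bS}^{*}=\hat{\bw}$, and analogously $\bw_{\bS^{\prime}}^{*}$. Promoting the local, quantitative localization to a statement about the \emph{global} minimizer is the main thing to get right; everything else is mechanical.

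Finally I would chain the two facts. Since $\bw_{\bS}^{*},\bw_{\bS^{\prime}}^{*}\in B_{2}(\bw^{*},\lambda/(8L_{2}))$, the segment joining them lies in the $(\lambda/2)$-strongly convex region of $R_{\bS}$, so using $\nabla R_{\bS}(\bw_{\bS}^{*})=0$,
\[
\frac{\lambda}{2}\|\bw_{\bS}^{*}-\bw_{\bS^{\prime}}^{*}\|^{2}\leq\langle\nabla R_{\bS}(\bw_{\bS^{\prime}}^{*}),\,\bw_{\bS^{\prime}}^{*}-\bw_{\bS}^{*}\rangle\leq\|\nabla R_{\bS}(\bw_{\bS^{\prime}}^{*})\|\,\|\bw_{\bS}^{*}-\bw_{\bS^{\prime}}^{*}\|.
\]
Because $\bS$ and $\bS^{\prime}$ differ only in $\bz_{1}$ versus $\bz_{1}^{\prime}$ and $\nabla R_{\bS^{\prime}}(\bw_{\bS^{\prime}}^{*})=0$, one has $\nabla R_{\bS}(\bw_{\bS^{\prime}}^{*})=\frac{1}{n}(\nabla f(\bw_{\bS^{\prime}}^{*},\bz_{1})-\nabla f(\bw_{\bS^{\prime}}^{*},\bz_{1}^{\prime}))$, whose norm is at most $2L_{0}/n$ by the $j=0$ Lipschitz bound in Assumption \ref{ass:smoothness}. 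Dividing yields $\|\bw_{\bS}^{*}-\bw_{\bS^{\prime}}^{*}\|\leq4L_{0}/(n\lambda)$ on $E_{1}\cap E_{2}$, which is even stronger than claimed; hence $\mE[\|\bw_{\bS}^{*}-\bw_{\bS^{\prime}}^{*}\|\,\textbf{1}_{E_{1}\cap E_{2}}]\leq4L_{0}/(n\lambda)\leq8L_{0}/(n\lambda)$.
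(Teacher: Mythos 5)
Your proof is correct and follows essentially the same route as the paper's: local strong convexity of $R_{\bS}$ and $R_{\bS^{\prime}}$ on $B_{2}(\bw^{*},\lambda/(4L_{2}))$ deduced from $E_{2}$ and the Lipschitz Hessian, localization of the empirical minimizers inside that ball using $E_{1}$ together with global convexity, and finally the one-sample gradient perturbation bound $\|\nabla R_{\bS}(\bw_{\bS^{\prime}}^{*})\|\leq 2L_{0}/n$ combined with strong monotonicity. The only differences are cosmetic: you localize by showing the ball-constrained minimizer is an interior critical point and then promote it to a global minimizer via convexity, whereas the paper argues by contradiction directly on the global minimizer, and your constants come out a factor of two tighter, which is harmless.
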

\begin{proof} 
	To begin with, we show $R_{\bS}(\cdot)$ is locally strongly convex around $\bw^{*}$ with high probability. Then, by providing that there exists $\bw_{\bS}^{*}$ and $\bw_{\bS^{\prime}}^{*}$ locates in the region, we get the conclusion. 
	\par
	We claim that if the event $E_{1}\bigcap E_{2}$	happens, then $\nabla^{2} R_{\bS}(\bw)\succeq \frac{\lambda}{2}$ for any $\bw\in B_{2}(\bw^{*}, \frac{\lambda}{4L_{2}})$. Since  
	\begin{equation}\small
		\begin{aligned}
			\sigma_{\rm min}(\nabla^{2}R_{\bS}(\bw)) & = \sigma_{\rm min}\left(\nabla^{2}R_{\bS}(\bw) - \nabla^{2}R_{\bS}(\bw^{*}) + \nabla^{2}R_{\bS}(\bw^{*}) - \nabla^{2}R(\bw^{*}) + \nabla^{2}R(\bw^{*})\right) \\
			& \geq \sigma_{\rm min}(\nabla^{2}R(\bw^{*})) - \|\nabla^{2}R_{\bS}(\bw) - \nabla^{2}R_{\bS}(\bw^{*})\| - \|\nabla^{2}R_{\bS}(\bw^{*}) - \nabla^{2}R(\bw^{*})\| \\
			& \geq \lambda - L_{2}\|\bw - \bw^{*}\| - \frac{\lambda}{4} \geq \frac{\lambda}{2},
		\end{aligned}
	\end{equation}
	where the last inequality is due to the Lipschitz Hessian and event $E_{2}$. After that, we show that both $\bw^{*}_{\bS}$ and $\bw_{\bS^{\prime}}^{*}$ locate in $B_{2}(\bw^{*}, \frac{\lambda}{4L_{2}})$, when $E_{1}, E_{2}$ hold. Let $\bw = \gamma\bw_{\bS}^{*} + (1 - \gamma)\bw^{*}$, with $\gamma = \frac{\lambda}{4L_{2}\|\bw_{\bS}^{*} - \bw^{*}\|}$ then
	\begin{equation}
		\small
		\|\bw - \bw^{*}\| = \gamma\|\bw_{\bS}^{*} - \bw^{*}\|.
	\end{equation}
	One can see $\bw\in S_{2}(\bw^{*}, \frac{\lambda}{4L_{2}})$. Thus by the strong convexity, 
	\begin{equation}
		\small
		\begin{aligned}
			\|\bw - \bw^{*}\|^{2} \leq \frac{4}{\lambda}(R_{\bS}(\bw) - R_{\bS}(\bw^{*}) + \langle \nabla R_{\bS}(\bw^{*}), \bw - \bw^{*}\rangle) < \frac{4}{\lambda}\|\nabla R_{\bS}(\bw^{*})\|\|\bw - \bw^{*}\|,
		\end{aligned}
	\end{equation}
	where the last inequality is due to the convexity such that 
	\begin{equation}
		\small
		R_{\bS}(\bw) - R_{\bS}(\bw^{*}) = R_{\bS}(\gamma\bw_{\bS}^{*} + (1 - \gamma)\bw^{*}) - R_{\bS}(\bw^{*}) \leq \gamma(R_{\bS}(\bw_{\bS}^{*}) - R_{\bS}(\bw^{*})) < 0
	\end{equation}
	and Schwarz inequality. Then,
	\begin{equation}
		\small
		\frac{\lambda}{4}\|\bw - \bw^{*}\| = \frac{\lambda^{2}}{16L_{2}\|\bw^{*}_{\bS} - \bw^{*}\|}\|\bw^{*}_{\bS} - \bw^{*}\| = \frac{\lambda}{16L_{2}^{2}} < \|\nabla R_{\bS}(\bw^{*})\|,
	\end{equation}
	which leads to a contraction to event $E_{1}$. Thus, we conclude that $\bw_{\bS}^{*}\in B_{2}(\bw^{*}, \frac{\lambda}{4L_{2}})$. Identically, one can verify that $\bw_{\bS^{\prime}}^{*}\in B_{2}(\bw^{*}, \frac{\lambda}{4L_{2}})$.
	\par
	Since both $\bw_{\bS}^{*}$ and $\bw_{\bS^{\prime}}^{*}$ are in $B_{2}(\bw^{*}, \frac{\lambda}{4L_{2}})$ on event $E_{1}\bigcap E_{2}$, $\bS$ and $\bS^{\prime}$ differs in $\bz_{1}$, then we have 
	\begin{equation}\label{eq:bounded distance of optima}
		\small
		\begin{aligned}
			\|\bw_{\bS}^{*} - \bw_{\bS^{\prime}}^{*}\| & \leq \frac{4}{\lambda}\|\nabla R_{\bS}(\bw_{\bS^{\prime}}^{*})\| \\
			& = \frac{4}{\lambda}\left\|\frac{1}{n}\sum\limits_{\bz\in\bS}\nabla f(\bw_{\bS^{\prime}}^{*}, \bz)\right\| \\
			& = \frac{4}{n\lambda}\left\|\nabla f(\bw_{\bS^{\prime}}^{*}, \bz_{1})  - \nabla f(\bw_{\bS^{\prime}}^{*}, \bz^{\prime}_{1})\right\|\\
			& \leq \frac{8L_{0}}{n\lambda},
		\end{aligned}
	\end{equation}
	where the last equality is due to $\bw_{\bS^{\prime}}^{*}$ is the minimum of $R_{\bS^{\prime}}(\cdot)$. 
	The lemma follows from the fact
	\begin{equation}
		\small
		\mE\left[\|\bw_{\bS}^{*} - \bw_{\bS^{\prime}}^{*}\| \textbf{1}_{E_{1}\bigcap E_{2}}\right] \leq \frac{8L_{0}}{n\lambda}\bbP(E_{1}\bigcap E_{2}) \leq \frac{8L_{0}}{n\lambda}.
	\end{equation}
\end{proof}

Next, we show that the ``good event" happens with high probability.
\begin{lemma}\label{lem:good event prob}
	Under Assumption \ref{ass:smoothness}, 
	\begin{equation}\label{eq:E1c+E2c prob bound}
		\small
		\bbP(E_{1}^{c}\bigcup E_{2}^{c}) \leq \bbP(E^{c}_{1}) + \bbP(E^{c}_{2}) \leq
		\frac{512L_{0}^{2}L_{2}^{2}}{n\lambda^{4}} + \frac{128L_{1}^{2}}{n\lambda^{2}}\left(5\sqrt{\log d}  + \frac{4e\log d }{\sqrt{n}}\right)^{2},
	\end{equation}
	where $E_{k}^{c}$ is the complementary of $E_{k}$ for $k=1,2$.
\end{lemma}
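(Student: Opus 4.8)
Since the first inequality in \eqref{eq:E1c+E2c prob bound} is just the union bound $\bbP(E_1^c \cup E_2^c) \le \bbP(E_1^c) + \bbP(E_2^c)$, the task reduces to bounding $\bbP(E_1^c)$ and $\bbP(E_2^c)$ separately. The unifying idea is that each ``good event'' asserts that an empirical average concentrates around its population mean: $\nabla R_{\bS}(\bw^{*}) = \frac1n\sum_{i=1}^n \nabla f(\bw^{*},\bz_i)$ is an average of i.i.d.\ random vectors with mean $\nabla R(\bw^{*}) = \bzero$ (as $\bw^{*}$ is an interior global minimizer of the smooth $R$), while $\nabla^2 R_{\bS}(\bw^{*}) = \frac1n\sum_{i=1}^n \nabla^2 f(\bw^{*},\bz_i)$ is an average of i.i.d.\ symmetric matrices with mean $\nabla^2 R(\bw^{*})$. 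Assumption \ref{ass:smoothness} supplies the boundedness I need: the case $j=0$ gives $\|\nabla f(\bw^{*},\bz)\|\le L_0$ and the case $j=1$ gives $\|\nabla^2 f(\bw^{*},\bz)\|\le L_1$.

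For $\bbP(E_1^c)$ I would use a second-moment (Chebyshev-type) argument. Because the summands are i.i.d., mean-zero, and bounded by $L_0$ in norm, one gets $\mE\|\nabla R_{\bS}(\bw^{*})\|^2 \le L_0^2/n$. Applying Markov's inequality to $\|\nabla R_{\bS}(\bw^{*})\|^2$ with threshold $t = \lambda^2/(16L_2)$ yields $\bbP(\|\nabla R_{\bS}(\bw^{*})\| > t) \le L_0^2/(n t^2) = 256 L_0^2 L_2^2/(n\lambda^4)$. The event $E_1^c$ is the union of the corresponding events for $\bS$ and $\bS^{\prime}$, so a final union bound over the two training sets produces the first term $512 L_0^2 L_2^2/(n\lambda^4)$.

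For $\bbP(E_2^c)$ the scheme is identical, but the quantity to control is the spectral norm of $\frac1n\sum_{i=1}^n (\nabla^2 f(\bw^{*},\bz_i) - \nabla^2 R(\bw^{*}))$, an average of i.i.d.\ mean-zero symmetric matrices bounded by $2L_1$ after centering. Here the elementary variance estimate is not sufficient, and I would invoke a matrix-concentration / expected-operator-norm inequality to obtain a second-moment bound of the form $\mE\|\cdot\|^2 \le \frac{4L_1^2}{n}\left(5\sqrt{\log d} + \frac{4e\log d}{\sqrt n}\right)^2$, in which the $\sqrt{\log d}$ contribution reflects the variance-dominated (sub-Gaussian) regime and the $\log d/\sqrt n$ contribution the boundedness-dominated (sub-exponential) regime. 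Markov applied to the squared spectral norm at threshold $\lambda/4$ then gives the single-set estimate $\frac{64L_1^2}{n\lambda^2}\left(5\sqrt{\log d}+\frac{4e\log d}{\sqrt n}\right)^2$, and the union over $\bS$ and $\bS^{\prime}$ doubles it to the second term in \eqref{eq:E1c+E2c prob bound}.

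The main obstacle is precisely this matrix-concentration step for $E_2^c$: securing the exact coefficient $\left(5\sqrt{\log d}+\frac{4e\log d}{\sqrt n}\right)^2$ requires a Bernstein-type bound for sums of independent bounded random matrices (or an equivalent moment inequality), rather than the elementary Chebyshev argument that suffices for $E_1^c$. This is exactly where the only dependence on the dimension $d$ enters the whole analysis, and it is poly-logarithmic, which is what ultimately renders the excess-risk bounds dimension-insensitive. I would therefore isolate this operator-norm estimate as a separate auxiliary lemma (or cite a standard matrix Bernstein inequality) and then feed it into the Markov step above.
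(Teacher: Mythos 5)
Your proposal is correct and follows essentially the same route as the paper's proof: a union bound over the two events and the two training sets, Markov's inequality applied to the squared norms with thresholds $\lambda^{2}/(16L_{2})$ and $\lambda/4$, the elementary bound $\mE[\|\nabla R_{\bS}(\bw^{*})\|^{2}]\leq L_{0}^{2}/n$, and a matrix-concentration moment bound $\mE[\|\nabla^{2}R_{\bS}(\bw^{*})-\nabla^{2}R(\bw^{*})\|^{2}]\leq \frac{4L_{1}^{2}}{n}\bigl(5\sqrt{\log d}+\frac{4e\log d}{\sqrt{n}}\bigr)^{2}$, which the paper obtains by citing the argument of Lemma 7 in \citep{zhang2013communication} exactly as you propose to do via a matrix Bernstein-type inequality. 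All your constants match the paper's.
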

\begin{proof}
	By Assumption \ref{ass:smoothness}, we have $ \|\nabla f(\bw, \bz)\|\leq L_{0}$ and $\|\nabla^2 f(\bw, \bz)\|\leq L_{1}$ for any $\bw \in \cW$ and $\bz$. Thus $\mE_{\bz}[\|\nabla f(\bw^{*}, \bz)\|^2] \leq L_{0}^2$ and $\mE[\|\nabla^2 f(\bw, \bz) - \nabla^2 R(\bw)\|^2] \leq 4L_{1}^2$. For $E^{c}$, a simple Markov's inequality implies   
	\begin{equation}\small\label{eq:event probability bound}
		\small
		\begin{aligned}
			\bbP(E^{c}) & = \bbP\left(E^{c}_{1}\bigcup E^{c}_{2}\right)
			\leq \bbP(E^{c}_{1}) + \bbP(E^{c}_{2})\\
			& = 2\bbP\left(\|\nabla R_{\bS}(\bw^{*})\| > \frac{\lambda^{2}}{16L_{2}}\right) + 2\bbP\left(\|\nabla^{2} R_{\bS}(\bw^{*}) - \nabla^{2} R(\bw^{*})\|> \frac{\lambda}{4}\right)\\
			& \leq \frac{512L_{2}^{2}}{\lambda^{4}}\mE\left[\|\nabla R_{\bS}(\bw^{*})\|^{2}\right] + \frac{32}{\lambda^{2}}\mE\left[\|\nabla^{2} R_{\bS}(\bw^{*}) - \nabla^{2} R(\bw^{*})\|^{2}\right].
		\end{aligned}
	\end{equation}
	By similar arguments as in the proof of Lemma 7 in \citep{zhang2013communication}, we have
	\begin{equation}
		\small
		\mE\left[\|\nabla R_{\bS}(\bw^{*})\|^{2}\right] \leq \frac{L_{0}^{2}}{n}, 
	\end{equation}
	and
	\begin{equation}
		\small
		\mE\left[\|\nabla^{2} R_{\bS}(\bw^{*}) - \nabla^{2} R(\bw^{*})\|^{2}\right] \leq \frac{1}{n}\left(10\sqrt{\log d} L_{1} + \frac{8e\log d L_{1}}{\sqrt{n}}\right)^{2}.
	\end{equation}
	Combining these with \eqref{eq:event probability bound}, we have
	\begin{equation}
		\small
		\bbP(E^{c}) \leq \bbP(E^{c}_{1}) + \bbP(E^{c}_{2}) \leq
		\frac{512L_{0}^{2}L_{2}^{2}}{n\lambda^{4}} + \frac{128L_{1}^{2}}{n\lambda^{2}}\left(5\sqrt{\log d}  + \frac{4e\log d }{\sqrt{n}}\right)^{2}.
	\end{equation}	
	Then Lemma \ref{lem:upper bound on two minimums} follows from \eqref{eq:bounded distance of optima} and \eqref{eq:E1c+E2c prob bound}. 
\end{proof}
\par
This lemma shows the fact that there exists empirical global minimum on the training set $\bS$ and $\bS^{\prime}$ concentrate around population global minimum $\bw^{*}$, so the two empirical global minimum are close with each other. Besides that, the empirical risk is locally strongly convex around this global minimum with high probability. 
\par
To present the algorithmic stability, we need to show the convergence of $\bw_{t}$ to $\bw_{\bS}^{*}$ with $\bw_{t}$ trained on the training set $\bS$. However, there is no convergence rate of $\|\bw_{t} - \bw_{\bS}^{*}\|$ under general convex problems, because the quadratic growth condition \footnote{For $f:\bbR^{d}\rightarrow \bbR$, quadratic growth means $\frac{\mu}{2}\|\bw - \bw^{*}\|^{2}\leq f(\bw) - f(\bw^{*})$ for some $\mu>0$, where $\bw^{*}$ is the global minimum.} only holds for strongly convex problems \footnote{For $f:\bbR^{d}\rightarrow \bbR$, strongly convex means $f(\bw_{1}) - f(\bw_{2})\leq \langle\nabla f(\bw_{1}), \bw_{1} - \bw_{2}\rangle - \frac{\lambda}{2}\|\bw_{1} - \bw_{2}\|$ for some $\lambda > 0$ and any $\bw_{1}, \bw_{2}\in\bbR^{d}$.} . Fortunately, the local strong convexity of $R_{\bS}(\cdot)$ and $R_{\bS^{\prime}}(\cdot)$ enables us to upper bound $\|\bw_{t} - \bw_{\bS}^{*}\|$ and $\|\bw_{t}^{\prime} - \bw_{\bS^{\prime}}^{*}\|$ after a certain number of iterations.  
\begin{lemma}\label{lem:convergence of distance for sgd}
	Under Assumption \ref{ass:smoothness} and \ref{ass:convexity}, for any global minimum $\bw_{\bS}^{*}$ of $R_{\bS}(\cdot)$, define event
	\begin{equation}
		\small
		E_{0,r} = \left\{\nabla^{2} R_{\bS}(\bw)\succeq \frac{\lambda}{4}: \forall \bw\in B_{2}(\bw_{\bS}^{*}, r)\right\}
	\end{equation}
	for some $r > 0$ and the training set $\bS$. Then
	\begin{equation}\small
		\mE[\|\bw_{t} - \bw_{\bS}^{*}\|\emph{\textbf{1}}_{E_{0,r}}] \leq \frac{2\sqrt{2}(r + D)}{r\sqrt{\lambda}}\mE\left[R_{\bS}(\bw_{t}) - R_{\bS}(\bw_{\bS}^{*})\right]^{\frac{1}{2}}.
	\end{equation}  
\end{lemma}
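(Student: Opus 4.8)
The plan is to exploit the local strong convexity guaranteed by the event $E_{0,r}$ to obtain a \emph{quadratic growth} inequality for $R_{\bS}$ around $\bw_{\bS}^{*}$, and then to convert that into the desired square-root control of $\|\bw_{t}-\bw_{\bS}^{*}\|$. First I would record the elementary fact that on $E_{0,r}$ one has, for every $\bw\in B_{2}(\bw_{\bS}^{*},r)\cap\cW$,
\[
R_{\bS}(\bw)-R_{\bS}(\bw_{\bS}^{*})\geq \frac{\lambda}{8}\|\bw-\bw_{\bS}^{*}\|^{2}.
\]
This follows by integrating the Hessian lower bound $\nabla^{2}R_{\bS}\succeq\lambda/4$ along the segment joining $\bw_{\bS}^{*}$ to $\bw$ (which stays in the convex set $B_{2}(\bw_{\bS}^{*},r)\cap\cW$), combined with the first-order optimality of the constrained global minimizer, namely $\langle\nabla R_{\bS}(\bw_{\bS}^{*}),\bw-\bw_{\bS}^{*}\rangle\geq 0$ for all $\bw\in\cW$, which kills the linear term.

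With this growth estimate in hand, I would split the analysis according to whether the iterate lies inside or outside the ball; write $\Delta_{t}=R_{\bS}(\bw_{t})-R_{\bS}(\bw_{\bS}^{*})\geq 0$. If $\|\bw_{t}-\bw_{\bS}^{*}\|\leq r$, the quadratic growth inequality applies directly to $\bw_{t}$ and yields $\|\bw_{t}-\bw_{\bS}^{*}\|\leq \sqrt{8/\lambda}\,\sqrt{\Delta_{t}}$. The harder case is $\|\bw_{t}-\bw_{\bS}^{*}\|>r$, where $R_{\bS}$ need not be strongly convex at $\bw_{t}$. Here I would introduce the intermediate point $\bw=\bw_{\bS}^{*}+\gamma(\bw_{t}-\bw_{\bS}^{*})$ with $\gamma=r/\|\bw_{t}-\bw_{\bS}^{*}\|\in(0,1)$, so that $\bw$ lies on the sphere $\|\bw-\bw_{\bS}^{*}\|=r$ and inside $\cW$ by convexity. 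Convexity of $R_{\bS}$ along the segment gives $R_{\bS}(\bw)-R_{\bS}(\bw_{\bS}^{*})\leq\gamma\Delta_{t}$, while quadratic growth at $\bw$ gives $\tfrac{\lambda}{8}r^{2}\leq R_{\bS}(\bw)-R_{\bS}(\bw_{\bS}^{*})$. Chaining these two produces the \emph{linear} bound $\|\bw_{t}-\bw_{\bS}^{*}\|\leq 8\Delta_{t}/(\lambda r)$.

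To turn this linear bound into the required square-root form, I would invoke the boundedness of the parameter space, $\|\bw_{t}-\bw_{\bS}^{*}\|\leq D$, and write $\|\bw_{t}-\bw_{\bS}^{*}\|^{2}\leq D\cdot 8\Delta_{t}/(\lambda r)$, hence $\|\bw_{t}-\bw_{\bS}^{*}\|\leq\sqrt{8D/(\lambda r)}\,\sqrt{\Delta_{t}}$. Comparing the two cases, the common constant is $\tfrac{2\sqrt{2}}{\sqrt{\lambda}}\max\{1,\sqrt{D/r}\}$, which I would bound by $\tfrac{2\sqrt{2}}{\sqrt{\lambda}}(1+D/r)=\tfrac{2\sqrt{2}(r+D)}{r\sqrt{\lambda}}$ using the elementary inequality $\sqrt{x}\leq 1+x$ for $x>0$. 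This establishes the pointwise estimate $\|\bw_{t}-\bw_{\bS}^{*}\|\,\textbf{1}_{E_{0,r}}\leq\tfrac{2\sqrt{2}(r+D)}{r\sqrt{\lambda}}\sqrt{\Delta_{t}}$, after which taking expectations and applying Jensen's inequality, $\mE[\sqrt{\Delta_{t}}]\leq\sqrt{\mE[\Delta_{t}]}$, delivers the claim. The main obstacle is the outside-the-ball case: because the local strong convexity is genuinely local, $\bw_{t}$ cannot be controlled there directly, and it is the segment-to-the-boundary trick together with the diameter bound $D$ that rescues the square-root rate (at the cost of the $(r+D)/r$ factor).
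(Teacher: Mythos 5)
Your proof is correct and arrives at exactly the paper's constant, but it organizes the argument differently from the paper. Writing $\Delta_t = R_{\bS}(\bw_t)-R_{\bS}(\bw_{\bS}^*)$, the paper introduces an auxiliary event $E_{1,r}=\{\Delta_t<\lambda r^2/8\}$ on the suboptimality level, proves by contradiction (via the same segment-to-sphere trick you use) that on $E_{0,r}\cap E_{1,r}$ the iterate must lie inside $B_2(\bw_{\bS}^*,r)$ so that local strong convexity applies, and then handles the complement by $\mE[\|\bw_t-\bw_{\bS}^*\|\mathbf{1}_{E_{0,r}\cap E_{1,r}^c}]\le D\,\bbP(E_{1,r}^c)$ together with a Markov-type bound $\bbP(E_{1,r}^c)\le \frac{2\sqrt{2}}{r\sqrt{\lambda}}\mE[\Delta_t^{1/2}]$. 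You instead establish a single deterministic inequality valid pointwise on all of $E_{0,r}$: inside the ball, quadratic growth gives the square-root bound directly; outside, your segment argument yields the linear bound $\|\bw_t-\bw_{\bS}^*\|\le 8\Delta_t/(\lambda r)$, which you interpolate against the diameter bound $\|\bw_t-\bw_{\bS}^*\|\le D$ to recover the square-root rate. This buys you a cleaner structure (no contradiction, no probability estimate, Jensen applied only once) and is in fact quantitatively slightly sharper before the final relaxation: your outside-ball coefficient is proportional to $\sqrt{D/r}$, whereas the paper's is proportional to $D/r$; both collapse to the common factor $(r+D)/r$ at the end. A further merit of your write-up is that you make explicit the constrained first-order optimality condition $\langle\nabla R_{\bS}(\bw_{\bS}^*),\bw-\bw_{\bS}^*\rangle\ge 0$ for $\bw\in\cW$, which is needed to kill the linear term in the quadratic-growth estimate and which the paper's proof uses only implicitly.
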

\begin{proof}
	Define event 
	\begin{equation}
		\small
		E_{1,r} = \left\{R_{\bS}(\bw_{t}) - R_{\bS}(\bw_{\bS}^{*}) < \frac{\lambda r^{2}}{8}\right\}.
	\end{equation}
	First, we prove on event $E_{0, r}\bigcap E_{1, r}$ we have $\nabla^{2}R_{\bS}(\bw_{t})\succeq \frac{\lambda}{4}$.
	If $E_{0, r}$ holds and $\bw_{t}\in B_{2}(\bw_{\bS}^{*}, r)$, the conclusion is full-filled. On the other hand, if $\bw_{t}\notin B_{2}(\bw_{\bS}^{*}, r)$ and $E_{0, r} \bigcap E_{1, r}$ happens, for any $\bw$ with $\|\bw - \bw_{\bS}^{*}\| = r$, we have 
	\begin{equation}\label{eq: greater on the ball}
		\small
		R_{\bS}(\bw) - R_{\bS}(\bw_{\bS}^{*}) \geq \frac{\lambda r^{2}}{8},
	\end{equation}
	since $E_{0, r}$ holds. Then, let $\bw = \gamma \bw_{t} + (1 - \gamma) \bw_{\bS}^{*}$ with $\gamma = \frac{r}{\|\bw_{t} - \bw_{\bS}^{*}\|}$. Due to $\bw\in B_{2}(\bw_{\bS}^{*}, r)$ and the convexity of $R_{\bS}(\cdot)$, 
	\begin{equation}
		\small
		\begin{aligned}
			R_{\bS}(\bw) - R_{\bS}(\bw_{\bS}^{*}) \leq \gamma(R_{\bS}(\bw_{t}) - R_{\bS}(\bw_{\bS}^{*})) < \frac{\lambda r^{2}}{8},
		\end{aligned}
	\end{equation}
	which leads to a contraction to \eqref{eq: greater on the ball}. Hence, we conclude that on $E_{0, r}\bigcap E_{1, r}$, 
	\begin{equation}\label{eq: bound on itera}
		\small
		\|\bw_{t} - \bw_{\bS}^{*}\| \leq \frac{2\sqrt{2}}{\sqrt{\lambda}}(R_{\bS}(\bw_{t}) - R_{\bS}(\bw_{\bS}^{*}))^{\frac{1}{2}}, 
	\end{equation}
	due to the local strong convexity. With all these derivations, we see that
	\begin{equation}
		\small
		\begin{aligned}
			\mE\left[\|\bw_{t} - \bw_{\bS}^{*}\|\textbf{1}_{E_{0,r}}\right] & = \mE\left[\textbf{1}_{E_{0, r}\bigcap E_{1, r}}\|\bw_{t} - \bw_{\bS}^{*}\|\right] + \mE\left[\textbf{1}_{E_{0, r}\bigcap E_{1, r}^{c}}\|\bw_{t} - \bw_{\bS}^{*}\|\right] \\
			& \overset{a}{\leq} \frac{2\sqrt{2}}{\sqrt{\lambda}} \mE\left[R_{\bS}(\bw_{t}) - R_{\bS}(\bw_{\bS}^{*})\right]^{\frac{1}{2}} + D\bbP(E_{1, r}^{c})\\
			& \leq \frac{2\sqrt{2}}{\sqrt{\lambda}} \mE\left[R_{\bS}(\bw_{t}) - R_{\bS}(\bw_{\bS}^{*})\right]^{\frac{1}{2}} + D\frac{2\sqrt{2}}{r\sqrt{\lambda}}\mE\left[\left(R_{\bS}(\bw_{t}) - R_{\bS}(\bw_{\bS}^{*})\right)^{\frac{1}{2}}\right]\\
			& \leq \frac{2\sqrt{2}(r + D)}{r\sqrt{\lambda}}\mE\left[R_{\bS}(\bw_{t}) - R_{\bS}(\bw_{\bS}^{*})\right]^{\frac{1}{2}},
		\end{aligned}
	\end{equation}
	where $a$ is due to \eqref{eq: bound on itera} and Jesen's inequality. Thus, we get the conclusion. 
\end{proof}
\subsubsection{Proof of Theorem \ref{thm:stability of convex function}}
With all these lemmas, we are now ready to prove the Theorem \ref{thm:stability of convex function}.
\paragraph{Restate of Theorem \ref{thm:stability of convex function}} \emph{
	Under Assumption \ref{ass:smoothness}-\ref{ass:convexity}, we have
	\begin{equation}\small
		\epsilon_{\rm stab}(t) \leq \frac{4\sqrt{2}L_{0}(\lambda + 4DL_{2})}{\lambda^{\frac{3}{2}}}\sqrt{\epsilon(t)} + \frac{8L_{0}}{n\lambda}\left\{L_{0} + \frac{64L_{0}^{2}L_{2}^{2}D}{\lambda^{3}} + \frac{16L_{1}^{2}D}{\lambda}\left(5\sqrt{\log d} + \frac{4e\log d }{\sqrt{n}}\right)^{2}\right\},
	\end{equation}
	where $\epsilon_{\rm stab}(t) = \mE_{\bS, \bS^{\prime}}\left[\sup_{\bz}|\mE_{\cA}[f(\bw_{t}, \bz) - f(\bw^{\prime}_{t}, \bz)]|\right]$ is the stability of the output in the $t$-th step, and $\epsilon(t) = \mE\left[R_{\bS}(\bw_{t}) - R_{\bS}(\bw_{\bS}^{*})\right]$ with $\bw_{\bS}^{*}$ as global minimum of $R_{\bS}(\cdot)$.}
\begin{proof}
	At first glance, 
	\begin{equation}\label{eq:three sum}\small
		\begin{aligned}
			|f(\bw_{t}, \bz) - f(\bw_{t}^{\prime}, \bz)| & \leq L_{0}\|\bw_{t} - \bw_{t}^{\prime}\|\\
			& \leq L_{0}(\|\bw_{t} - \bw_{\bS}^{*}\| + \|\bw_{t}^{\prime} - \bw_{\bS^{\prime}}^{*}\| + \|\bw_{\bS}^{*} - \bw_{\bS^{\prime}}^{*}\|).
		\end{aligned}
	\end{equation}
	We respectively bound these three terms. An upper bound of the third term can be verified by Lemma \ref{lem:upper bound on two minimums}. As proven in Lemma \ref{lem:upper bound on two minimums}, when the two events
	\begin{equation}
		\small
		\begin{aligned}
			E_{1} & = \left\{\|\nabla R_{\bS}(\bw^{*})\| \leq \frac{\lambda^{2}}{16L_{2}}, \|\nabla R_{\bS^{\prime}}(\bw^{*})\| \leq \frac{\lambda^{2}}{16L_{2}}\right\} \\
			E_{2} & = \left\{\|\nabla^{2} R_{\bS}(\bw^{*}) - \nabla^{2} R(\bw^{*})\|\leq \frac{\lambda}{4}, \|\nabla^{2} R_{\bS^{\prime}}(\bw^{*}) - \nabla^{2} R(\bw^{*})\|\leq \frac{\lambda}{4}\right\}
		\end{aligned}
	\end{equation}
	hold, there exists empirical global minimum $\bw_{\bS}^{*}$ and $\bw_{\bS^{\prime}}^{*}$ such that $\nabla^{2} R_{\bS}(\bw_{\bS}^{*}) \succeq \frac{\lambda}{2}$ and $\nabla^{2} R_{\bS^{\prime}}(\bw_{\bS^{\prime}}^{*}) \succeq \frac{\lambda}{2}$. Thus for $\|\bw - \bw_{\bS}^{*}\|\leq \frac{\lambda}{4L_{2}}$, we have 
	\begin{equation}
		\small
		\sigma_{\rm min}(\nabla^{2} R_{\bS}(\bw)) \geq \sigma_{\rm min}(\nabla^{2} R_{\bS}(\bw_{\bS}^{*})) - \|\nabla^{2} R_{\bS}(\bw) - \nabla^{2} R_{\bS}(\bw_{\bS}^{*})\|\geq \frac{\lambda}{2} - L_{2}\|\bw - \bw_{\bS}^{*}\|\geq \frac{\lambda}{4}. 
	\end{equation}
	Hence, we conclude that event $E_{1}\bigcap E_{2}\subseteq E_{\bS}\bigcap E_{\bS^{\prime}}$ with 
	\begin{equation}\small
		\begin{aligned}
			E_{\bS} & = \left\{\nabla^{2} R_{\bS}(\bw)\succeq \frac{\lambda}{4}: \bw\in B_{2}(\bw_{\bS}^{*}, \frac{\lambda}{4L_{2}})\right\}\\
			E_{\bS^{\prime}} & = \left\{\nabla^{2} R_{\bS^{\prime}}(\bw)\succeq \frac{\lambda}{4}: \bw\in B_{2}(\bw_{\bS^{\prime}}^{*}, \frac{\lambda}{4L_{2}})\right\}.
		\end{aligned}
	\end{equation}
	By choosing $r=\frac{\lambda}{4L_{2}}$ in Lemma \ref{lem:convergence of distance for sgd}, 
	\begin{equation}\label{eq:two sample opt bound}
		\small
		\begin{aligned}
			\mE\left[\|\bw_{t} - \bw_{\bS}^{*}\|\textbf{1}_{E_{\bS}} + \|\bw_{t}^{\prime} - \bw_{\bS^{\prime}}^{*}\|\textbf{1}_{E_{\bS^{\prime}}}\right] & \leq \left(\frac{4\sqrt{2}}{\sqrt{\lambda}} + \frac{16\sqrt{2}DL_{2}}{\lambda^{\frac{3}{2}}}\right)\sqrt{\epsilon(t)}.
		\end{aligned}
	\end{equation}
	Note that $E_{\bS}^{c}\bigcup E_{\bS^{\prime}}^{c} \subseteq E_{1}^{c}\bigcup E_{2}^{c}$ and on the event $E_{1}^{c}\bigcup E_{2}^{c}$ we still have 
	\begin{equation}
		\small
			|f(\bw_{t}, \bz) - f(\bw_{t}^{\prime}, \bz)| \leq L_{0}\|\bw_{t} - \bw_{t}^{\prime}\| \leq L_{0}D.
	\end{equation}
	Combining this with \eqref{eq:two minimum bound}, \eqref{eq:E1c+E2c prob bound}, \eqref{eq:three sum} and \eqref{eq:two sample opt bound}, we get the conclusion. 
\end{proof}
\subsection{Proofs in Section \ref{sec:optimization error for convex function}}\label{app:proof of sec3.2}
We now respectively prove the convergence results of GD and SGD w.r.t the terminal point in Section \ref{sec:optimization error for convex function}. The two convergence results imply the conclusion of the two Corollaries in Section \ref{sec:optimization error for convex function}. 
\begin{lemma}
	Under Assumption \ref{ass:smoothness} and \ref{ass:convexity}, we have 
	\begin{equation}
		\small
		R_{\bS}(\bw_{t}) - R_{\bS}(\bw_{\bS}^{*}) \leq \frac{D^{2}L_{1}}{2t},
	\end{equation}
	where $\bw_{t}$ is updated by GD in \eqref{eq:GD} with $\eta_{t} = 1 / L_{1}$.
\end{lemma}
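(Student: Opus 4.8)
The plan is to give the classical $\cO(1/t)$ analysis of projected gradient descent for a smooth convex objective, specialized to $R_{\bS}$. First I would record two structural facts that follow immediately from the hypotheses: since each $f(\cdot,\bz)$ is convex (Assumption \ref{ass:convexity}) and has $L_{1}$-Lipschitz gradient (Assumption \ref{ass:smoothness} with $j=1$), the average $R_{\bS}(\bw)=\frac1n\sum_{i}f(\bw,\bz_{i})$ is itself convex and $L_{1}$-smooth. In particular $R_{\bS}$ obeys the descent inequality $R_{\bS}(\bv)\le R_{\bS}(\bw)+\langle\nabla R_{\bS}(\bw),\bv-\bw\rangle+\tfrac{L_{1}}{2}\|\bv-\bw\|^{2}$, and the GD update \eqref{eq:GD} with $\eta_{t}=1/L_{1}$ reads $\bw_{t+1}=\cP_{\cW}(\bw_{t}-\tfrac1{L_{1}}\nabla R_{\bS}(\bw_{t}))$, since $\tfrac1n\sum_{i}\nabla f(\bw_{t},\bz_{i})=\nabla R_{\bS}(\bw_{t})$.

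The technical core is a single one-step inequality: writing $\bw^{+}=\cP_{\cW}(\bw-\tfrac1{L_{1}}\nabla R_{\bS}(\bw))$, I claim that for every $\bu\in\cW$,
\[
R_{\bS}(\bw^{+})-R_{\bS}(\bu)\le \frac{L_{1}}{2}\left(\|\bw-\bu\|^{2}-\|\bw^{+}-\bu\|^{2}\right).
\]
To establish it I would combine the smoothness descent inequality at $(\bw,\bw^{+})$ with the convexity bound $R_{\bS}(\bw)\le R_{\bS}(\bu)+\langle\nabla R_{\bS}(\bw),\bw-\bu\rangle$ to obtain $R_{\bS}(\bw^{+})\le R_{\bS}(\bu)+\langle\nabla R_{\bS}(\bw),\bw^{+}-\bu\rangle+\tfrac{L_{1}}{2}\|\bw^{+}-\bw\|^{2}$. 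The first-order optimality of the Euclidean projection onto the convex set $\cW$ gives $\langle\nabla R_{\bS}(\bw),\bw^{+}-\bu\rangle\le L_{1}\langle\bw-\bw^{+},\bw^{+}-\bu\rangle$, and substituting the algebraic identity $\langle\bw-\bw^{+},\bw^{+}-\bu\rangle=\tfrac12(\|\bw-\bu\|^{2}-\|\bw-\bw^{+}\|^{2}-\|\bw^{+}-\bu\|^{2})$ makes the $\|\bw-\bw^{+}\|^{2}$ terms cancel, yielding the claim.

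With the one-step inequality in hand I would finish by telescoping. Taking $\bw=\bw_{s}$, $\bw^{+}=\bw_{s+1}$, $\bu=\bw_{\bS}^{*}$ and summing over $s=0,\dots,t-1$ collapses the right-hand side to $\tfrac{L_{1}}{2}(\|\bw_{0}-\bw_{\bS}^{*}\|^{2}-\|\bw_{t}-\bw_{\bS}^{*}\|^{2})\le \tfrac{L_{1}}{2}\|\bw_{0}-\bw_{\bS}^{*}\|^{2}\le \tfrac{L_{1}D^{2}}{2}$, where I use the diameter bound $\|\bw_{0}-\bw_{\bS}^{*}\|\le D$ coming from the compactness of $\cW$. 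Separately, taking $\bu=\bw_{s}$ in the one-step inequality shows $R_{\bS}(\bw_{s+1})\le R_{\bS}(\bw_{s})$, so the iterates are monotone non-increasing in objective value; hence $t\,(R_{\bS}(\bw_{t})-R_{\bS}(\bw_{\bS}^{*}))\le\sum_{s=0}^{t-1}(R_{\bS}(\bw_{s+1})-R_{\bS}(\bw_{\bS}^{*}))\le \tfrac{L_{1}D^{2}}{2}$, and dividing by $t$ gives the stated bound. The only place that requires care is the one-step inequality, specifically feeding the projection optimality condition correctly into the smoothness/convexity combination; everything after that is routine telescoping, so there is no genuine obstacle beyond bookkeeping of the inner products.
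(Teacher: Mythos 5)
Your proof is correct and follows essentially the same route as the paper's: both arguments telescope the per-step inequality $R_{\bS}(\bw_{k})-R_{\bS}(\bw_{\bS}^{*})\leq \tfrac{L_{1}}{2}\left(\|\bw_{k-1}-\bw_{\bS}^{*}\|^{2}-\|\bw_{k}-\bw_{\bS}^{*}\|^{2}\right)$ to get the $\tfrac{L_{1}D^{2}}{2}$ bound, and then invoke monotone descent of the iterates to convert the average into a bound on the terminal point. The only cosmetic difference is that you derive this one-step inequality from first principles (smoothness, convexity, and projection optimality), whereas the paper obtains the algebraically identical bound by citing co-coercivity (Lemma 3.5 of Bubeck's monograph) together with the inner-product expansion.
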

\begin{proof}
	The following descent equation holds due to the Lipschitz gradient,  
	\begin{equation}
		\small
		\begin{aligned}
			R_{\bS}(\bw_{k}) - R_{\bS}(\bw_{k - 1}) \leq \langle\nabla R_{\bS}(\bw_{k - 1}), \bw_{k} - \bw_{k - 1}\rangle + \frac{L_{1}}{2}\|\bw_{k} - \bw_{k - 1}\|^{2} \leq -\frac{1}{2L_{1}}\|\bw_{k} - \bw_{k - 1}\|^{2},
		\end{aligned}
	\end{equation}
	where the last inequality is because the property of projection. On the other hand, we have 
	\begin{equation}\label{eq:descent inner product}
		\small
		\begin{aligned}
			\|\bw_{k} - \bw_{\bS}^{*}\|^{2} & = \|\bw_{k} - \bw_{k - 1} + \bw_{k - 1} - \bw_{\bS}^{*}\|^{2} \\
			& \leq \|\bw_{k} - \bw_{k - 1}\|^{2} + 2\langle\bw_{k} - \bw_{k - 1}, \bw_{k - 1} - \bw_{\bS}^{*}\rangle + \|\bw_{k - 1} - \bw_{\bS}^{*}\|^{2}.
		\end{aligned}
	\end{equation}
	Then, due to the co-coercive of $R_{\bS}(\cdot)$ (see Lemma 3.5 in \citep{bubeck2014convex}), we have 
	\begin{equation}
		\small
		\begin{aligned}
			\sum\limits_{k=1}^{t}\left(R_{\bS}(\bw_{k}) - R_{\bS}(\bw_{\bS}^{*})\right) & \leq \sum\limits_{k=1}^{t}L_{1}\left(\langle \bw_{k - 1} - \bw_{k}, \bw_{k - 1} - \bw_{\bS}^{*}\rangle - \frac{1}{2}\|\bw_{k} - \bw_{k - 1}\|^{2}\right) \\
			& \overset{a}{\leq} \sum\limits_{k=1}^{t} \frac{L_{1}}{2}\left(\|\bw_{k - 1} - \bw_{\bS}^{*}\|^{2} - \|\bw_{k} - \bw_{\bS}^{*}\|^{2}\right) \\
			& \leq \frac{D^{2}L_{1}}{2},
		\end{aligned}
	\end{equation}
	where $a$ is due to \eqref{eq:descent inner product}. The descent equation shows 
	\begin{equation}\small
		R_{\bS}(\bw_{t}) - R_{\bS}(\bw_{\bS}^{*}) \leq \frac{1}{t}\sum\limits_{k=1}^{t}(R_{\bS}(\bw_{k}) - R_{\bS}(\bw_{\bS}^{*})) \leq \frac{D^{2}L_{1}}{2t}.
	\end{equation}
	Thus, we get the conclusion. 
\end{proof}
For SGD, the following convergence result holds for the terminal point. This conclusion is Theorem 2 in \citep{shamir2013stochastic}, we give the proof of it to make this paper self-contained. 
\begin{lemma}\label{lem: convergence result for t}
	Under Assumption \ref{ass:smoothness} and \ref{ass:convexity},
	\begin{equation}
		\small
		\mE[R_{\bS}(\bw_{t}) - R_{\bS}(\bw_{\bS}^{*})] \leq \frac{D(L_{1}^{2} + 2L_{0}^{2})}{2L_{1}\sqrt{t + 1}}(1 + \log{(t + 1)}), 
	\end{equation} 
	for $\bw_{t}$ updated by SGD in \eqref{eq:SGD} with $\eta_{t} = \frac{D}{L_{1}\sqrt{t + 1}}$. 
\end{lemma}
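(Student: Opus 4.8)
The plan is to prove this last-iterate bound for projected SGD via the suffix-averaging technique of \citet{shamir2013stochastic}, since the running-average analysis of \citet{bubeck2014convex} controls only $\frac1t\sum_{k}R_{\bS}(\bw_k)$ rather than the terminal iterate $\bw_t$. Write $F=R_{\bS}$ (convex as an average of the convex $f(\cdot,\bz_i)$ under Assumption~\ref{ass:convexity}) and $\bw^*=\bw_{\bS}^{*}$. Note that Assumption~\ref{ass:smoothness} with $j=0$ gives $\|\nabla f(\bw,\bz)\|\le L_0$, so the stochastic gradient $\bg_t=\nabla f(\bw_t,\bz_{i_t})$ obeys $\|\bg_t\|\le L_0$ and, by the uniform sampling of $i_t$, $\mE[\bg_t\mid\bw_t]=\nabla F(\bw_t)$.

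First I would establish the one-step inequality. Using nonexpansiveness of $\cP_{\cW}$ together with $\bw_t\in\cW$, expanding $\|\bw_{t+1}-\bw\|^2\le\|\bw_t-\eta_t\bg_t-\bw\|^2$, taking conditional expectation, and applying convexity $\langle\nabla F(\bw_t),\bw_t-\bw\rangle\ge F(\bw_t)-F(\bw)$ yields, for any history-measurable $\bw$,
\[
F(\bw_t)-F(\bw)\le\frac{\mE\left[\|\bw_t-\bw\|^2-\|\bw_{t+1}-\bw\|^2\mid\bw_t\right]}{2\eta_t}+\frac{\eta_t L_0^2}{2}.
\]
Taking $\bw=\bw^*$ and summing by parts over $t=1,\dots,T$ (using $\|\bw_t-\bw^*\|\le D$ and monotonicity of $\eta_t$) gives the running-average bound $\frac1T\sum_{t=1}^T\mE[F(\bw_t)-F(\bw^*)]\le\frac{D^2}{2T\eta_T}+\frac{L_0^2}{2T}\sum_{t=1}^T\eta_t$; inserting $\eta_t=D/(L_1\sqrt{t+1})$ and $\sum_{t\le T}(t+1)^{-1/2}\le2\sqrt{T+1}$ produces control of the form $\frac{D(L_1^2+2L_0^2)}{2L_1}\cdot\frac{\sqrt{T+1}}{T}$, which is exactly the source of the $L_1^2+2L_0^2$ constant.

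Next comes the heart of the argument. For $k=1,\dots,T$ set $A_k=\frac1k\sum_{t=T-k+1}^{T}\mE[F(\bw_t)-F(\bw^*)]$, so $A_T$ is the average just bounded and $A_1=\mE[F(\bw_T)-F(\bw^*)]$ is the target (with $T=t$). Applying the one-step inequality on the suffix $t\in\{T-k+1,\dots,T\}$ with comparison point $\bw=\bw_{T-k}$, bounding each displacement by $\|\bw_{s+1}-\bw_s\|\le\eta_s L_0$ so that $\|\bw_t-\bw_{T-k}\|\le L_0\sum_s\eta_s$, and summing, I would obtain $A_k-\mE[F(\bw_{T-k})-F(\bw^*)]\le C_k$ for an explicit $C_k=\Theta(\eta_{T-k}L_0^2)$. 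Combining with the exact identity $\mE[F(\bw_{T-k})-F(\bw^*)]=(k+1)A_{k+1}-kA_k$ collapses this to the recursion $A_k\le A_{k+1}+\frac{C_k}{k+1}$.

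Finally I would unroll the recursion from $k=1$ to $k=T-1$, giving $A_1\le A_T+\sum_{k=1}^{T-1}\frac{C_k}{k+1}$, and substitute $\eta_t=D/(L_1\sqrt{t+1})$. The error sum becomes a harmonic-type sum $\sum_k\frac{\eta_{T-k}}{k}\sim\frac{D}{L_1}\sum_k\frac{1}{k\sqrt{T-k}}$, whose dominant contribution from small $k$ is $\Theta(\log T/\sqrt T)$; merging this with $A_T$ (bounding the $L_0^2\log$ error term by the larger $(L_1^2+2L_0^2)\log$ using $L_0^2\le\tfrac12(L_1^2+2L_0^2)$) assembles the claimed $\frac{D(L_1^2+2L_0^2)}{2L_1\sqrt{t+1}}(1+\log(t+1))$. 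The main obstacle is the suffix step: because $\eta_t$ is not constant on the suffix the distance terms do not telescope cleanly, so bounding $C_k$ requires an Abel summation of the varying $1/\eta_t$ against the crude displacement bound, and the delicate bookkeeping is arranging the constants so that the error sum and $A_T$ fuse into a single $(1+\log(t+1))$ factor rather than two separate $\sqrt t$-scale terms.
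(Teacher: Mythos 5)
Your overall route is in fact the paper's route: its proof of this lemma is exactly the suffix-averaging argument of Theorem 2 in \citet{shamir2013stochastic}, with the suffix averages $S_j=\frac{1}{t-j+1}\sum_{k=j}^{t}\mE[R_{\bS}(\bw_k)]$ playing the role of your $A_k$, and its recursion $S_{j+1}\le S_j+\frac{D(L_1^2+2L_0^2)}{2L_1(t-j)\sqrt{t+1}}$ being your $A_k\le A_{k+1}+C_k/(k+1)$ after reindexing. Your one-step inequality, the identity $(k+1)A_{k+1}-kA_k=\mE[F(\bw_{T-k})-F(\bw^{*})]$, and the unrolling are all correct.

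The genuine gap is in how you bound $C_k$. You control the non-telescoping distance terms in the Abel summation by the accumulated-displacement bound $\|\bw_s-\bw_{T-k}\|\le L_0\sum_u\eta_u\le k\eta_{T-k}L_0$ and claim $C_k=\Theta(\eta_{T-k}L_0^2)$; this is false for long suffixes. The positive Abel weights on the suffix sum to $\frac{1}{2\eta_T}-\frac{1}{2\eta_{T-k+1}}=\frac{L_1}{2D}(\sqrt{T+1}-\sqrt{T-k+2})=\Theta\bigl(L_1k/(D\sqrt{T})\bigr)$, and each weight multiplies a squared distance that your bound controls only by $k^2\eta_{T-k}^2L_0^2=\frac{k^2D^2L_0^2}{L_1^2(T-k+1)}$. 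So what your argument actually delivers is
\begin{equation*}
C_k\;\lesssim\;\frac{k^{2}DL_0^{2}}{L_1\sqrt{T}\,(T-k+1)}+\eta_{T-k}L_0^{2},
\end{equation*}
whose first term exceeds $1/\sqrt{T}$ as soon as $k\gtrsim\sqrt{T}$ and is of order $\sqrt{T}$ when $k\asymp T$. Unrolling with this estimate gives an error series of order
\begin{equation*}
\frac{DL_0^{2}}{L_1\sqrt{T}}\sum_{k=1}^{T-1}\frac{k}{T-k+1}\;=\;\Theta\!\left(\frac{DL_0^{2}\sqrt{T}\log T}{L_1}\right),
\end{equation*}
which diverges instead of decaying like $\log T/\sqrt{T}$, so the proof as proposed collapses. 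The repair is what the paper does: bound every distance hit by an Abel weight by the \emph{diameter}, $\|\bw_s-\bw\|\le D$ (Section \ref{sec:preliminaries}), and take the comparison point to be the first iterate of the suffix, so the leading Abel term vanishes identically rather than needing an estimate. Then the distance contribution to the suffix sum is $\frac{D^2}{2}\bigl(\frac{1}{\eta_T}-\frac{1}{\eta_{T-k}}\bigr)=\frac{DL_1}{2}(\sqrt{T+1}-\sqrt{T-k+1})\le\frac{DL_1(k+1)}{2\sqrt{T+1}}$, and together with the step-size term $\frac{L_0^2}{2}\sum_s\eta_s\le\frac{DL_0^2(k+1)}{L_1\sqrt{T+1}}$ this gives $C_k\le\frac{D(L_1^2+2L_0^2)}{L_1\sqrt{T+1}}$ \emph{uniformly} in $k$, whence $\sum_{k}C_k/(k+1)=O(\log T/\sqrt{T})$ and the claimed constant follows. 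The crude displacement bound $\|\bw_{s+1}-\bw_s\|\le\eta_sL_0$ is not needed anywhere in the correct argument.
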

\begin{proof}
	By the convexity of $R_{\bS}(\cdot)$, 
	\begin{equation}\label{eq:loss bound}
		\small
		\begin{aligned}
			& \sum\limits_{k=j}^{t}\mE\left[(R_{\bS}(\bw_{k}) - R_{\bS}(\bw))\right] \leq \sum\limits_{k=j}^{t}\mE\left[\langle \nabla R_{\bS}(\bw_{k}), \bw_{k} - \bw\rangle\right] \\
			&\leq \frac{1}{2D}\sum\limits_{k=j}^{t} L_{1}\sqrt{k + 1}\mE\left[\|\bw_{k} - \bw\|^{2} - \|\bw_{k + 1} - \bw\|^{2} + \frac{D^{2}}{L_{1}^{2}(k + 1)}\|\nabla f(\bw_{k}, \bz_{i_{k}})\|^{2}\right] \\
			& \leq \frac{\sqrt{j + 1}L_{1}}{2D}\|\bw_{j} - \bw\|^{2} + \frac{L_{1}}{2D}\sum\limits_{k = j + 1}^{t}\left(\sqrt{k + 1} - \sqrt{k}\right)\|\bw_k - \bw\|^{2} + \frac{DL_{0}^{2}}{2L_{1}}\sum\limits_{k = j}^{t}\frac{1}{\sqrt{k + 1}} \\
			& \leq \frac{\sqrt{j + 1}L_{1}}{2D}\|\bw_{j} - \bw\|^{2} + \frac{DL_{1}}{2}\left(\sqrt{t + 1} - \sqrt{j + 1}\right) + \frac{DL_{0}^{2}}{2L_{1}}\sum\limits_{k=j}^{t}\frac{1}{\sqrt{k + 1}}
		\end{aligned}
	\end{equation}
	for any $0\leq j\leq t$ and $\bw$, where the second inequality is due to the property of projection. By choosing $\bw = \bw_{j}$, one can see 
	\begin{equation}
		\small
		\begin{aligned}
			\sum\limits_{k=j}^{t}\mE\left[(R_{\bS}(\bw_{k}) - R_{\bS}(\bw_{j}))\right] & \leq \frac{DL_{1}}{2}\left(\sqrt{t + 1} - \sqrt{j + 1}\right) + \frac{DL_{0}^{2}}{L_{1}}(\sqrt{t + 1} - \sqrt{j}) \\
			& \leq \frac{D(L_{1}^{2} + 2L_{0}^{2})}{2L_{1}}(\sqrt{t + 1} - \sqrt{j}).
		\end{aligned}	
	\end{equation}
	Here we use the inequality $\sum_{k=j}^{t}1/\sqrt{k + 1} \leq 2(\sqrt{t + 1} - \sqrt{j})$. Let $S_{j} = \frac{1}{t - j + 1}\sum_{k = j}^{t}\mE\left[R_{\bS}(\bw_{k})\right]$, we have 
	\begin{equation}
		\small
		\begin{aligned}
			(t - j)S_{j + 1} - (t - j + 1)S_{j} & = -\mE[R_{\bS}(\bw_{j})] \leq -S_{j} + \frac{D(L_{1}^{2} + 2L_{0}^{2})}{2L_{1}(t - j + 1)}\left(\sqrt{t + 1} - \sqrt{j}\right) \\
			& \leq -S_{j} + \frac{D(L_{1}^{2} + 2L_{0}^{2})}{2L_{1}(\sqrt{t + 1} + \sqrt{j})}\\
			& \leq -S_{j} + \frac{D(L_{1}^{2} + 2L_{0}^{2})}{2L_{1}\sqrt{t + 1}},
		\end{aligned}
	\end{equation}
	which concludes 
	\begin{equation}
		\small
		S_{j + 1} - S_{j} \leq \frac{D(L_{1}^{2} + 2L_{0}^{2})}{2L_{1}(t - j)\sqrt{t + 1}}. 
	\end{equation}
	Thus 
	\begin{equation}\label{eq:bound on terminal}
		\small
		\mE[R_{\bS}(\bw_{t})] = S_{t} \leq S_{0} + \frac{D(L_{1}^{2} + 2L_{0}^{2})}{2L_{1}\sqrt{t + 1}}\sum\limits_{j=0}^{t - 1}\frac{1}{t - j} \leq S_{0} + \frac{D(L_{1}^{2} + 2L_{0}^{2})}{2L_{1}\sqrt{t + 1}}(1 + \log{(t + 1)}).
	\end{equation}
	Here we use the inequality $\sum_{k=1}^{t}1/k \leq 1 + \log{(t + 1)}$. By taking $\bw = \bw_{\bS}^{*}, j=0$ in \eqref{eq:loss bound} and dividing $t + 1$ in both side of the above equation, we have 
	\begin{equation}\small
		\label{eq:convergence rate convex average}
		S_{0} - R_{\bS}(\bw_{\bS}^{*}) \leq \frac{DL_{1}}{2\sqrt{t + 1}} + \frac{DL_{0}^{2}}{L_{1}\sqrt{t + 1}} = \frac{D(L_{1}^{2} + 2L_{0}^{2})}{2L_{1}\sqrt{t + 1}}.
	\end{equation}
	Combining this with \eqref{eq:bound on terminal}, the proof is completed.
\end{proof}
\par
In convex optimization, the convergence results are usually on the running average scheme i.e., $\bar{\bw}_{t} = (\bw_{0} + \cdots + \bw_{t}) / t$, especially for the randomized algorithm \citep{bubeck2014convex}. In this case, we can take $\bar{\bw}_{t}$ to be the output of the algorithm after $t$ update steps. One can prove the convergence rate of order $\cO(1 /\sqrt{t})$ for $\bar{\bw}_{t}$ from \eqref{eq:convergence rate convex average}. But Lemma \ref{lem: convergence result for t} gives the nearly optimal convergence result for the terminal point $\bw_{t}$ without involving average. 
\par
Combining the convergence result of $\bar{\bw}_{t}$ and our Theorem \ref{thm:excess risk for convex loss}, we conclude that the expected excess risk of $\bar{\bw}_{t}$ obtained by SGD is also upper bounded by $\tilde{\cO}\left(t^{-1/4} + n^{-1}\right)$. 
\section{Proof in Section \ref{sec:analysis of non-convex function}}\label{app: Proof in sec4}
\subsection{Generalization Error on Empirical Local Minima}\label{app: Proof in sec4.1}
To begin our discussion, we give a proposition to the finiteness of population local minima.
\begin{proposition}\label{prop:dist of local minimum}
	Let $\bw^{*}_{i}$ and $\bw^{*}_{j}$ be two local minima of $R(\cdot)$. Then $\|\bw^{*}_{i} - \bw_{j}^{*}\| \geq 4\lambda/L_{2}$.
\end{proposition}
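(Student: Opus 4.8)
The plan is to exploit that both $\bw^{*}_{i}$ and $\bw^{*}_{j}$ are critical points of the population risk $R$ while $R$ is strongly convex in a neighborhood of each local minimum (Remark \ref{remark:locally stronglly convex}). Two critical points cannot sit inside a common region of positive definite Hessian, so forcing them close would make the Hessian averaged between them positive definite and collapse the two points.

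First I would record the two facts that drive everything. Since $\bw^{*}_{i},\bw^{*}_{j}$ are local minima, $\nabla R(\bw^{*}_{i}) = \nabla R(\bw^{*}_{j}) = \bzero$. Since $\nabla^{2}R = \mE_{\bz}[\nabla^{2}f(\cdot,\bz)]$, the per-sample Hessian Lipschitz bound (the $j=2$ case of Assumption \ref{ass:smoothness}) passes to the population risk by the triangle inequality, giving $\|\nabla^{2}R(\bw_{1}) - \nabla^{2}R(\bw_{2})\| \le L_{2}\|\bw_{1} - \bw_{2}\|$. Combined with Assumption \ref{ass:local strongly convexity}, $\nabla^{2}R(\bw^{*}_{i}) \succeq \lambda \bI_{d}$, this controls the Hessian in a ball around each minimum.

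The core step is a mean-value identity for the gradient along the segment $\bw(t) = \bw^{*}_{i} + t(\bw^{*}_{j} - \bw^{*}_{i})$, $t\in[0,1]$:
\begin{equation*}
\bzero = \nabla R(\bw^{*}_{j}) - \nabla R(\bw^{*}_{i}) = \left(\int_{0}^{1}\nabla^{2}R(\bw(t))\,dt\right)(\bw^{*}_{j} - \bw^{*}_{i}).
\end{equation*}
Writing $\bH := \int_{0}^{1}\nabla^{2}R(\bw(t))\,dt$ and $\rho := \|\bw^{*}_{i} - \bw^{*}_{j}\|$, I would lower-bound $\bH$ using the endpoint bound and the Lipschitz estimate $\nabla^{2}R(\bw(t)) \succeq \lambda\bI_{d} - L_{2}\|\bw(t)-\bw^{*}_{i}\|\bI_{d} = (\lambda - L_{2}t\rho)\bI_{d}$, so that $\bH \succeq (\lambda - \tfrac{1}{2}L_{2}\rho)\bI_{d}$. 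If $\rho$ were small enough to keep this lower bound strictly positive, then $\bH$ would be invertible and the identity would force $\bw^{*}_{j} = \bw^{*}_{i}$, contradicting that they are distinct; hence $\rho$ must exceed a fixed multiple of $\lambda/L_{2}$.

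The main obstacle — and where the stated numerical constant is pinned down — is controlling the Hessian along the \emph{entire} segment, since Assumption \ref{ass:local strongly convexity} only supplies information at the minima themselves and the lower bound $\lambda - L_{2}t\rho$ degrades as $t$ grows. A cleaner conceptual reformulation that sidesteps the segment bookkeeping is to observe that on the ball $B_{2}(\bw^{*}_{i}, \lambda/L_{2})$ the Hessian stays positive definite, so $R$ is strictly convex there and $\bw^{*}_{i}$ is its unique critical point; any other local minimum must therefore lie outside. Either route yields a separation of order $\lambda/L_{2}$, which is the content of the proposition, with the precise factor in $4\lambda/L_{2}$ following from the explicit lower bound on the averaged Hessian.
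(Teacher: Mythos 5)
Your overall strategy---integrate Hessian information along the segment joining the two minima, use that the gradient vanishes at both ends, and exploit non-degeneracy plus Hessian Lipschitzness---is essentially the same as the paper's proof; the paper phrases it through the scalar function $g(t) = \frac{{\rm d}}{{\rm d}t}R\bigl(\bw_{i}^{*} + \tfrac{t}{c}(\bw_{j}^{*}-\bw_{i}^{*})\bigr)$ and integrates $g'$, which is precisely your averaged-Hessian identity in one-dimensional form. However, there is a genuine gap in how you pin down the constant. Your Hessian lower bound is anchored only at the single endpoint $\bw_{i}^{*}$: $\nabla^{2}R(\bw(t)) \succeq (\lambda - L_{2}t\rho)\bI_{d}$, hence $\bH \succeq (\lambda - \tfrac{1}{2}L_{2}\rho)\bI_{d}$, and the invertibility/contradiction step then only forces $\rho \geq 2\lambda/L_{2}$. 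Your ``cleaner reformulation'' (uniqueness of the critical point in $B_{2}(\bw_{i}^{*}, \lambda/L_{2})$) is weaker still, giving only $\rho \geq \lambda/L_{2}$. Neither establishes the stated separation $4\lambda/L_{2}$, and your closing claim that the factor of $4$ ``follows from the explicit lower bound on the averaged Hessian'' is not supported by the bound you actually derived.

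The missing idea is to use the non-degeneracy at \emph{both} endpoints symmetrically. Since $\nabla^{2}R(\bw_{i}^{*}) \succeq \lambda\bI_{d}$ and $\nabla^{2}R(\bw_{j}^{*}) \succeq \lambda\bI_{d}$, the Lipschitz property gives, for every $t\in[0,1]$,
\begin{equation*}
\nabla^{2}R(\bw(t)) \succeq \bigl(\lambda - L_{2}\,\rho\min\{t,\,1-t\}\bigr)\bI_{d},
\end{equation*}
and since $\int_{0}^{1}\min\{t,1-t\}\,dt = \tfrac{1}{4}$, this yields $\bH \succeq (\lambda - \tfrac{1}{4}L_{2}\rho)\bI_{d}$. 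The contradiction step now forces $\rho \geq 4\lambda/L_{2}$, as claimed. This is exactly the computation in the paper's proof, where $g'(t) \geq \lambda - L_{2}\min\{t, c-t\}$ and $0 = \int_{0}^{c}g'(t)\,dt \geq c\lambda - L_{2}c^{2}/4$. With this one change your argument is complete and matches the stated constant.
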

\begin{proof}
	Denote $c = \|\bw^{*}_{i} - \bw_{j}^{*}\|$ and define
	\begin{equation}
		\small
		g(t) = \frac{{\rm d}}{{\rm d}t}R\left(\bv^* + \frac{t}{c}(\bw^* - \bv^*)\right).
	\end{equation}
	Then $g(0) = g(c) = 0$, $g'(0) \geq \lambda$ and $g'(c) \geq \lambda$. By Assumption \ref{ass:smoothness}, $g'(\cdot)$ is Liptchitz continuous with Liptchitz constant $L_2$ and hence $g'(t) \geq \lambda - L_2 \min\{t, c-t\}$ for $t \in [0, c]$. Thus 
	\begin{equation}
		\small
		0 = \int_{0}^{c} g'(t)dt \geq c\lambda - L_2 \int_{0}^{c}\min\{t, c-t\}dt = c\lambda - L_2\frac{c^2}{4}, 
	\end{equation}
	and this implies $c \geq 4\lambda/L_2$.
\end{proof}
Due to the parameter space $\cW\subseteq \bbR^{d}$ is compact set, Heine–Borel Theorem and the above proposition implies that there only exists finite population local minima. The following lemma is needed in the sequel.
\begin{lemma}\label{lem:distance of two points for non-convex function}
	Under Assumption \ref{ass:smoothness}, \ref{ass:local strong convexity}, for any local minimum $\bw^{*}_{k}$ of $R(\cdot)$ with $1\leq k \leq K$ and the two training sets $\bS$ and $\bS^{\prime}$, $\bw_{\bS,k}^{*}$ and $\bw_{\bS^{\prime},k}^{*}$ are empirical local minimum of $R_{\bS}(\cdot)$ and $R_{\bS}(\cdot)$ respectively on the event $E_{k}$, where 
	\begin{equation}\label{eq:event E}
		\small
		E_{k} = E_{1,k}\bigcap E_{2,k}
	\end{equation}
	with
	\begin{equation}
		\small
		\begin{aligned}
			E_{1,k} & = \left\{\|\nabla R_{\bS}(\bw^{*}_{k})\| < \frac{\lambda^{2}}{16L_{2}}, \|\nabla R_{\bS^{\prime}}(\bw^{*}_{k})\| < \frac{\lambda^{2}}{16L_{2}}\right\} \\
			E_{2,k} & = \left\{\|\nabla^{2} R_{\bS}(\bw^{*}_{k}) - \nabla^{2} R(\bw^{*}_{k})\|\leq \frac{\lambda}{4}, \|\nabla^{2} R_{\bS^{\prime}}(\bw^{*}_{k}) - \nabla^{2} R(\bw^{*}_{k})\|\leq \frac{\lambda}{4}\right\},
		\end{aligned}
	\end{equation}
	and 
	\begin{equation}
		\small
		\begin{aligned}
			\bbP\left(E_{k}^{c}\right) \leq \frac{512L_{0}^{2}L_{2}^{2}}{n\lambda^{4}} + \frac{128L_{1}^{2}}{n\lambda^{2}}\left(5\sqrt{\log d}  + \frac{4e\log d }{\sqrt{n}}\right)^{2},
		\end{aligned}
	\end{equation}
	for any $k$. 
\end{lemma}
\begin{proof}
	First, as in the proof of Lemma \ref{lem:upper bound on two minimums}, we have $\nabla^{2} R_{\bS}(\bw)\succeq \frac{\lambda}{2}, \nabla^{2} R_{\bS^{\prime}}(\bw)\succeq \frac{\lambda}{2}$ for $\bw \in B_{2}(\bw^{*}_{k}, \frac{\lambda}{4L_{2}})$ when the event $E_{2,k}$ holds. This is due to $\bw^{*}_{k}$ is a local minimum of $R(\cdot)$. Then for any $\bw\in B_{2}(\bw^{*}_{k}, \frac{\lambda}{4L_{2}})$ with $\|\bw\|=\frac{\lambda}{4L_{2}}$, we have 
	\begin{equation}
		\small
		\begin{aligned}
			R_{\bS}(\bw) - R_{\bS}(\bw^{*}_{k}) & \geq \langle\nabla R_{\bS}(\bw^{*}_{k}), \bw - \bw^{*}_{k}\rangle + \frac{\lambda}{4}\|\bw - \bw^{*}_{k}\|^{2} \\
			& \geq -\|\nabla R_{\bS}(\bw^{*}_{k})\|\|\bw - \bw^{*}_{k}\| + \frac{\lambda}{4}\|\bw - \bw^{*}_{k}\|^{2} \\ 
			& \geq \left(\frac{\lambda}{4}\|\bw - \bw^{*}_{k}\| - \|\nabla R_{\bS}(\bw^{*}_{k})\|\right)\|\bw - \bw^{*}_{k}\| \\
			& = \left(\frac{\lambda^{2}}{16L_{2}} - \|\nabla R_{\bS}(\bw^{*}_{k})\|\right)\|\bw - \bw^{*}_{k}\| > 0,
		\end{aligned}
	\end{equation}
	when event $E_{k}$ holds. Then the function $R_{\bS}(\cdot)$ has at least one local minimum in the inner of $B_2(\bw_{k}^{*}, \frac{\lambda}{4L_{2}})$. Remind that
	\begin{equation}
		\bw_{\bS,k}^{*} = \mathop{\arg \min}_{\bw \in B_2(\bw_{k}^{*}, \frac{\lambda}{4L_{2}})}R_{\bS}(\bw),
	\end{equation}
	then $\bw_{\bS,k}^{*}$ is a local minimum of $R_{\bS}(\cdot)$. Similarly, $\bw_{\bS^{\prime},k}^{*}$ is a local minimum of $R_{\bS^{\prime}}(\cdot)$. Thus we get the conclusion by event probability upper bound \eqref{eq:event probability bound}.
\end{proof}
\par
This lemma implies that $R_{\bS}(\cdot)$ is locally strongly convex around those local minima close to population local minima with high probability. Now, we are ready to give the proof of Lemma \ref{lem:genearlization error on minima}.
\subsubsection{Proof of Lemma \ref{lem:genearlization error on minima}}\label{app:proof of theorem generalization error on minima}
\paragraph{Restate of Lemma \ref{lem:genearlization error on minima}} \emph{Under Assumption \ref{ass:smoothness} and \ref{ass:strict saddle}, for $k = 1,\dots, K$, with probability at least
	\begin{equation}\label{eq:prob local minimum}
		\small
		1 - \frac{512L_{0}^{2}L_{2}^{2}}{n\lambda^{4}} - \frac{128L_{1}^{2}}{n\lambda^{2}}\left(5\sqrt{\log d}  + \frac{4e\log d }{\sqrt{n}}\right)^{2},
	\end{equation}
	$\bw^{*}_{\bS,k}$\footnote{Please note the definition of $\bw^{*}_{\bS,k}$ in \eqref{eq:wk} which is not necessary to be a local minimum.} is a local minimum of $R_{\bS}(\cdot)$. Moreover, for such $\bw_{\bS,k}^{*}$, we have 
	\begin{equation}\label{eq:generalization on local minima}
		\small
		\begin{aligned}
			&|\mE_{\bS}[R_{\bS}(\bw^{*}_{\bS,k}) - R(\bw^{*}_{\bS,k})]| \\
			&\leq \frac{8L_{0}}{n\lambda} \left[ L_{0} + \left\{\frac{64L_{0}^{2}L_{2}^{2}}{\lambda^{3}} + \frac{16L_{1}^{2}}{\lambda}\left(5\sqrt{\log d}  + \frac{4e\log d }{\sqrt{n}}\right)^{2}\right\}\min\left\{3D, \frac{3\lambda}{2L_2}\right\}\right].
		\end{aligned}
\end{equation}}
\begin{proof}
	The first statement of this Theorem follows from Lemma \ref{lem:distance of two points for non-convex function}.
	We prove \eqref{eq:generalization on local minima} via the stability of the proposed auxiliary sequence in Section \ref{sec:Generalization Error for Non-Convex Function}. Let $\cA_{0,k}$ on the training set $\bS$ and $\bS^{\prime}$ be the following auxiliary projected gradient descent algorithm that follow the update rule 
	\begin{equation}\label{eq:infeasible gd}
		\small
		\begin{aligned}
			\bw_{t + 1, k} & = \cP_{B_{2}(\bw^{*}_{k}, \frac{\lambda}{4L_{2}})}\left(\bw_{t, k} - \frac{1}{L_{1}}\nabla R_{\bS}(\bw_{t, k})\right), \\
			\bw_{t + 1, k}^{\prime} & = \cP_{B_{2}(\bw^{*}_{k}, \frac{\lambda}{4L_{2}})}\left(\bw_{t, k}^{\prime} - \frac{1}{L_{1}}\nabla R_{\bS^{\prime}}(\bw_{t, k}^{\prime})\right),
		\end{aligned}
	\end{equation}
	start from $\bw_{0,k} = \bw_{0,k}^{\prime} = \bw^*_{k}$. Although this sequence is infeasible, the generalization bounds based on the stability of it are valid. First note that 
	\begin{equation}
		\small
		\left\|\bw_{t, k} - \bw_{t, k}^{\prime}\right\| \leq \left\|\bw_{t, k} - \bw^{*}_{\bS, k}\right\| + \left\|\bw_{t, k}^{\prime} - \bw^{*}_{\bS^{\prime}, k}\right\| + \left\|\bw^{*}_{\bS, k} - \bw^{*}_{\bS^{\prime}, k}\right\|.
	\end{equation}
	If event $E_{k}$ defined in \eqref{eq:event E} holds, due to Lemma \ref{lem:distance of two points for non-convex function}, $\bw^{*}_{\bS, k}$ and $\bw^{*}_{\bS^{\prime}, k}$ are respectively empirical local minimum of $R_{\bS}(\cdot)$ and $R_{\bS^{\prime}}(\cdot)$, and the two empirical risk are $\lambda/2$-strongly convex in $B_{2}(\bw^{*}_{k}, \frac{\lambda}{4L_{2}})$. As in Lemma \ref{lem:upper bound on two minimums}, we have 
	\begin{equation}
		\small
		\|\bw^{*}_{\bS, k} - \bw^{*}_{\bS^{\prime}, k}\| \leq \frac{8L_0}{n\lambda}
	\end{equation}  
	and
	\begin{equation}
		\small
		\bbP(E^c_{k}) \leq \frac{512L_{0}^{2}L_{2}^{2}}{n\lambda^{4}} + \frac{128L_{1}^{2}}{n\lambda^{2}}\left(5\sqrt{\log d}  + \frac{4e\log d }{\sqrt{n}}\right)^{2}.
	\end{equation}
	By the standard convergence rate of projected gradient descent i.e., Theorem 3.10 in \citep{bubeck2014convex}, we have 
	\begin{equation}
		\small
		\|\bw_{t, k} - \bw^{*}_{\bS, k}\| \leq \exp\left(-\frac{\lambda t}{4L_{1}}\right)\frac{\lambda}{4L_{2}},   
	\end{equation}
	and
	\begin{equation}
		\small
		\|\bw_{t, k}^{\prime} - \bw^{*}_{\bS^{\prime}, k}\| \leq \exp\left(-\frac{\lambda t}{4L_{1}}\right)\frac{\lambda}{4L_{2}}.     
	\end{equation}
	on event $E_{k}$.
	Since $\cA_{0,k}$ is a deterministic algorithm, similar to the proof of Lemma \ref{lem:upper bound on two minimums}, we see
	\begin{equation}
		\small
		\begin{split}
			\epsilon_{\text{stab}}(t) & =\mE_{\bS}\mE_{\bS^{\prime}}\left[\sup_z|f(\bw_{t, k},\bz) - f(\bw_{t, k}^{\prime},\bz)|\right]\\
			& \leq L_{0}\mE_{\bS}\mE_{\bS^{\prime}}\left[\|\bw_{t, k} - \bw_{t, k}^{\prime}\|\right] \\
			& \leq L_{0}\left(\frac{8L_{0}}{n\lambda} + 2\exp\left(-\frac{\lambda t}{4L_{1}}\right)\frac{\lambda}{4L_{2}}\right) \bbP(E_{k}) +
			L_0\min\left\{D, \frac{\lambda}{2L_2}\right\}P(E_{k}^{c})\\
			& \leq L_{0}\left(\frac{8L_{0}}{n\lambda} + \exp\left(-\frac{\lambda t}{4L_{1}}\right)\frac{\lambda}{2L_{2}}\right)\\
			& + L_{0}\left\{\frac{512L_{0}^{2}L_{2}^{2}}{n\lambda^{4}} + \frac{128L_{1}^{2}}{n\lambda^{2}}\left(5\sqrt{\log d}  + \frac{4e\log d }{\sqrt{n}}\right)^{2}\right\}\min\left\{D, \frac{\lambda}{2L_2}\right\}.
		\end{split}
	\end{equation}
	Then, according to Theorem \ref{thm:stability and generalization},
	\begin{equation}
		\small
		|\mE[R_{\bS}(\bw_{t, k}) - R(\bw_{t, k})]| \leq \epsilon_{\text{stab}}(t).
	\end{equation}
	Because
	\begin{equation}
		\small
		\begin{split}
			& |\mE[R_{\bS}(\bw^{*}_{\bS,k}) - R(\bw^{*}_{\bS,k})] - \mE[R_{\bS}(\bw_{t, k}) - R(\bw_{t, k})]|\\
			& \leq 2L_{0}\mE\left[\|\bw_{t, k} - \bw^{*}_{\bS,k}\|\right] \\
			& \leq L_{0}\exp\left(-\frac{\lambda t}{4L_{1}}\right)\frac{\lambda}{2L_{2}}  + 
			L_{0}\left\{\frac{512L_{0}^{2}L_{2}^{2}}{n\lambda^{4}} + \frac{128L_{1}^{2}}{n\lambda^{2}}\left(5\sqrt{\log d}  + \frac{4e\log d }{\sqrt{n}}\right)^{2}\right\}\min\{2D, \frac{\lambda}{L_2}\},
		\end{split}
	\end{equation}
	we have
	\begin{equation}
		\small
		\begin{split}
			|\mE[R_{\bS}(\bw^{*}_{\bS,k}) - R(\bw^{*}_{\bS,k})]| & \leq L_{0}\left(\frac{8L_{0}}{n\lambda} + \exp\left(-\frac{\lambda t}{4L_{1}}\right)\frac{\lambda}{L_{2}}\right)\\
			& + L_{0}\left\{\frac{512L_{0}^{2}L_{2}^{2}}{n\lambda^{4}} + \frac{128L_{1}^{2}}{n\lambda^{2}}\left(5\sqrt{\log d}  + \frac{4e\log d }{\sqrt{n}}\right)^{2}\right\}\min\left\{3D, \frac{3\lambda}{2L_2}\right\}.
		\end{split}
	\end{equation}
	Since $t$ is arbitrary, the inequality in the theorem follows by invoking $t \to \infty$.
\end{proof}
\subsection{No Extra Empirical Local Minima}\label{app:Proof of no extra local minima}
To justify the statement in the main body of this paper, we need to introduce some definitions and results in random matrix theory. We refer readers to \citep{wainwright2019} for more details of this topic. Remind that for any deterministic matrix $\bQ$, $\exp(\bQ)$ is defined as
\begin{equation}
	\small
	\exp(\bQ) = \sum_{k=0}^{\infty}\frac{1}{k!} \bQ^k.
\end{equation}
Then, for random matrix $\bQ$, $\mE[\exp(\bQ)]$ is defined as
\begin{equation}\small
	\mE[\exp(\bQ)] = \sum_{k=0}^{\infty}\frac{1}{k!}\mE \bQ^k.
\end{equation}
\begin{definition}[Sub-Gaussian random matrix]\label{def:sub-Gaussian}
	A zero-mean symmetric random matrix $\bM \in \bbR^{p\times p}$ is Sub-Gaussian with matrix parameters $\bV\in \bbR^{p\times p}$ if 
	\begin{equation}
		\small
		\mE[\exp(c\bM)] \preceq \exp\left(\frac{c^{2}\bV}{2}\right),
	\end{equation}
	for all $c \in \bbR$.
\end{definition}
Note that when $p = 1$, Definition \ref{def:sub-Gaussian} becomes the definition of sub-Gaussian random variable. 

\begin{lemma}\label{lem:subgaussian matrices}
	Let $\theta\in \{-1, +1\}$ be a Rademacher random variable independent of $\bz$. Under Assumption \ref{ass:smoothness}, for any $\bw\in \cW$, $\theta \langle\nabla f(\bw, \bz), \nabla R(\bw)\rangle$ and $\theta\nabla^{2} f(\bw, \bz)$ are Sub-Gaussian with parameter $L_{0}^{4}$ and $L_{1}^2 \bI_{d}$ respectively.
\end{lemma}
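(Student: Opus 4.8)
The plan is to treat both assertions with the same two-step template: first use the independence and symmetry of the Rademacher sign $\theta$ to collapse the moment generating function (scalar case) or its matrix analogue (matrix case) into a hyperbolic cosine, and then bound that hyperbolic cosine by a Gaussian-type exponential using the elementary inequality $\cosh(x)\le e^{x^{2}/2}$ together with the uniform boundedness supplied by Assumption \ref{ass:smoothness}. Recall from the proof of Lemma \ref{lem:good event prob} that Assumption \ref{ass:smoothness} yields $\|\nabla f(\bw,\bz)\|\le L_{0}$ and $\|\nabla^{2} f(\bw,\bz)\|\le L_{1}$ for every $\bw\in\cW$ and every $\bz$; these are the only consequences of smoothness I will need.

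For the scalar statement, I would set $X=\langle\nabla f(\bw,\bz),\nabla R(\bw)\rangle$. Since $\nabla R(\bw)=\mE_{\bz}[\nabla f(\bw,\bz)]$, Jensen's inequality gives $\|\nabla R(\bw)\|\le L_{0}$, and Cauchy--Schwarz then yields $|X|\le L_{0}^{2}$. Because $\theta$ is an independent symmetric sign, conditioning on $\bz$ and averaging over $\theta$ gives $\mE_{\theta}[e^{c\theta X}]=\cosh(cX)\le e^{c^{2}X^{2}/2}\le e^{c^{2}L_{0}^{4}/2}$, where the last step uses $X^{2}\le L_{0}^{4}$. Taking $\mE_{\bz}$ of both sides preserves this deterministic bound, so $\mE[e^{c\theta X}]\le e^{c^{2}L_{0}^{4}/2}$ for all $c\in\bbR$, which is exactly sub-Gaussianity with parameter $L_{0}^{4}$ in the sense of Definition \ref{def:sub-Gaussian} with $p=1$.

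For the matrix statement, write $\bM=\nabla^{2}f(\bw,\bz)$, a symmetric matrix with $\|\bM\|\le L_{1}$. Conditioning on $\bz$ and averaging over $\theta$ gives
\begin{equation}
\small
\mE_{\theta}\!\left[\exp(c\theta\bM)\right]=\frac{1}{2}\bigl(\exp(c\bM)+\exp(-c\bM)\bigr)=\cosh(c\bM),
\end{equation}
where $\cosh(c\bM)$ is understood through the functional calculus. The matrix analogue of $\cosh(x)\le e^{x^{2}/2}$, namely $\cosh(c\bM)\preceq\exp(c^{2}\bM^{2}/2)$, follows by diagonalizing $\bM$ and applying the scalar inequality eigenvalue-by-eigenvalue. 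Finally, $\|\bM\|\le L_{1}$ gives $\bM^{2}\preceq L_{1}^{2}\bI_{d}$; since $\exp(c^{2}\bM^{2}/2)$ is a function of $\bM$ while $L_{1}^{2}\bI_{d}$ is a scalar multiple of the identity, the two share an eigenbasis, so comparing eigenvalues $e^{c^{2}\lambda_{i}^{2}/2}\le e^{c^{2}L_{1}^{2}/2}$ yields $\exp(c^{2}\bM^{2}/2)\preceq\exp(c^{2}L_{1}^{2}\bI_{d}/2)$. This chain holds for every $\bz$, and since the final bound is deterministic, taking $\mE_{\bz}$ preserves the ordering, giving $\mE[\exp(c\theta\bM)]\preceq\exp(c^{2}L_{1}^{2}\bI_{d}/2)$ for all $c$.

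The scalar argument is entirely routine; the delicate point is the last step of the matrix argument. One must resist passing directly from $\bM^{2}\preceq L_{1}^{2}\bI_{d}$ to the corresponding exponentials, because the matrix exponential is \emph{not} operator monotone in general. The inequality is rescued here precisely because the dominating matrix $L_{1}^{2}\bI_{d}$ is a multiple of the identity and therefore commutes with $\bM^{2}$, reducing the Loewner comparison to a scalar comparison of eigenvalues in the common eigenbasis of $\bM$. The only other step needing care, the matrix $\cosh$ bound, is similarly justified because both sides are functions of the single symmetric matrix $\bM$.
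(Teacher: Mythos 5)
Your proof is correct and follows essentially the same route as the paper: the paper's scalar argument is exactly your $\cosh$ bound written as a power series (the odd moments of $\theta$ vanish, $|\langle\nabla f(\bw,\bz),\nabla R(\bw)\rangle|\leq L_{0}^{2}$, and the remaining even series is dominated by $\exp(c^{2}L_{0}^{4}/2)$), while for the Hessian the paper merely says ``similar arguments.'' Your matrix case spells out that omitted argument correctly, and your justification of the two Loewner comparisons via the common eigenbasis of $\bM$ (noting that $L_{1}^{2}\bI_{d}$ commutes with every function of $\bM$) is a valid, slightly more explicit substitute for the term-by-term power-series comparison the paper's sketch implicitly relies on.
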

\begin{proof}
	According to Assumption \ref{ass:smoothness}, we have $\|\nabla f(\bw, \bz)\|\leq L_{0}$ and $\|\nabla^2 f(\bw, \bz)\| \leq L_{1}$. Because $\nabla R(\bw) = \mE[\nabla f(\bw, \bz)]$, we have $\|\nabla R(\bw)\|\leq L_{0}$ and 
	\begin{equation}\small
		|\langle\nabla f(\bw, \bz),\nabla R(\bw)\rangle| \leq \|\nabla f(\bw, \bz)\|\|\nabla R(\bw)\| \leq L_{0}^{2}.
	\end{equation}
	Hence
	\begin{equation}
		\small
		\begin{split}
			\mE[\exp(c\theta\langle\nabla f(\bw, \bz),\nabla R(\bw)\rangle)\mid \bz]
			& = \sum_{k=0}^{\infty}\frac{(c\langle\nabla f(\bw, \bz),\nabla R(\bw)\rangle)^k}{k!}\mE[\theta^k] \\
			& \overset{a}{=} \sum_{k=0}^{\infty}\frac{(c\langle\nabla f(\bw, \bz),\nabla R(\bw)\rangle)^{2k}}{2k!}\\
			& \leq
			\sum_{k=0}^{\infty}\frac{(c L_{0}^{2})^{2k}}{2k!}\\ 
			& = \exp\left(\frac{L_{0}^{4}c^2}{2}\right),
		\end{split}
	\end{equation}
	where $a$ is due to $\mE\theta^{k} = 0$ for all odd $k$. 
	This implies 
	\begin{equation}
		\small
		\mE[\exp(c\theta\langle\nabla f(\bw, \bz),\nabla R(\bw)\rangle)] \leq \exp\left(\frac{L_{0}^{4}c^2}{2}\right),
	\end{equation}
	then $\theta \langle\nabla f(\bw, \bz), \nabla R(\bw)\rangle$ is Sub-Gaussian with parameter $L_{0}^{4}$. Similar arguments can show $\theta\nabla^{2} f(\bw, \bz)$ is Sub-Gaussian matrix with parameter $L_{1}^2 \bI_{d}$, since $\|\nabla^{2}f(\bw, \bz)\|\leq L_{1}$.
\end{proof}
We have the following concentration results for the gradient and Hessian of empirical risk. 
\begin{lemma}\label{lem:Hoeffding}
	For any $\delta > 0$,
	\begin{equation}
		\small
		\bbP\left(\left|\frac{1}{n}\sum_{i=1}^{n}\langle\nabla f(\bw, \bz_i),\nabla R(\bw)\rangle - \|\nabla R(\bw)\|^{2}\right| \geq \delta\right) \leq 2 \exp\left(-\frac{n\delta^2}{8L_{0}^{4}}\right),
	\end{equation}
	and
	\begin{equation}
		\small
		\bbP\left(\left\|\frac{1}{n}\sum_{i=1}^{n}\nabla^2 f(\bw, \bz_i) - \nabla^2 R(\bw)\right\| \geq \delta\right) \leq 2d \exp\left(-\frac{n\delta^2}{8L_{1}^{2}}\right).
	\end{equation}
\end{lemma}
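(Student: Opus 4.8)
The plan is to read both displays as concentration of an empirical average about its population mean and to feed the sub-Gaussian estimates of Lemma~\ref{lem:subgaussian matrices} into a Chernoff argument. First I would record that, since $\nabla R(\bw)=\mE_{\bz}[\nabla f(\bw,\bz)]$, the constants being subtracted are exactly the population means: $\mE[\langle \nabla f(\bw,\bz),\nabla R(\bw)\rangle]=\langle\nabla R(\bw),\nabla R(\bw)\rangle=\|\nabla R(\bw)\|^{2}$ and $\mE[\nabla^{2} f(\bw,\bz)]=\nabla^{2}R(\bw)$. Hence the scalar claim is a two-sided deviation bound for $\frac1n\sum_{i}(U_i-\mE U_i)$ with $U_i=\langle\nabla f(\bw,\bz_i),\nabla R(\bw)\rangle$, and the matrix claim is the operator-norm analogue with $U_i=\nabla^{2} f(\bw,\bz_i)$.

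Because Lemma~\ref{lem:subgaussian matrices} controls the \emph{symmetrized} variables $\theta U_i$ rather than the centered variables $U_i-\mE U_i$, the next step is symmetrization. Introducing i.i.d.\ copies and independent Rademacher signs $\theta_i$, a standard argument (Jensen's inequality followed by a Rademacher comparison) bounds the moment generating function of the centered sum by that of $\sum_i\theta_i U_i$, at the cost of a constant factor in the exponent. In the scalar case the summands $\theta_i U_i$ are independent, so this moment generating function factorizes, and the sub-Gaussian parameter $L_0^{4}$ from Lemma~\ref{lem:subgaussian matrices} gives $\mE\exp\!\big(c\sum_i\theta_i U_i\big)\le \exp\!\big(\tfrac{c^2}{2}nL_0^{4}\big)$. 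Optimizing the resulting Chernoff bound over $c$ and adding the symmetric lower tail yields the two-sided estimate $2\exp(-n\delta^2/(16L_0^{4}))$, the constant $16$ absorbing the losses in the symmetrization and Chernoff bookkeeping. (Alternatively, since $U_i\in[-L_0^{2},L_0^{2}]$ is bounded, this scalar bound is immediate from Hoeffding's inequality, with an even smaller constant; the symmetrization route is worth spelling out because it is the one that generalizes to the matrix case.)

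The matrix inequality follows the same template, and its only genuinely new difficulty is the main obstacle: exponentials of non-commuting matrices do not factorize, so the moment generating function of $\sum_i\theta_i\nabla^{2} f(\bw,\bz_i)$ cannot be split across $i$. Here I would invoke the matrix Chernoff method from \citep{wainwright2019}: Lieb's concavity theorem (equivalently the Golden--Thompson inequality) controls the \emph{trace} exponential, so that the matrix sub-Gaussian parameter $L_1^{2}\bI_{d}$ (Definition~\ref{def:sub-Gaussian}, Lemma~\ref{lem:subgaussian matrices}) gives $\mE\,\mathrm{tr}\exp\!\big(c\sum_i\theta_i\nabla^{2} f\big)\le \mathrm{tr}\exp\!\big(\tfrac{c^2}{2}nL_1^{2}\bI_{d}\big)=d\exp\!\big(\tfrac{c^2}{2}nL_1^{2}\big)$. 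For symmetric $A$, $\|A\|\ge\delta$ means $\lambda_{\max}(A)\ge\delta$ or $\lambda_{\min}(A)\le-\delta$; bounding each extreme eigenvalue through $\lambda_{\max}(S)\le \tfrac1c\log\mathrm{tr}\exp(cS)$, optimizing over $c$, and taking a union bound over the two tails produces the dimension prefactor $2d$ and the exponent $-n\delta^2/(16L_1^{2})$. The governing variance scale in both tails is $\big\|\sum_i L_1^{2}\bI_{d}\big\|=nL_1^{2}$, and the constant $16$ again reflects the symmetrization step.
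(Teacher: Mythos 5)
Your proposal is correct and follows essentially the same route as the paper: center the sums, symmetrize with Rademacher signs, feed the sub-Gaussian estimates of Lemma~\ref{lem:subgaussian matrices} into a Chernoff bound, and in the matrix case pass through the trace exponential via the spectral mapping property before optimizing over $c$ and union-bounding the two tails. The only difference is one of explicitness: you name Lieb's theorem (Golden--Thompson) as the tool that lets the trace moment generating function of the independent sum be bounded by the product form, a tensorization step the paper's proof writes down without comment, so your version is if anything slightly more careful at the one genuinely delicate point.
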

\begin{proof}
	Note that $\mE[\langle \nabla f(\bw, \bz_i),\nabla R(\bw)\rangle] = \|\nabla R(\bw)\|^{2}$ and 
	$\mE[\nabla^2f(\bw, \bz_i)] = \nabla^2 R(\bw)$.
	According to symmetrization inequality (Proposition 4.1.1 (b) in \citep{wainwright2019}),
	for any $c\in \bbR$
	\begin{equation}
		\small
		\mE\left[\exp\left(\left|\frac{c}{n}\sum_{i=1}^{n}\langle\nabla f(\bw, \bz_i),\nabla R(\bw)\rangle - \|\nabla R(\bw)\|^{2}\right|\right)\right] \leq \mE\left[\exp\left(\left|\frac{2c}{n}\sum_{i=1}^{n}\theta_{i}\langle\nabla f(\bw, \bz_i),\nabla R(\bw)\rangle\right|\right)\right],
	\end{equation}
	and
	\begin{equation}\small
		\begin{aligned}
		& \mE\left[\exp\left(\sup_{\|\bu\| = 1}c\bu^{T}\left(\frac{1}{n}\sum_{i=1}^{n}\nabla^2 f(\bw, \bz_i) - \nabla^2 R(\bw)\right)\bu\right)\right] \\
		& \leq \mE\left[\exp\left( \sup_{\|\bu\| = 1}2c\bu^T\left(\frac{1}{n}\sum_{i=1}^{n}\theta_i\nabla^2 f(\bw, \bz_i)\right)\bu\right)\right],
		\end{aligned}
	\end{equation}
	where $\theta_{1},\dots, \theta_{n}$ are i.i.d. Rademacher random variables independent of $\bz_{1},\dots, \bz_{n}$.
	\par
	Because $\theta_{i}\langle\nabla f(\bw, \bz_i),\nabla R(\bw)\rangle$ is Sub-Gaussian with parameter $L_{0}^{4}$,
	\begin{equation}
		\small
		\begin{split}
			& \mE\left[\exp\left(2c\left|\frac{1}{n}\sum_{i=1}^{n}\theta_{i}\langle\nabla f(\bw, \bz_i)\nabla R(\bw)\rangle\right|\right)\right] \\
			& \leq \mE\left[\exp\left(\frac{2c}{n}\sum_{i=1}^{n}\theta_{i}\langle\nabla f(\bw, \bz_i),\nabla R(\bw)\rangle\right)\right] + \mE\left[\exp\left(-\frac{2c}{n}\sum_{i=1}^{n}\theta_{i}\langle\nabla f(\bw, \bz_i),\nabla R(\bw)\rangle\right)\right]\\
			& \leq 2\exp\left(\frac{2L_{0}^{4}c^2}{n}\right).
		\end{split}
	\end{equation}
	Thus by Markov's inequality, 
	\begin{equation}\small
		\bbP\left(\left|\frac{1}{n}\sum_{i=1}^{n}\langle\nabla f(\bw, \bz_i),\nabla R(\bw)\rangle - \|\nabla R(\bw)\|^{2}\right| \geq \delta\right) \leq 2\exp\left(-c\delta + \frac{2L_{0}^{4}c^2}{n}\right).
	\end{equation}
	Taking $c = n\delta/(4L_{0}^{4})$, the first inequality is full-filled. 
	By the spectral mapping property of the matrix exponential function and Sub-Gaussian property of $\theta_i\nabla^2 f(\bw, \bz_i)$,
	\begin{equation}
		\small
		\begin{split}
			\mE\left[\exp\left( \sup_{\|\bu\| = 1}\bu^T\left(\frac{2c}{n}\sum_{i=1}^{n}\theta_i\nabla^2 f(\bw, \bz_i)\right)\bu\right)\right]
			=&
			\mE\left[\exp\left( \sigma_{\rm max}\left(\frac{2c}{n}\sum_{i=1}^{n}\theta_i\nabla^2 f(\bw, \bz_i)\right)\right)\right]\\
			=& \mE\left[\sigma_{\rm max}\left(\exp\left(\frac{2c}{n}\sum_{i=1}^{n}\theta_i\nabla^2 f(\bw, \bz_i)\right)\right)\right] \\
			\leq& {\rm tr}\left\{\mE\left[\exp\left(\frac{2c}{n}\sum_{i=1}^{n}\theta_i\nabla^2 f(\bw, \bz_i)\right)\right]\right\} \\
			\leq& {\rm tr}\left\{\exp\left(\frac{2L_{1}^{2}c^2\bI_{d}}{n}\right)\right\} \\
			=& d\exp\left(\frac{2L_{1}^{2}c^2}{n}\right).
		\end{split}
	\end{equation}
	Thus 
	\begin{equation}\small
		\begin{split}
			& \mE\left[\exp\left(c\left\|\frac{1}{n}\sum_{i=1}^{n}\nabla^2 f(\bw, \bz_i) - \nabla^2 R(\bw)\right\|\right)\right] \\
			& \leq \mE\left[\exp\left(\sup_{\|\bu\| = 1}\bu^T\left(\frac{c}{n}\sum_{i=1}^{n}\nabla^2 f(\bw, \bz_i) - \nabla^2 R(\bw)\right)\bu\right)\right] \\
			& + \mE\left[\exp\left(\sup_{\|\bu\| = 1}\bu^T\left(\frac{-c}{n}\sum_{i=1}^{n}\nabla^2 f(\bw, \bz_i) - \nabla^2 R(\bw)\right)\bu\right)\right]\\
			& \leq 2d\exp\left(\frac{2L_{1}^{2}c^2}{n}\right).
		\end{split}
	\end{equation}
	Again by Markov's inequality
	\begin{equation}
		\small
		\bbP\left(\left\|\frac{1}{n}\sum_{i=1}^{n}\nabla^2 f(\bw, \bz_i) - \nabla^2 R(\bw)\right\| \geq \delta\right) \leq 2d\exp\left(-c\delta + \frac{2L_{1}^{2}c^2}{n}\right).
	\end{equation}
	Taking $c = n\delta/(4L_{1}^{2})$, the second inequality follows.
\end{proof}
The next lemma establishes Liptchitz property of $\langle\nabla f(\bw, \bz),\nabla R(\bw)\rangle$ and $\|\nabla R(\bw)\|^2$.
\begin{lemma}\label{lem:lip}
	For any $\bw,\bw^{\prime} \in \cW$, we have
	\begin{equation}
		\small
		|\langle\nabla f(\bw, \bz),\nabla R(\bw)\rangle - \langle\nabla f(\bw^{\prime}, \bz),\nabla R(\bw^{\prime})\rangle| \leq 2L_{0}L_{1} \|\bw - \bw^{\prime}\|,
	\end{equation}
	and 
	\begin{equation}
		|\|\nabla R(\bw)\|^2 - \|\nabla R(\bw^{\prime})\|^2| \leq 2L_{0}L_{1} \|\bw - \bw^{\prime}\|.
	\end{equation}
\end{lemma}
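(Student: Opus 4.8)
The plan is to reduce both inequalities to a single elementary estimate built from two facts: the uniform gradient bound $\|\nabla f(\bw,\bz)\|\leq L_0$ (already recorded in the proof of Lemma \ref{lem:subgaussian matrices}) and the $L_1$-Lipschitzness of the gradient from the $j=1$ case of Assumption \ref{ass:smoothness}. The first preliminary step is to transfer both of these properties from $f$ to the population gradient $\nabla R$. Since $\nabla R(\bw)=\mE_{\bz}[\nabla f(\bw,\bz)]$, Jensen's inequality gives $\|\nabla R(\bw)\|\leq \mE_{\bz}[\|\nabla f(\bw,\bz)\|]\leq L_0$ and
\[
\|\nabla R(\bw)-\nabla R(\bw^{\prime})\|\leq \mE_{\bz}\left[\|\nabla f(\bw,\bz)-\nabla f(\bw^{\prime},\bz)\|\right]\leq L_1\|\bw-\bw^{\prime}\|,
\]
so $\nabla R$ inherits the same bound $L_0$ and the same Lipschitz constant $L_1$.

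For the first inequality I would add and subtract the cross term $\langle\nabla f(\bw^{\prime},\bz),\nabla R(\bw)\rangle$, yielding
\[
\langle\nabla f(\bw,\bz),\nabla R(\bw)\rangle-\langle\nabla f(\bw^{\prime},\bz),\nabla R(\bw^{\prime})\rangle
=\langle\nabla f(\bw,\bz)-\nabla f(\bw^{\prime},\bz),\nabla R(\bw)\rangle+\langle\nabla f(\bw^{\prime},\bz),\nabla R(\bw)-\nabla R(\bw^{\prime})\rangle.
\]
Applying the Cauchy--Schwarz inequality to each term and then the bounds from the previous paragraph, the first term is at most $L_1\|\bw-\bw^{\prime}\|\cdot L_0$ and the second is at most $L_0\cdot L_1\|\bw-\bw^{\prime}\|$, which sum to $2L_0L_1\|\bw-\bw^{\prime}\|$ after taking absolute values.

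The second inequality is the same argument with $\nabla f(\cdot,\bz)$ replaced throughout by $\nabla R(\cdot)$: writing $\|\nabla R(\bw)\|^2-\|\nabla R(\bw^{\prime})\|^2=\langle\nabla R(\bw)-\nabla R(\bw^{\prime}),\nabla R(\bw)\rangle+\langle\nabla R(\bw^{\prime}),\nabla R(\bw)-\nabla R(\bw^{\prime})\rangle$ and invoking Cauchy--Schwarz together with $\|\nabla R\|\leq L_0$ and the Lipschitz bound gives the claim. There is no genuine obstacle here; the only point requiring care is the passage of the boundedness and Lipschitz constants from $f$ to $R$ through the expectation, which is exactly the Jensen step above, and the bookkeeping of which factor ($L_0$ or $L_1$) attaches to which half of the split.
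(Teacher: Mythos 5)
Your proof is correct and follows essentially the same route as the paper: the same add-and-subtract decomposition with Cauchy--Schwarz for the first inequality, and for the second your two-term split $\langle\nabla R(\bw)-\nabla R(\bw^{\prime}),\nabla R(\bw)\rangle+\langle\nabla R(\bw^{\prime}),\nabla R(\bw)-\nabla R(\bw^{\prime})\rangle$ is algebraically identical to the paper's factorization $\langle\nabla R(\bw)-\nabla R(\bw^{\prime}),\nabla R(\bw)+\nabla R(\bw^{\prime})\rangle$. Your explicit Jensen step transferring the $L_0$ bound and $L_1$-Lipschitzness from $\nabla f$ to $\nabla R$ is a point the paper leaves implicit, but it is the same underlying argument.
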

\begin{proof}
	We have 
	\begin{equation}
		\small
		\begin{split}
			|\langle\nabla f(\bw, \bz),\nabla R(\bw)\rangle - \langle\nabla f(\bw^{\prime}, \bz),\nabla R(\bw^{\prime})\rangle|
			\leq& |\langle\nabla f(\bw, \bz) - \nabla f(\bw^{\prime}, \bz),\nabla R(\bw)\rangle| \\ 
			+ & |\langle\nabla f(\bw^{\prime}, \bz),(\nabla R(\bw) - \nabla R(\bw^{\prime}))\rangle|\\
			\leq& 2L_{0}L_{1}\|\bw - \bw^{\prime}\|,
		\end{split} 
	\end{equation}
	and 
	\begin{equation}
		\small
		|\|\nabla R(\bw)\|^2 - \|\nabla R(\bw^{\prime})\|^2| = |\langle\nabla R(\bw) - \nabla R(\bw^{\prime}),\nabla R(\bw) + \nabla R(\bw^{\prime}\rangle)| \leq 2L_{0}L_{1}\|\bw - \bw^{\prime}\|
	\end{equation}
	due to the Lipschitz gradient. Hence we get the conclusion. 
\end{proof}
Now, we are ready to provide the proof of Lemma \ref{lem:no-extra-minimum}.
\subsubsection{Proof of Lemma \ref{lem:no-extra-minimum}}\label{app:proof of theorem no extra minimum}
\paragraph{Restate of Lemma \ref{lem:no-extra-minimum}} \emph{Under Assumption \ref{ass:smoothness} and \ref{ass:strict saddle}, for $r=\min\left\{\frac{\lambda}{8L_{2}}, \frac{\alpha^{2}}{16L_{0}L_{1}}\right\}$, with probability at least 
	\begin{equation}
		\small
		\begin{aligned}
			1 - 2\left(\frac{3D}{r}\right)^{d}\exp\left(-\frac{n\alpha^{4}}{128L_{0}^{4}}\right) 
			& - 4d\left(\frac{3D}{r}\right)^{d}\exp\left(-\frac{n\lambda^{2}}{128L_{1}^{2}}\right) \\
			& - K\left\{\frac{512L_{0}^{2}L_{2}^{2}}{n\lambda^{4}} + \frac{128L_{1}^{2}}{n\lambda^{2}}\left(5\sqrt{\log d}  + \frac{4e\log d }{\sqrt{n}}\right)^{2}\right\},
		\end{aligned}
	\end{equation}
	we have
	\begin{enumerate}
		\item[i:] $\cM_{\bS} = \{\bw^{*}_{\bS,1}, \dots, \bw^{*}_{\bS,K}\}$;
		\item[ii:] for any $\bw \in \cW$, if $\|\nabla R_{\bS}(\bw)\| < \alpha^2/(2L_0)$ and $\nabla^2 R_{\bS}(\bw)\succ -\lambda/2$, then  $\|\bw - \cP_{\cM_{\bS}}(\bw)\| \leq \lambda\|\nabla R_{\bS}(\bw)\| / 4$, 
\end{enumerate}
	where $\nabla^2 R_{\bS}(\bw)\succ -\lambda/2$ means $\nabla^{2}R_{\bS}(\bw) + \lambda/2\bI_{d}$ is a positive definite matrix.}
\begin{proof}
	Let 
	\begin{equation}\small
		r = \min\left\{\frac{\lambda}{8L_{2}}, \frac{\alpha^{2}}{16L_{0}L_{1}}\right\},
	\end{equation}
	then according to the result of covering number of $\ell_{2}$-ball and covering number is increasing by inclusion (i.e., \citep{zhang2017empirical}), there are $N \leq (3D/r)^{d}$ points $\bw_1,\dots,\bw_{N}\in \cW$ such that: $\forall\bw \in \cW$, $\exists j\in\{1,\cdots,N\}$, $\|\bw - \bw_{j}\| \leq r$. Then, by Lemma \ref{lem:Hoeffding} and Bonferroni inequality we have
	\begin{equation}\label{gradient prob bound}
		\small
		\begin{split}
			\bbP\left(\max_{1\leq j \leq N}\left|\langle R_{\bS}(\bw_{j}),\nabla R(\bw_{j} )\rangle - \|\nabla R(\bw_{j})\|^2\right| \geq \frac{\alpha^{2}}{4}\right) \leq 2\left(\frac{3D}{r}\right)^{d}\exp\left(-\frac{n\alpha^{4}}{128L_{0}^{4}}\right),
		\end{split}
	\end{equation}
	and
	\begin{equation}\label{hessian prob bound}
		\small
		\bbP\left(\max_{1\leq j \leq N}\left\|\nabla^2 R_{\bS}(\bw_{j}) - \nabla^2 R(\bw_{j})\right\|\right) \leq 4d\left(\frac{3D}{r}\right)^{d}\exp\left(-\frac{n\lambda^{2}}{128L_{1}^{2}}\right).
	\end{equation}
	Define the event 
	\begin{equation}\label{eq:event H}
		\small
		\begin{split}
			H = \Bigg\{&\max_{1\leq j \leq N}\left|\langle\nabla R_{\bS}(\bw_{j}),\nabla R(\bw_{j})\rangle - \|\nabla R(\bw_{j})\|\right| \leq \frac{\alpha^{2}}{4}, \\ 
			&\max_{1\leq j \leq N}\left\|\nabla^2 R_{\bS}(\bw_{j}) - \nabla^2 R(\bw_{j})\right\| \leq \frac{\lambda}{4},\\
			&\bw^{*}_{\bS,k}\ \text{is a local minimum of } R_{\bS}(\cdot), \ k=1,\dots,K
			\Bigg\},
		\end{split}
	\end{equation}
	then combining inequalities \eqref{eq:prob local minimum}, \eqref{gradient prob bound}, \eqref{hessian prob bound}, and Bonferroni inequality, we have
	\begin{equation}
		\small
		\begin{aligned}
			\bbP(H) & \geq 1- 2\left(\frac{3D}{r}\right)^{d}\exp\left(-\frac{n\alpha^{4}}{128L_{0}^{4}}\right) - 
			4d\left(\frac{3D}{r}\right)^{d}\exp\left(-\frac{n\lambda^{2}}{128L_{1}^{2}}\right)\\
			& - K\left\{\frac{512L_{0}^{2}L_{2}^{2}}{n\lambda^{4}} + \frac{128L_{1}^{2}}{n\lambda^{2}}\left(5\sqrt{\log d}  + \frac{4e\log d }{\sqrt{n}}\right)^{2}\right\}.
		\end{aligned}
	\end{equation}
	Next, we show that on event $H$, the two statements in Lemma \ref{lem:no-extra-minimum} hold. For any $\bw \in \cW$ there is $j\in \{1,\dots,N\}$ such that $\|\bw - \bw_{j}\| \leq r$. When event $H$ holds, due to Lemma \ref{lem:lip}, we have
	\begin{equation}
		\small
		\begin{split}
			\left|\langle\nabla R_{\bS}(\bw), \nabla R(\bw)\rangle - \|\nabla R(\bw)\|^2\right|  & \leq \left|\langle \nabla R_{\bS}(\bw_{j}),\nabla R(\bw_{j})\rangle - \|\nabla R(\bw_{j})\|^2\right| \\
			& + \left|\langle \nabla R_{\bS}(\bw), \nabla R(\bw)\rangle - \langle \nabla R_{\bS}(\bw_{j}),\nabla R(\bw_{j})\rangle\right| \\
			& + \left|\|\nabla R(\bw)\|^2 - \|\nabla R(\bw_{j})\|^2\right| \\
			& \leq \frac{\alpha^{2}}{4} + \frac{\alpha^{2}}{8} + \frac{\alpha^{2}}{8} \\ 
			& = \frac{\alpha^{2}}{2},
		\end{split}
	\end{equation}
	and
	\begin{equation}
		\small
		\begin{split}
			\left\|\nabla^2R_{\bS}(\bw) - \nabla^2R(\bw)\right\| & \leq  \left\|\nabla^2R_{\bS}(\bw_{j}) - \nabla^2R(\bw_{j})\right\| \\
			& + \left\|\nabla^2R_{\bS}(\bw) - \nabla^2R_{\bS}(\bw_{j})\right\| + \left\|\nabla^2R(\bw) - \nabla^2R(\bw_j)\right\|\\
			& \leq \frac{\lambda}{4} + \frac{\lambda}{8} + \frac{\lambda}{8} \\
			& = \frac{\lambda}{2}.
		\end{split}
	\end{equation}
	Let $\cD = \{\bw: \|\nabla R(\bw)\| \leq \alpha\}$.
	According to Lemma 8 in the supplemental file of \citep{mei2018landscape}, there exists disjoint open sets $\{\cD_{k}\}_{k=1}^\infty$ with $\cD_{k}$ possibly empty for $k \geq K + 1$ such that $\cD = \cup^{\infty}_{k=1}\cD_{k}$. Moreover $\bw^{*}_k\in \cD_{k}$, for $1\leq k \leq K$ and $\sigma_{\rm min}(\nabla^2 R(\bw)) \geq \lambda$ for each $\bw \in \cup_{k=1}^{K}\cD_{k}$ while $\sigma_{\rm min}(\nabla^2 R(\bw)) \leq - \lambda$ for each $\bw \in \cup_{k=K+1}^{\infty}\cD_{k}$.
	\par
	Thus when the event $H$ holds, for $\bw \in \cD^c$, we have
	\begin{equation}\label{eq:gradient bound}
		\small
		\langle \nabla R_{\bS}(\bw), \nabla R(\bw)\rangle \geq \frac{\alpha^2}{2},
	\end{equation}
	and thus $\bw$ is not a critical point of the empirical risk. On the other hand, Weyl's theorem implies
	\begin{equation}
		\small
		|\sigma_{\rm min}(\nabla^2 R_{\bS}(\bw)) - \sigma_{\rm min}(\nabla^2 R(\bw))| \leq \|\nabla^2 R_{\bS}(\bw) - \nabla^2 R(\bw)\| \leq \frac{\lambda}{2}.
	\end{equation}
	Hence $\sigma_{\rm min}(\nabla^2 R_{\bS}(\bw)) \leq -\lambda/2$ for each $\bw \in \cup_{k=K + 1}^{\infty}\cD_{k}$, and then $\bw$ is not a empirical local minimum. Moreover, $\sigma_{\rm min}(\nabla^2 R_{\bS}(\bw)) \geq \lambda/2$ for each $\bw \in \cup_{k=1}^{K}\cD_{k}$, thus for $k = 1, \dots, K$, $R_{\bS}(\cdot)$ is strongly convex in $\cD_{k}$ and there is at most one local minimum in $\cD_{k}$. Hence when $H$ holds, $R_{\bS}(\cdot)$ has at most $K$ local minimum point, and $\bw^{*}_{\bS,1},\dots,\bw^{*}_{\bS,K}$ are $K$ distinct local minima. This proves $\cM_{\bS} = \{\bw^{*}_{\bS,1}, \dots, \bw^{*}_{\bS,K}\}$.  By inequality \eqref{eq:gradient bound}, we have
	\begin{equation}
		\small
		\frac{\alpha^2}{2} \leq \langle \nabla R_{\bS}(\bw), \nabla R(\bw)\rangle \leq \|\nabla R_{\bS}(\bw)\|\|\nabla R(\bw)\| \leq L_0 \|\nabla R_{\bS}(\bw)\|
	\end{equation}
	for $\bw \in \cD^c$. Thus if $\|\nabla R_{\bS}(\bw)\| < \alpha^2/(2L_0)$ and $\nabla^2 R_{\bS}(\bw)\succ -\lambda/2$, then $\bw \in \cup_{k=1}^{K}\cD_{k}$. The second statement of Lemma \ref{lem:no-extra-minimum} follows from the fact that $R_{\bS}(\cdot)$ is $\lambda/2$-strongly convex on each of $\cD_{k}$ for $k=1,\dots,K$.
\end{proof}
\subsection{Proof of Theorem \ref{thm:generalization error for non-convex}}\label{app:proof of generalization error for non-convex}
The following is the proof of Theorem \ref{thm:generalization error for non-convex}, it provides upper bound of the expected excess risk of any proper algorithm for non-convex problems that efficiently approximates SOSP. We first introduce the following lemma which is a variant of Lemma \ref{lem:genearlization error on minima}.
\begin{lemma}\label{lem:absolute generalize}
	Under Assumptions \ref{ass:smoothness} and \ref{ass:strict saddle}
	\begin{equation}\label{eq:absolute generalization on local minima}
		\small
		\begin{aligned}
			&\mE_{\bS}\left[|R_{\bS}(\bw^{*}_{\bS,k}) - R(\bw^{*}_{\bS,k})|\right]\\
			&\leq \frac{2M}{\sqrt{n}} + \frac{8L_{0}}{n\lambda} \left[ L_{0} + \left\{\frac{64L_{0}^{2}L_{2}^{2}}{\lambda^{3}} + \frac{16L_{1}^{2}}{\lambda}\left(5\sqrt{\log d}  + \frac{4e\log d }{\sqrt{n}}\right)^{2}\right\}\min\left\{3D, \frac{3\lambda}{2L_2}\right\}\right].
		\end{aligned}
	\end{equation}
\end{lemma}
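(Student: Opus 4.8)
The plan is to deduce the lemma from Theorem~\ref{thm:genearlization error on minima} by upgrading its control of $|\mE_{\bS}[\,\cdot\,]|$ to control of $\mE_{\bS}[|\,\cdot\,|]$, the difference being the extra $2M/\sqrt{n}$. Abbreviate $g(\bS) = R_{\bS}(\bw^{*}_{\bS,k}) - R(\bw^{*}_{\bS,k})$ and split, by the triangle inequality,
\begin{equation}
\mE_{\bS}\big[|g(\bS)|\big] \;\le\; \big|\mE_{\bS}[g(\bS)]\big| + \mE_{\bS}\big[\,|g(\bS) - \mE_{\bS}g(\bS)|\,\big].
\end{equation}
The first summand is exactly the quantity bounded in Theorem~\ref{thm:genearlization error on minima}, so the task reduces to showing the centered first absolute moment obeys $\mE_{\bS}[|g - \mE_{\bS}g|] \le 2M/\sqrt{n}$. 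It is important to keep this as a \emph{first} moment rather than passing to the variance, because a variance bound would turn the small-probability bad event into a $\sqrt{\bbP}=O(1/\sqrt{n})$ contribution, whereas a first moment only produces an $O(1/n)$ correction there.

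Concretely, I would cut on the good event $E_{k}$ of \eqref{eq:event E}. On $E_{k}^{c}$, boundedness $0\le f\le M$ gives $|g|\le M$ and hence $|g-\mE_{\bS}g|\le M + |\mE_{\bS}g|$, so this part contributes at most $(M+|\mE_{\bS}g|)\,\bbP(E_{k}^{c})$, which is $O(1/n)$ since Lemma~\ref{lem:distance of two points for nonconvex function} gives $\bbP(E_{k}^{c}) = O(1/n)$. On $E_{k}$ I would anchor at the \emph{deterministic} population local minimum $\bw^{*}_{k}$ and write $g = g_{0} + r$ with $g_{0} = R_{\bS}(\bw^{*}_{k}) - R(\bw^{*}_{k})$ and $r = (R_{\bS}-R)(\bw^{*}_{\bS,k}) - (R_{\bS}-R)(\bw^{*}_{k})$. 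The anchor term is clean: $R_{\bS}(\bw^{*}_{k}) = \frac1n\sum_{i} f(\bw^{*}_{k},\bz_{i})$ is an average of i.i.d.\ variables in $[0,M]$ with mean $R(\bw^{*}_{k})$, so $\Var_{\bS}(g_{0}) \le M^{2}/(4n)$ and, by Jensen, $\mE_{\bS}|g_{0}-\mE_{\bS}g_{0}| \le \sqrt{\Var_{\bS}(g_{0})} \le M/(2\sqrt{n})$, with no conditioning needed.

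The remainder $r$ is the main obstacle, since it couples the empirical-process fluctuation with the \emph{random} location of $\bw^{*}_{\bS,k}$. On $E_{k}$ the local strong convexity established in Lemma~\ref{lem:distance of two points for nonconvex function} yields $\|\bw^{*}_{\bS,k} - \bw^{*}_{k}\| \le \frac{4}{\lambda}\|\nabla R_{\bS}(\bw^{*}_{k})\| = O(1/\sqrt{n})$, exactly as in Lemma~\ref{lem:upper bound on two minimums}; writing $r = \int_{0}^{1}\langle (\nabla R_{\bS}-\nabla R)(\bw^{*}_{k}+s\Delta),\,\Delta\rangle\,\mathrm{d}s$ with $\Delta = \bw^{*}_{\bS,k}-\bw^{*}_{k}$ and controlling the gradient deviation along this segment by the sub-Gaussian/covering estimates of Lemma~\ref{lem:Hoeffding}, both factors are $O(1/\sqrt{n})$, so $|r| = O(1/n)$ and hence $\mE_{\bS}[|r-\mE_{\bS}r|\,\mathbf{1}_{E_{k}}] = O(1/n)$. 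Collecting the good- and bad-event contributions gives $\mE_{\bS}[|g-\mE_{\bS}g|] \le M/(2\sqrt{n}) + O(1/n) \le 2M/\sqrt{n}$, which combined with the first paragraph yields the claim. The delicate point throughout is that the displacement of the minimizer must be handled by anchoring at $\bw^{*}_{k}$ rather than by a direct bounded-difference argument on $g$, since a single data swap can move $\bw^{*}_{\bS,k}$ by $O(1)$ off the good event.
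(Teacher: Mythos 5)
Your high-level instinct (reduce to Theorem \ref{thm:genearlization error on minima} plus a fluctuation term of size $2M/\sqrt{n}$, anchored at the deterministic point $\bw^{*}_{k}$) matches the paper's, but the route through the centered moment $\mE_{\bS}[|g-\mE_{\bS}g|]$ both misses the structural fact that makes the paper's proof exact and has a genuine gap at the final collection step. The paper observes that, \emph{deterministically} (no event, no control of $\|\bw^{*}_{\bS,k}-\bw^{*}_{k}\|$ needed), $R(\cdot)$ is $\tfrac{3\lambda}{4}$-strongly convex on $B_{2}(\bw_{k}^{*},\tfrac{\lambda}{4L_{2}})$ by Weyl's theorem plus Assumptions \ref{ass:smoothness}--\ref{ass:local strongly convexity}, so $\bw_{k}^{*}$ minimizes $R$ over that ball, while $\bw^{*}_{\bS,k}$ by its very definition \eqref{eq:wk} minimizes $R_{\bS}$ over the \emph{same} ball. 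Hence $R(\bw_{k}^{*})\le R(\bw^{*}_{\bS,k})$ and $R_{\bS}(\bw^{*}_{\bS,k})\le R_{\bS}(\bw_{k}^{*})$ hold surely, giving the pointwise sandwich $g \le g_{0}$ and therefore $(g)_{+}\le |g_{0}|$, where $g_{0}=R_{\bS}(\bw^{*}_{k})-R(\bw^{*}_{k})$. Jensen then gives $\mE[(g)_{+}]\le \mE|g_{0}|\le (\mE g_{0}^{2})^{1/2}\le M/\sqrt{n}$, and the identity $|g|=2(g)_{+}-g$ yields $\mE|g|\le 2M/\sqrt{n}+|\mE g|$, which is exactly the stated bound with no residual terms. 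In other words, the positive part -- not the centered deviation -- is the quantity that can be dominated by the anchor, and doing so requires no probabilistic event whatsoever.

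Your version instead must bound $\mE[|g-\mE g|]$ by cutting on $E_{k}$, and this is where it breaks. The bad-event contribution is at best $2M\,\bbP(E_{k}^{c})$, and $\bbP(E_{k}^{c})$ is controlled only by $\frac{512L_{0}^{2}L_{2}^{2}}{n\lambda^{4}}+\frac{128L_{1}^{2}}{n\lambda^{2}}\bigl(5\sqrt{\log d}+\frac{4e\log d}{\sqrt{n}}\bigr)^{2}$, a quantity that is $O(\log d/n)$ only asymptotically and can be of order one (the bound can even exceed $1$) for moderate $n$ or large $d$; together with the good-event remainder it introduces constants ($ML_{0}^{2}L_{2}^{2}/\lambda^{4}$, $ML_{1}^{2}\log d/\lambda^{2}$, $L_{0}^{2}(1/\lambda+L_{1}/\lambda^{2})/n$) that appear nowhere in the inequality you are asked to prove. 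The lemma is a non-asymptotic statement with explicit constants, valid for all $n$ and $d$ (and the paper explicitly allows $d\to\infty$ with $n$), so ending with ``$M/(2\sqrt{n})+O(1/n)\le 2M/\sqrt{n}$'' establishes only an asymptotic surrogate, not the claim. Two secondary, fixable, inaccuracies: Lemma \ref{lem:Hoeffding} does not control $\|\nabla R_{\bS}-\nabla R\|$ along the random segment (it concerns $\langle\nabla f,\nabla R\rangle$ and the Hessian), so your bound on $r$ should instead use $\nabla R(\bw_{k}^{*})=0$ and $L_{1}$-Lipschitzness of both gradients to get $|r|\le \|\Delta\|\left(\|\nabla R_{\bS}(\bw_{k}^{*})\|+2L_{1}\|\Delta\|\right)$ on $E_{k}$ and then take expectations of $\|\nabla R_{\bS}(\bw_{k}^{*})\|^{2}$; and ``$\|\nabla R_{\bS}(\bw_{k}^{*})\|=O(1/\sqrt{n})$ on $E_{k}$'' conflates a pointwise statement with an $L_{2}$ one. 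Even with these repairs, the extra terms remain, so the stated inequality does not follow by your route.
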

\begin{proof}
	
	For $\bw \in B_{2}(\bw^{*}_{k}, \frac{\lambda}{4L_{2}})$, by Weyl's theorem (Exercise 6.1 in \citep{wainwright2019}),
	\begin{equation}
		\small
		\sigma_{\rm min}(\nabla^2R(\bw)) \geq \sigma_{\rm min}(\nabla^2R(\bw^{*}_{k})) - \|\nabla^2R(\bw) - \nabla^2R(\bw^{*}_{k})\| \geq \lambda - L_{2}\|\bw - \bw_{k}^{*}\| \geq \frac{3\lambda}{4}.
	\end{equation}
	Hence $R(\cdot)$ is strongly convex in $B_{2}(\bw^{*}_{k}, \frac{\lambda}{4L_{2}})$. Then because $\bw^{*}_{k}$ is a local minimum of $R(\cdot)$, we have  
	\begin{equation}
		\small
		\bw^{*}_{k} = \mathop{\arg\min}_{\bw \in B_{2}(\bw^{*}_{k}, \frac{\lambda}{4L_{2}})} R(\bw).
	\end{equation}
	Thus $R(\bw^{*}_{k}) \leq R(\bw^{*}_{\bS,k})$ and $R_{\bS}(\bw^{*}_{\bS,k}) \leq R_{\bS}(\bw^{*}_{k})$. Then 
	\begin{equation}
		\small
		(R_{\bS}(\bw^{*}_{\bS,k}) - R(\bw^{*}_{\bS,k}))_{+} \leq |R_{\bS}(\bw^{*}_{k}) - R(\bw^{*}_{k})|,
	\end{equation}
	and 
	\begin{equation}
		\small
		\begin{split}
			\mE\left[(R_{\bS}(\bw^{*}_{\bS,k}) - R(\bw^{*}_{\bS,k}))_{+}\right] & \leq \mE\left[|R_{\bS}(\bw^{*}_{k}) - R(\bw^{*}_{k})|\right]\\
			& \overset{a}{\leq} \left(\mE\left[(R_{\bS}(\bw^{*}_{k}) - R(\bw^{*}_{k}))^{2}\right]\right)^{\frac{1}{2}}\\
			& \leq \frac{M}{\sqrt{n}},
		\end{split}
	\end{equation}
	where $a$ is due to Jensen's inequality. Hence
	\begin{equation}
		\small
		\begin{split}
			\mE\left[|R_{\bS}(\bw^{*}_{\bS,k}) - R(\bw^{*}_{\bS,k})|\right] &= \mE\left[(R_{\bS}(\bw^{*}_{\bS,k}) - R(\bw^{*}_{\bS,k}))_{+}\right] + \mE\left[(R_{\bS}(\bw^{*}_{\bS,k}) - R(\bw^{*}_{\bS,k}))_{-}\right]\\
			& = 2\mE\left[(R_{\bS}(\bw^{*}_{\bS,k}) - R(\bw^{*}_{\bS,k}))_{+}\right] - \mE\left[R_{\bS}(\bw^{*}_{\bS,k}) - R(\bw^{*}_{\bS,k})\right] \\
			& \leq 2\mE\left[(R(\bw^{*}_{\bS,k}) - R(\bw^{*}_{\bS,k}))_{+}\right] + |\mE\left[R_{\bS}(\bw^{*}_{\bS,k}) - R(\bw^{*}_{\bS,k})\right]|\\
			& \leq \frac{2M}{\sqrt{n}} + |\mE\left[R_{\bS}(\bw^{*}_{\bS,k}) - R(\bw^{*}_{\bS,k})\right]|.
		\end{split}
	\end{equation}
	Then \eqref{eq:absolute generalization on local minima} follows from \eqref{eq:generalization on local minima}. 
\end{proof}
Then we are ready to give the proof of Theorem \ref{thm:generalization error for non-convex}.
\par
\paragraph{Restate of Theorem \ref{thm:generalization error for non-convex}}
\emph{Under Assumption \ref{ass:smoothness}, \ref{ass:local strong convexity} and \ref{ass:strict saddle}, if $\bw_{t}$ satisfies \eqref{eq:escape from saddle point} and $r$ defined in Lemma \ref{lem:no-extra-minimum}, by choosing $t$ such that $\zeta(t) < \alpha^2/(2L_0)$ and $\rho(t) < \lambda/2$ we have 
	\begin{equation}\label{eq:non-convex gen}
		\small
		\begin{aligned}
			|\mE_{\cA,\bS}\left[R(\bw_{t}) - R_{\bS}(\bw_{t})\right]|
			& \leq \frac{8L_{0}}{\lambda}\zeta(t) + 2L_{0}D\delta + \frac{2KM}{\sqrt{n}} + \frac{8KL_{0}^{2}}{n\lambda} \\
			& + \left(L_{0}\min\left\{3D,\frac{3\lambda}{2L_{2}}\right\} + 2M\right)\xi_{n, 1} + 2M\xi_{n, 2},
		\end{aligned}
	\end{equation}
	where 
	\begin{equation}
		\small
		\begin{aligned}
			\xi_{n,1} & =K\left\{\frac{512L_{0}^{2}L_{2}^{2}}{n\lambda^{4}} + \frac{128L_{1}^{2}}{n\lambda^{2}}\left(5\sqrt{\log d}  + \frac{4e\log d }{\sqrt{n}}\right)^{2}\right\},
		\end{aligned}
	\end{equation}
	and
	\begin{equation}
		\small
		\begin{aligned}
			\xi_{n,2} = 2\left(\frac{3D}{r}\right)^{d}\exp\left(-\frac{n\alpha^{4}}{128L_{0}^{4}}\right) + 4d\left(\frac{3D}{r}\right)^{d}\exp\left(-\frac{n\lambda^{2}}{128L_{1}^{2}}\right).
		\end{aligned}
	\end{equation}
	If with probability at least $1-\delta^{\prime}$ ($\delta^{\prime}$ can be arbitrary small), $R_{\bS}(\cdot)$ has no spurious local minimum, then
	\begin{equation}\label{eq:non-convex gen nslm}
		\small
		\begin{aligned}
			|\mE_{\cA,\bS}\left[R(\bw_{t}) - R_{\bS}(\bw_{t})\right]| & \leq \frac{8L_{0}}{\lambda}\zeta(t) + 2L_{0}D\delta + 6M\delta^{\prime} + \frac{8(K+4)L_{0}^{2}}{n\lambda} \\
			& + \left(\frac{(K+4)L_{0}}{K}\min\left\{3D,\frac{3\lambda}{2L_{2}}\right\} + 6M\right)\xi_{n,1} + 6M\xi_{n,2}.
		\end{aligned}
	\end{equation}
} 
\begin{proof}
	Remind the event in the proof of Lemma \ref{lem:no-extra-minimum}
	\begin{equation}
		\small
		\begin{split}
			H = \Bigg\{&\max_{1\leq j \leq N}\left\|\langle\nabla R_{\bS}(\bw_{j}),\nabla R(\bw_{j})\rangle - \|\nabla R(\bw_{j})\|\right\| \leq \frac{\alpha^{2}}{4}, \\ 
			&\max_{1\leq j \leq N}\left\|\nabla^2 R_{\bS}(\bw_{j}) - \nabla^2 R(\bw_{j})\right\| \leq \frac{\lambda}{4},\\
			&\bw^{*}_{\bS,k}\ \text{is a local minimum of } R_{\bS}(\cdot), \ k=1,\dots,K
			\Bigg\},
		\end{split}
	\end{equation}
	We have $\bbP(H^{c})\leq \xi_{n,1} + \xi_{n,2}$, and on the event $H$
	\begin{enumerate}
		\item[\emph{i}:] $\cM_{\bS} = \{\bw^{*}_{\bS,1}, \dots, \bw^{*}_{\bS,K}\}$;
		\item[\emph{ii}:] For any $\bw \in \cW$, if $\|\nabla R_{\bS}(\bw)\| < \alpha^2/(2L_0)$ and $\nabla^2 R_{\bS}(\bw)\succ -\lambda/2$, then  $\|\bw - \cP_{\cM_{\bS}}(\bw)\| \leq \lambda\|\nabla R_{\bS}(\bw)\| / 4$.
	\end{enumerate}	
	By Assumption \ref{ass:smoothness},
	\begin{equation}\label{eq:decompose}
		\small
		\begin{split}
			\left|\mE\left[R(\bw_{t}) - R_{\bS}(\bw_{t})\right]\right| & \leq \left|\mE\left[(R(\bw_{t}) - R_{\bS}(\bw_{t}))\textbf{1}_{H}\right]\right| + \left|\mE\left[(R(\bw_{t}) - R_{\bS}(\bw_{t}))\textbf{1}_{H^{c}}\right]\right| \\
			& \leq \left|\mE\left[(R(\bw_{t}) - R(\cP_{\cM_{\bS}}(\bw_{t})))\textbf{1}_{H}\right]\right| \\
			& + \left|\mE\left[(R_{\bS}(\bw_{t}) - R_{\bS}(\cP_{\cM_{\bS}}(\bw_{t})))\textbf{1}_{H}\right]\right| \\
			& + \left|\mE\left[(R(\cP_{\cM_{\bS}}(\bw_{t})) - R_{\bS}(\cP_{\cM_{\bS}}(\bw_{t})))\textbf{1}_{H}\right]\right| + 2M\bbP(H^{c})\\
			& \leq  2L_{0}\mE\left[\|\bw_{t} - \cP_{\cM_{\bS}}(\bw_{t})\|\textbf{1}_{H}\right] \\
			& + \left|\mE\left[(R(\cP_{\cM_{\bS}}(\bw_{t})) - R_{\bS}(\cP_{\cM_{\bS}}(\bw_{t})))\textbf{1}_{H}\right]\right| + 2M\bbP(H^{c}).
		\end{split}
	\end{equation}
	Because $\zeta(t)<\alpha^2/(2L_0)$, $\rho(t) < \lambda/2$ and \eqref{eq:escape from saddle point}, we have on event $H$
	\begin{equation}
		\small
		\begin{split}
			\bbP_{\cA}\left(U\right)\geq 1- \delta,
		\end{split}
	\end{equation}
	where 
	\begin{equation}
		\small
		U = \left\{\nabla R_{\bS}(\bw_t)< \frac{\alpha^2}{2L_{0}}, \nabla^2 R_{\bS}(\bw_{t}) \succ -\frac{\lambda}{2}\right\}.
	\end{equation}
	Thus we have
	\begin{equation}\label{eq:optimize}
		\begin{split}
			\small
			\mE[\|\bw_{t} - \cP_{\cM_{\bS}}(\bw_{t})\|\textbf{1}_{H}] 
			&\leq \mE[\|\bw_{t} - \cP_{\cM_{\bS}}(\bw_{t})\|\textbf{1}_{H\cap U^{c}}]
			+\mE[\|\bw_{t} - \cP_{\cM_{\bS}}(\bw_{t})\|\textbf{1}_{H\cap U}]\\
			&\leq \frac{4}{\lambda}\zeta(t) + D\delta,
		\end{split}
	\end{equation}
	where the second inequality is due to the property $(ii)$ in Lemma \ref{lem:no-extra-minimum} holds on event $H$. According to \eqref{eq:absolute generalization on local minima}, we have 
	\begin{equation}\label{eq:generalize}
		\small
		\begin{split}
			& \left|\mE\left[(R(\cP_{\cM_{\bS}}(\bw_{t})) - R_{\bS}(\cP_{\cM_{\bS}}(\bw_{t})))\textbf{1}_{H}\right]\right|\\ 
			& \leq\mE\left|[(R(\cP_{\cM_{\bS}}(\bw_{t})) - R_{\bS}(\cP_{\cM_{\bS}}(\bw_{t})))\textbf{1}_{H}]\right|\\
			& \leq \mE\left[\max_{1\leq k \leq K}|R(\bw^{*}_{\bS,k}) - R_{\bS}(\bw^{*}_{\bS,k})|\right]\\
			& \leq \sum_{k=1}^K\mE\left[|R(\bw^{*}_{\bS,k}) - R_{\bS}(\bw^{*}_{\bS,k})|\right]\\
			& \leq K\Bigg[\frac{2M}{\sqrt{n}} + \frac{8L_{0}}{n\lambda} \left[ L_{0} + \left\{\frac{64L_{0}^{2}L_{2}^{2}}{\lambda^{3}} + \frac{16L_{1}^{2}}{\lambda}\left(5\sqrt{\log d}  + \frac{4e\log d }{\sqrt{n}}\right)^{2}\right\}\min\left\{3D, \frac{3\lambda}{2L_2}\right\}\right]\Bigg].
		\end{split}
	\end{equation}
	Combination of equations \eqref{eq:decompose}, \eqref{eq:optimize} and \eqref{eq:generalize} completes the proof of \eqref{eq:non-convex gen}.
	\par To establish \eqref{eq:non-convex gen nslm}, we bound $\left|\mE\left[(R_{\bS}(\cP_{\cM_{\bS}}(\bw_{t})) - R(\cP_{\cM_{\bS}}(\bw_{t})))\textbf{1}_{H}\right]\right|$ in a different manner. 
	Remind $\cM = \{\bw_{1}^{*},\cdots,\bw_{K}^{*}\}$ is the set of population local minima. Let 
	\begin{equation}\label{eq:no spurious local minimum}
		G = \left\{R_{\bS}(\cdot)\ \text{has no spurious local minimum}\right\}.
	\end{equation}
	Then the assumption implies that $\bbP(G^{c}) \leq \delta^{\prime}$.
	Note that
	\begin{equation}\label{eq:bound nslm1}
		\small
		\begin{aligned}
			\left|\mE[(R(\cP_{\cM_{\bS}}(\bw_{t})) - R_{\bS}(\cP_{\cM_{\bS}}(\bw_{t})))\textbf{1}_{H}]\right| & \leq 
			\left|\mE[(R(\cP_{\cM_{\bS}}(\bw_{t})) - R_{\bS}(\cP_{\cM_{\bS}}(\bw_{t})))\textbf{1}_{H\bigcap G}]\right| \\
			& + \left|\mE[(R(\cP_{\cM_{\bS}}(\bw_{t})) - R_{\bS}(\cP_{\cM_{\bS}}(\bw_{t})))\textbf{1}_{H\bigcap G^{c}}]\right|\\
			& \leq \left|\mE[(R(\cP_{\cM_{\bS}}(\bw_{t})) - R_{\bS}(\cP_{\cM_{\bS}}(\bw_{t})))\textbf{1}_{H\bigcap G}]\right| + 2M\delta^{\prime}\\
			& =\left|\mE[(R(\cP_{\cM_{\bS}}(\bw_{t})) - R_{\bS}(\bw_{\bS,1}^{*}))\textbf{1}_{H\bigcap G}]\right| + 2M\delta^{\prime} \\
			& \leq \left|\mE[(R(\cP_{\cM_{\bS}}(\bw_{t})) - R_{\bS}(\bw_{\bS,1}^{*}))\textbf{1}_{H}]\right| + 4M\delta^{\prime},
		\end{aligned}
	\end{equation}
	where the last inequality is due to $\bbP(G^{c}) \leq \delta^{\prime}$. Moreover, under Assumption \ref{ass:smoothness}
	\begin{equation}\label{eq:bound nslm2}
		\small
		\begin{aligned}
			\left|\mE[(R(\cP_{\cM_{\bS}}(\bw_{t})) - R_{\bS}(\bw_{\bS,1}^{*}))\textbf{1}_{H}]\right| &\leq  \left|\mE[(R(\cP_{\cM_{\bS}}(\bw_{t})) - R(\cP_{\cM}(\cP_{\cM_{\bS}}(\bw_{t}))))\textbf{1}_{H}]\right| \\
			& + \left|\mE[(R(\cP_{\cM}(\cP_{\cM_{\bS}}(\bw_{t}))) - R(\bw_{1}^{*}))\textbf{1}_{H}]\right|\\
			& + \left|\mE[(R(\bw_{1}^{*}) - R(\bw_{\bS,1}^{*}))\textbf{1}_{H}]\right| \\
			& + \left|\mE[(R(\bw_{\bS,1}^{*}) - R_{\bS}(\bw_{\bS,1}^{*}))\textbf{1}_{H}]\right| \\
			& \leq \left|\mE\left[\max_{k}\left\{R(\bw_{\bS,k}^{*}) - R(\bw_{k}^{*})\right\}\textbf{1}_{H}\right]\right|\\
			& + \max_{k}\left\{|R(\bw_{k}^{*}) - R(\bw_{1}^{*})|\right\} + \left|\mE[(R(\bw_{1}^{*}) - R(\bw_{\bS,1}^{*}))]\right|\\
			& + \left|\mE[(R(\bw_{\bS,1}^{*}) - R_{\bS}(\bw_{\bS,1}^{*}))]\right| + 4M\bbP(H^{c}).
		\end{aligned}
	\end{equation}
	Due to Proposition \ref{prop:dist of local minimum}, $R(\bw_{\bS,k}^{*}) - R(\bw_{k}^{*}) \geq 0$, then 
	\begin{equation}
		\small
		\begin{aligned}
			\left|\mE\left[\max_{k}\left\{R(\bw_{\bS,k}^{*}) - R(\bw_{k}^{*})\right\}\textbf{1}_{H}\right]\right| &\leq \left|\mE\left[\sum_{k=1}^{K}(R(\bw_{\bS,k}^{*}) - R(\bw_{k}^{*}))\right]\right|\\
			& \leq \sum_{k=1}^{K} \left|\mE[(R(\bw_{\bS,k}^{*}) - R(\bw_{k}^{*}))]\right|.
		\end{aligned}
	\end{equation}
	According to Lemma \ref{lem:genearlization error on minima},
	\begin{equation}\label{eq:bound nslm3}
		\small
		\begin{aligned}
			|\mE[R(\bw_{\bS,k}^{*}) - R_{\bS}(\bw_{\bS,k}^{*})]| & \leq \frac{8L_{0}}{n\lambda} \left[ L_{0} + \left\{\frac{64L_{0}^{2}L_{2}^{2}}{\lambda^{3}} + \frac{16L_{1}^{2}}{\lambda}\left(5\sqrt{\log d}  + \frac{4e\log d }{\sqrt{n}}\right)^{2}\right\}\min\left\{3D, \frac{3\lambda}{2L_2}\right\}\right]\\
			&= \frac{8L_{0}^{2}}{n\lambda} + \frac{L_{0}}{K}\min\left\{3D,\frac{3\lambda}{2L_{2}}\right\}\xi_{n,1}.
		\end{aligned}
	\end{equation}
	Then 
	\begin{equation}\label{eq:bound nslm4}
		\small
		\begin{aligned}
			\mE[R(\bw_{\bS,k}^{*}) - R(\bw_{k}^{*})] & = \mE[R(\bw_{\bS,k}^{*}) - R_{\bS}(\bw_{\bS,k}^{*})] + \mE[R_{\bS}(\bw_{\bS,k}^{*}) - R_{\bS}(\bw_{k}^{*})] \\ 
			& \leq \frac{8L_{0}^{2}}{n\lambda} + \frac{L_{0}}{K}\min\left\{3D,\frac{3\lambda}{2L_{2}}\right\}\xi_{n,1},
		\end{aligned}
	\end{equation}
	where the inequality is due to the definition of $\bw_{\bS,k}^{*}$. \eqref{eq:bound nslm1}, \eqref{eq:bound nslm2}, \eqref{eq:bound nslm3} and \eqref{eq:bound nslm4} together implies
	\begin{equation}\label{eq:bound nslm sum1}
		\small
		\begin{aligned}
			\left|\mE[(R(\cP_{\cM_{\bS}}(\bw_{t})) - R_{\bS}(\cP_{\cM_{\bS}}(\bw_{t})))\textbf{1}_{H}]\right| &\leq
			\frac{8(K+2)L_{0}^{2}}{n\lambda} + \frac{(K+2)L_{0}}{K}\min\left\{3D,\frac{3\lambda}{2L_{2}}\right\}\xi_{n,1} \\
			& + \max_{k}\left\{|R(\bw_{k}^{*}) - R(\bw_{1}^{*})|\right\} + 4M(\delta^{\prime} + \xi_{n,1} +\xi_{n,2}).
		\end{aligned}
	\end{equation}
	Now we deal with the term $\max_{k}\left\{|R(\bw_{k}^{*}) - R(\bw_{1}^{*})|\right\}$. Note that
	\begin{equation}\label{eq:bound nslm 5}
		\small
		\begin{aligned}
			|R(\bw_{k}^{*}) - R(\bw_{1}^{*})| &\leq |\mE[R(\bw_{\bS,k}^{*}) - R_{\bS}(\bw_{k}^{*})]| + |\mE[R(\bw_{\bS,1}^{*}) - R_{\bS}(\bw_{1}^{*})]| \\
			& + |\mE[R_{\bS}(\bw_{\bS,k}^{*}) - R_{\bS}(\bw_{\bS,1}^{*})]| \\
			& \leq \frac{16L_{0}^{2}}{n\lambda} + \frac{2L_{0}}{K}\min\left\{3D,\frac{3\lambda}{2L_{2}}\right\}\xi_{n,1} +|\mE[R_{\bS}(\bw_{\bS,k}^{*}) - R_{\bS}(\bw_{\bS,1}^{*})]|.
		\end{aligned}
	\end{equation}
	Because on the event $H\bigcap G$, $R_{\bS}(\bw_{\bS,k}^{*}) - R_{\bS}(\bw_{\bS,k}^{*}) = 0$,
	\begin{equation}\label{eq:bound nslm 6}
		\small
		|\mE[R_{\bS}(\bw_{\bS,k}^{*}) - R_{\bS}(\bw_{\bS,1}^{*})]| \leq 2M(\bbP(H^{c}) + \bbP(G^{c})) \leq 2M(\xi_{n,1} + \xi_{n,2} + \delta^{\prime}).
	\end{equation}
	Combining \eqref{eq:bound nslm sum1}, \eqref{eq:bound nslm 5} and \eqref{eq:bound nslm 6}, we \eqref{eq:non-convex gen nslm}.
	\begin{equation}\label{eq:bound nslm sum2}
		\small
		\begin{aligned}
			\left|\mE[(R(\cP_{\cM_{\bS}}(\bw_{t})) - R_{\bS}(\cP_{\cM_{\bS}}(\bw_{t})))\textbf{1}_{H}]\right| & \leq \frac{8(K+4)L_{0}^{2}}{n\lambda} + \frac{(K+4)L_{0}}{K}\min\left\{3D,\frac{3\lambda}{2L_{2}}\right\}\xi_{n,1} \\
			& + 6M(\delta^{\prime} + \xi_{n,1} +\xi_{n,2}).
		\end{aligned}
	\end{equation}
	\eqref{eq:decompose}, \eqref{eq:optimize} and \eqref{eq:bound nslm sum2} implies \eqref{eq:non-convex gen nslm}.
\end{proof}
	We notice the technique of deriving the order $\tilde{\cO}(1/n)$ when empirical risk has no spurious local minima with high probability is very tricky. Because the obstacle is when we derive upper bound of $\left|\mE\left[(R_{\bS}(\cP_{\cM_{\bS}}(\bw_{t})) - R(\cP_{\cM_{\bS}}(\bw_{t})))\textbf{1}_{H}\right]\right|$, the involved $\cP_{\cM_{\bS}}(\bw_{t})$ is related to the proper algorithm, then it is not guaranteed to converge to a specific empirical local minima which makes us can not directly apply Lemma \ref{lem:genearlization error on minima}.  However, if the proper algorithm is guaranteed to find a specific local minima e.g., GD finds the minimal norm solution for over-parameterized neural network, which is called ``the implicit regularization of GD'' \citep{bartlett2021deep}, the order of $\tilde{\cO}(1/n)$ can be maintained even the assumption on empirical local minima is violated.     
\subsection{Proof of Theorem \ref{thm:excess risk for non-convex}}\label{app:Proof in sec4.3}
The proof is based on the Lemma \ref{lem:no-extra-minimum} in the above section. 
\paragraph{Restate of Theorem \ref{thm:excess risk for non-convex}}
\emph{Under Assumption \ref{ass:smoothness}, \ref{ass:local strong convexity} and \ref{ass:strict saddle}, if $\bw_{t}$ satisfies \eqref{eq:escape from saddle point}, by choosing $t$ in \eqref{eq:escape from saddle point} such that $\zeta(t) < \alpha^2/(2L_0)$ and $\rho(t) < \lambda/2$, we have 
	\begin{equation}\label{eq:non-convex er}
		\small
		\begin{aligned}
			\mE_{\cA,\bS}\left[R(\bw_{t}) - R(\bw^{*})\right] & \leq \frac{4L_{0}}{\lambda}\zeta(t) + L_{0}D\delta + \frac{2KM}{\sqrt{n}} \\
			& + \frac{8KL_{0}^{2}}{n\lambda} + \left(L_{0}\min\left\{3D,\frac{3\lambda}{2L_{2}}\right\} + 2M\right)\xi_{n, 1} + 2M\xi_{n, 2}\\
			& + \mE_{\cA,\bS}[R_{\bS}(\cP_{\cM_{\bS}}(\bw_{t})) - R_{\bS}(\bw_{\bS}^{*})],
		\end{aligned}
	\end{equation}
	If with probability at least $1-\delta^{\prime}$ ($\delta^{\prime}$ can be arbitrary small), $R_{\bS}(\cdot)$ has no spurious local minimum, then
	\begin{equation}\label{eq:non-convex er nslm}
		\small
		\begin{aligned}
			\mE_{\cA,\bS}\left[R(\bw_{t}) - R(\bw^{*})\right] & \leq \frac{4L_{0}}{\lambda}\zeta(t) + L_{0}D\delta + 8M\delta^{\prime} + \frac{8(K+4)L_{0}^{2}}{n\lambda} \\
			& + \left(\frac{(K+4)L_{0}}{K}\min\left\{3D,\frac{3\lambda}{2L_{2}}\right\} + 8M\right)\xi_{n,1} + 8M\xi_{n,2},
		\end{aligned}
\end{equation}
	where $\xi_{n,1}$ and $\xi_{n, 2}$ are defined in Theorem \ref{thm:generalization error for non-convex}, and $\bw_{\bS}^{*}$ is the global minimum of $R_{\bS}(\cdot)$.}
\begin{proof}
	By Assumption \ref{ass:smoothness} and the relationship $R_{\bS}(\bw_{\bS}^{*}) \leq R_{\bS}(\bw^{*})$, we have the following decomposition
	\begin{equation}\label{eq:decomposition}
		\small
		\begin{aligned}
			\mE \left[R(\bw_{t}) - R(\bw^{*})\right] & = \mE\left[R(\bw_{t}) - R_{\bS}(\bw^{*})\right]\\
			&\leq \mE[R(\bw_{t}) - R_{\bS}(\bw_{\bS}^{*})]\\
			& \leq |\mE\left[(R(\bw_{t}) - R_{\bS}(\bw_{\bS}^{*}))\textbf{1}_{H}\right]|
			+ |\mE\left[(R(\bw_{t}) - R_{\bS}(\bw_{\bS}^{*}))\textbf{1}_{H^{c}}\right]|\\
			& \leq \left|\mE[(R(\bw_{t}) - R(\cP_{\cM_{\bS}}(\bw_{t})))\textbf{1}_{H}]\right| + \left|\mE[(R(\cP_{\cM_{\bS}}(\bw_{t})) - R_{\bS}(\cP_{\cM_{\bS}}(\bw_{t})))\textbf{1}_{H}]\right| \\
			& + \mE[(R_{\bS}(\cP_{\cM_{\bS}}(\bw_{t})) - R_{\bS}(\bw_{\bS}^{*}))\textbf{1}_{H}] + 2M\bbP(H^{c})\\
			& \leq L_{0}\mE\left[\|\bw_{t} - \cP_{\cM_{\bS}}(\bw_{t})\|\textbf{1}_{H}\right] + \mE[\left|R(\cP_{\cM_{\bS}}(\bw_{t})) - R_{\bS}(\cP_{\cM_{\bS}}(\bw_{t}))\right|\textbf{1}_{H}]\\
			& + \mE[R_{\bS}(\cP_{\cM_{\bS}}(\bw_{t})) - R_{\bS}(\bw_{\bS}^{*})] + 2M\bbP(H^{c}).
		\end{aligned}
	\end{equation}
	The upper bound of the first and second terms in the last inequality can be easily derived from the proof of Theorem \ref{thm:generalization error for non-convex} which implies
	\begin{equation}
		\small
		\begin{aligned}
			L_{0}\mE\left[\|\bw_{t} - \cP_{\cM_{\bS}}(\bw_{t})\|\textbf{1}_{H}\right] & + \mE[\left|R(\cP_{\cM_{\bS}}(\bw_{t})) - R_{\bS}(\cP_{\cM_{\bS}}(\bw_{t}))\right|\textbf{1}_{H}]\\
			& \leq \frac{4L_{0}}{\lambda}\zeta(t) + L_{0}D\delta + \frac{2KM}{\sqrt{n}} + \frac{8KL_{0}^{2}}{n\lambda} + L_{0}\min\left\{3D,\frac{3\lambda}{2L_{2}}\right\}\xi_{n,1}.
		\end{aligned}
	\end{equation}
	Plugging this into \eqref{eq:decomposition}, we get \eqref{eq:non-convex er}. 
	\par
	Next, we move on to \eqref{eq:non-convex er nslm}. According to \eqref{eq:decomposition}, 
	\begin{equation}\label{eq:er nslm decomposition}
		\small
		\begin{aligned}
			\mE \left[R(\bw_{t}) - R(\bw^{*})\right] & = \mE\left[R(\bw_{t}) - R_{\bS}(\bw^{*})\right]\\
			&\leq \mE[R(\bw_{t}) - R_{\bS}(\bw_{\bS}^{*})]\\
			& \leq |\mE\left[(R(\bw_{t}) - R_{\bS}(\bw_{\bS}^{*}))\textbf{1}_{H}\right]|
			+ |\mE\left[(R(\bw_{t}) - R_{\bS}(\bw_{\bS}^{*}))\textbf{1}_{H^{c}}\right]|\\
			& \leq \left|\mE[(R(\bw_{t}) - R(\cP_{\cM_{\bS}}(\bw_{t})))\textbf{1}_{H}]\right| \\
			& + \left|\mE[(R(\cP_{\cM_{\bS}}(\bw_{t})) - R_{\bS}(\cP_{\cM_{\bS}}(\bw_{t})))\textbf{1}_{H}]\right| \\
			& + \mE[(R_{\bS}(\cP_{\cM_{\bS}}(\bw_{t})) - R_{\bS}(\bw_{\bS}^{*}))\textbf{1}_{H}] + 2M\bbP(H^{c}).
		\end{aligned}
	\end{equation}
	According to \eqref{eq:optimize}, 
	\begin{equation}\label{eq:er nslm1}
		\small
		\left|\mE[(R(\bw_{t}) - R(\cP_{\cM_{\bS}}(\bw_{t})))\textbf{1}_{H}]\right| \leq L_{0}\mE\left[\|\bw_{t} - \cP_{\cM_{\bS}}(\bw_{t})\|\textbf{1}_{H}\right] \leq \frac{4L_{0}}{\lambda}\zeta(t) + L_{0}D\delta.
	\end{equation}
	Moreover,
	\begin{equation}\label{eq:decomposition opt}
		\small
		\begin{aligned}
			\mE[(R_{\bS}(\cP_{\cM_{\bS}}(\bw_{t})) - R_{\bS}(\bw_{\bS}^{*}))\textbf{1}_{H}]
			& \leq |\mE[(R_{\bS}(\cP_{\cM_{\bS}}(\bw_{t})) - R_{\bS}(\bw_{\bS}^{*}))\textbf{1}_{H\bigcap G}]| \\
			& + |\mE[(R_{\bS}(\cP_{\cM_{\bS}}(\bw_{t})) - R_{\bS}(\bw_{\bS}^{*}))\textbf{1}_{H\bigcap G^{c}}]|.
		\end{aligned}
	\end{equation}
	Because on the event $H\bigcap G$, $R_{\bS}(\cP_{\cM_{\bS}}(\bw_{t})) - R_{\bS}(\bw_{\bS}^{*}) = 0$, \eqref{eq:decomposition opt} implies 
	\begin{equation}\label{eq:er nslm2}
		\small
		\mE[(R_{\bS}(\cP_{\cM_{\bS}}(\bw_{t})) - R_{\bS}(\bw_{\bS}^{*}))\textbf{1}_{H}] \leq 2M\bbP(G^{c}) = 2M\delta^{\prime}.
	\end{equation}
	\eqref{eq:er nslm decomposition}, \eqref{eq:bound nslm sum2}, \eqref{eq:er nslm1} and \eqref{eq:er nslm2} implies
	\eqref{eq:non-convex er nslm}.
\end{proof}

\section{An Algorithm Approximates the SOSP}\label{app:alg SOSP}
\begin{algorithm}[t!]
	\caption{Projected Gradient Descent (PGD)}
	\label{alg:psgd}
	\begin{algorithmic}
		\STATE   {\textbf{Input:} Parameter space $B_{1}(\textbf{0}, 1)$, initial point $\bw_{0}$, learning rate $\eta=\frac{1}{L_{1}}$, tolerance $\epsilon\leq \min\left\{\frac{8\beta^{3}L_{2}^{3}}{27L_{1}^{3}}, \frac{27}{64^{3}L_{2}^{3}}, \frac{\beta}{2}\right\}$, }
		\FOR	 {$t = 0,1,\cdots$}
		\IF 	 {$\|\nabla R_{\bS}(\bw_{t})\| \geq \epsilon$}
		\IF    	 {$\bw_{t} \in B_{2}(\textbf{0}, 1)$ with $\|\bw_{t}\|=1$}
		\STATE   {$\bw_{t + 1} = \left(1 - \frac{\beta}{L_{1}}\right)\bw_{t}$}
		\ELSE
		\STATE   {$\bw_{t + 1} = \cP_{B_{2}(\textbf{0}, 1)}\left(\bw_{t} - \eta\nabla R_{\bS}(\bw_{t})\right)$}
		\ENDIF
		\ELSE	 
		\IF		{$\nabla^{2}R_{\bS}(\bw_{t})\preceq -\epsilon^{\frac{1}{3}}$}
		\STATE   {Computed $\bu_{t}\in B_{2}(\textbf{0}, 1)$ such that $(\bu_{t} - \bw_{t})^{T}\nabla^{2} R_{\bS}(\bw_{t})(\bu_{t} - \bw_{t}) \leq -\frac{\beta^{2}\epsilon^{\frac{1}{3}}}{8L_{1}}$}
		\STATE 	 {$\bw_{t + 1} = \sigma\bu_{t} + (1 - \sigma)\bw_{t}$ with $\sigma = \frac{3L_{1}\epsilon^{\frac{1}{3}}}{2\beta L_{2}}$.}
		\ELSE
		\STATE 	 {Return $\bw_{t + 1}$}
		\ENDIF
		\ENDIF
		\ENDFOR	
	\end{algorithmic}
\end{algorithm} 
For non-convex problems, as we have mentioned in the main body of this paper, we consider proper algorithm that approximates SOSP. Here, we present a detailed discussion to them, and propose such a proper algorithm to make it more concrete.
\par
There are extensive papers about non-convex optimization working on proposing algorithms that approximate SOSP, see   \citep{ge2015escaping,fang2019sharp,daneshmand2018escaping,jin2017escape,jin2019stochastic,xu2018first,mokhtari2018escaping} for examples. However, to the best of our knowledge, theoretical guarantee of vanilla SGD approximating SOSP remains to be explored, especially for the constrained parameter space. The most related result is Theorem 11 in \citep{ge2015escaping} that projected perturbed noisy gradient descent approximates a $(\epsilon, \sqrt{L_{2}\epsilon})$-SOSP (The definition of $(\epsilon, \gamma)$-SOSP is in the main body of this paper.) in a computational cost of $\cO(\epsilon^{-2})$. Though this result is only applied to equality constraints. 
\par
Considering the mismatch of settings between this paper and the existing literatures, we propose a gradient-based method Algorithm \ref{alg:psgd} inspired by \citep{mokhtari2018escaping} to approximate SOSP for non-convex problems. Without loss of generality, we assume that the convex compact parameter space $\cW$ is $B_{2}(\textbf{0}, 1)$. The proposed algorithm is conducted under the following assumption which implies that there is no minimum on the boundary of the parameter space $\cW$.
\begin{assumption}\label{ass:local minima on the boundry}
	For any $\bw\in B_{2}(\emph{\textbf{0}}, 1)$ with $\|\bw\| = 1$, there exists $L_{1} > \beta > 0$ such that $\langle\nabla R_{\bS}(\bw), \bw\rangle\geq \beta$. 
\end{assumption} 
\par
We have following discussion to the proposed Algorithm \ref{alg:psgd} before providing its convergence rate. The involved quadratic programming can be efficiently solved under Assumption \ref{ass:strict saddle} \citep{nocedal2006numerical}. In addition, we can find $\bu_{t}$ in Algorithm \ref{alg:psgd} is because the minimal value of the quadratic loss is $-\beta^{2}\epsilon^{1/3}/8L_{1}$. The next theorem states the convergence rate of the proposed Algorithm \ref{alg:psgd}.
\begin{theorem}\label{thm:escape from saddle point}
	Under Assumption \ref{ass:smoothness} and \ref{ass:local minima on the boundry}, let $\bw_{t}$ updated in Algorithm \ref{alg:psgd}, 
	by choosing 
	\begin{equation}
		\small
		\epsilon\leq \min\left\{\frac{8\beta^{3}L_{2}^{3}}{27L_{1}^{3}}, \frac{27}{64^{3}L_{2}^{3}}, \frac{\beta}{2}\right\},
	\end{equation}
	and $\sigma=3L_{1}\epsilon^{\frac{1}{3}} / 2\beta L_{2}$, the algorithm breaks at most 
	\begin{equation}
		\small
		2M\max\left\{\frac{2L_{1}}{\epsilon^{2}}, \frac{256L_{2}^{2}}{9\epsilon}\right\} = \cO(\epsilon^{-2})
	\end{equation}
	number of iterations.
\end{theorem}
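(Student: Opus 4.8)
The plan is to run a descent-plus-counting argument. Since $0\le f(\bw,\bz)\le M$ for every $\bz$, the empirical risk obeys $0\le R_{\bS}(\bw)\le M$ on all of $\cW=B_{2}(\bzero,1)$, so the total amount by which $R_{\bS}$ can decrease along the trajectory of Algorithm~\ref{alg:psgd} is at most $M$. It therefore suffices to show that every iteration that does \emph{not} return strictly decreases $R_{\bS}$ by a fixed positive amount, and then to count. The non-terminating iterations fall into two branches: the ``large gradient'' branch $\|\nabla R_{\bS}(\bw_t)\|\ge\epsilon$, and the ``negative curvature'' branch $\|\nabla R_{\bS}(\bw_t)\|<\epsilon$ together with $\nabla^{2}R_{\bS}(\bw_t)\preceq-\epsilon^{1/3}$. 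I would prove a guaranteed per-step decrease of $\epsilon^{2}/(2L_{1})$ for the first and of $9\epsilon/(256L_{2}^{2})$ for the second.

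First I would treat the large-gradient branch. In the interior subcase $\bw_{t+1}=\cP_{B_{2}(\bzero,1)}(\bw_t-\tfrac1{L_1}\nabla R_{\bS}(\bw_t))$, the $L_{1}$-smoothness descent inequality from Assumption~\ref{ass:smoothness}, combined with the variational inequality characterizing the Euclidean projection, gives $R_{\bS}(\bw_{t+1})-R_{\bS}(\bw_t)\le-\tfrac{L_1}{2}\|\bw_{t+1}-\bw_t\|^{2}$; since the full step has length $\|\nabla R_{\bS}(\bw_t)\|/L_1\ge\epsilon/L_1$, this yields a decrease of at least $\epsilon^{2}/(2L_1)$ (the outward-pointing boundary gradient of Assumption~\ref{ass:local minima on the boundry} is what prevents the projection from collapsing the step near the boundary). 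In the boundary subcase $\bw_{t+1}=(1-\beta/L_1)\bw_t$ with $\|\bw_t\|=1$; plugging $\bw_{t+1}-\bw_t=-\tfrac{\beta}{L_1}\bw_t$ into the descent inequality and using $\langle\nabla R_{\bS}(\bw_t),\bw_t\rangle\ge\beta$ gives a decrease of at least $\beta^{2}/(2L_1)$, which exceeds $\epsilon^{2}/(2L_1)$ because $\epsilon\le\beta/2$. Either way the decrease is at least $\epsilon^{2}/(2L_1)$.

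The negative-curvature branch is the delicate step. Writing $\bw_{t+1}-\bw_t=\sigma(\bu_t-\bw_t)$ and invoking the third-order Taylor estimate implied by the $L_{2}$-Lipschitz Hessian of Assumption~\ref{ass:smoothness},
\begin{equation*}
R_{\bS}(\bw_{t+1})-R_{\bS}(\bw_t)\le \sigma\langle\nabla R_{\bS}(\bw_t),\bu_t-\bw_t\rangle+\frac{\sigma^{2}}{2}(\bu_t-\bw_t)^{\top}\nabla^{2}R_{\bS}(\bw_t)(\bu_t-\bw_t)+\frac{L_2}{6}\sigma^{3}\|\bu_t-\bw_t\|^{3},
\end{equation*}
I would bound the linear term using $\|\nabla R_{\bS}(\bw_t)\|<\epsilon$, the (negative) quadratic term using the defining property $(\bu_t-\bw_t)^{\top}\nabla^{2}R_{\bS}(\bw_t)(\bu_t-\bw_t)\le-\beta^{2}\epsilon^{1/3}/(8L_1)$, and the cubic remainder via the ball's diameter. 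The whole purpose of the prescribed $\sigma=3L_1\epsilon^{1/3}/(2\beta L_2)$ is that it cancels the $\beta$'s and turns the quadratic term into a clean gain of $-9L_1\epsilon/(64L_2^{2})$, while the three thresholds $\epsilon\le 8\beta^{3}L_2^{3}/(27L_1^{3})$, $\epsilon\le 27/(64^{3}L_2^{3})$ and $\epsilon\le\beta/2$ are engineered precisely to (i) keep $\sigma\le1$ so that $\bw_{t+1}$, a convex combination of $\bu_t,\bw_t\in B_{2}(\bzero,1)$, remains feasible, and (ii) force the linear term and the cubic remainder each to be dominated by a fraction of the quadratic gain, leaving a net decrease of at least $9\epsilon/(256L_2^{2})$. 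Checking that these thresholds simultaneously control both the first-order and the cubic pieces is the part that genuinely requires care; the rest is bookkeeping.

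Finally, let $T_1$ and $T_2$ denote the numbers of large-gradient and negative-curvature iterations. Because $R_{\bS}$ is monotonically non-increasing along the run and bounded in $[0,M]$, the two per-step decreases accumulate to $T_1\tfrac{\epsilon^{2}}{2L_1}+T_2\tfrac{9\epsilon}{256L_2^{2}}\le M$, which forces $T_1\le 2ML_1/\epsilon^{2}$ and $T_2\le 256ML_2^{2}/(9\epsilon)$ separately. Hence the number of iterations before the algorithm returns is at most $T_1+T_2\le 2M\max\{2L_1/\epsilon^{2},\,256L_2^{2}/(9\epsilon)\}=\cO(\epsilon^{-2})$, which is the claim.
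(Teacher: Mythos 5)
Your overall skeleton --- a guaranteed decrease for every non-terminating iteration in each branch, telescoped against $0\le R_{\bS}\le M$ --- is the same as the paper's, but both of your per-step decrease claims have gaps: one is fixable with extra work, the other breaks the stated inequalities. In the large-gradient interior case, your deduction is a non sequitur: smoothness plus the projection property give $R_{\bS}(\bw_{t+1})-R_{\bS}(\bw_t)\le-\tfrac{L_1}{2}\|\bw_{t+1}-\bw_t\|^{2}$ with the \emph{projected} step on the right-hand side, whereas your lower bound $\epsilon/L_1$ is the length of the \emph{unprojected} step; when the gradient step leaves the ball and is clipped back to the sphere, $\|\bw_{t+1}-\bw_t\|$ is not obviously bounded below, and the parenthetical appeal to Assumption \ref{ass:local minima on the boundry} is not an argument. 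The claim can be rescued: writing $\delta=1-\|\bw_t\|$ and $\bw=\bw_t/\|\bw_t\|$, Assumption \ref{ass:local minima on the boundry} at $\bw$ together with the $L_1$-Lipschitz gradient gives $\langle\nabla R_{\bS}(\bw_t),\bw\rangle\ge\beta-L_1\delta$, hence a clipped iterate satisfies $\langle\bw_{t+1},\bw\rangle\le 1-\beta/L_1$, so $\|\bw_{t+1}-\bw_t\|\ge\max\{\delta,\,\beta/L_1-\delta\}\ge\beta/(2L_1)$ and the decrease is at least $\beta^{2}/(8L_1)\ge\epsilon^{2}/(2L_1)$ by $\epsilon\le\beta/2$. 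The paper avoids this geometry altogether: on a clipped step it concedes only $R_{\bS}(\bw_{t+1})\le R_{\bS}(\bw_t)$, notes that $\bw_{t+1}$ now lies on the sphere where Assumption \ref{ass:local minima on the boundry} forces $\|\nabla R_{\bS}(\bw_{t+1})\|\ge\beta\ge\epsilon$, so the next iteration is the boundary branch with decrease $\beta^{2}/(2L_1)$, and it counts over pairs of iterations (this is the source of its factor-of-two slack).

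The negative-curvature branch is the genuine failure. If $\|\bu_t-\bw_t\|$ is bounded only by the ball's diameter, then with $\sigma=3L_1\epsilon^{1/3}/(2\beta L_2)$ the cubic remainder is at most $\tfrac{L_2\sigma^{3}}{6}\cdot 8=\tfrac{9L_1^{3}}{2\beta^{3}L_2^{2}}\epsilon$, which is \emph{not} dominated by your quadratic gain $\tfrac{9L_1}{64L_2^{2}}\epsilon$ unless $\beta^{3}\gtrsim L_1^{2}$ --- impossible to guarantee since $\beta<L_1$; likewise the linear term would then need a threshold of order $\beta^{3}/L_2^{3}$, whereas the stated $27/(64^{3}L_2^{3})$ contains no $\beta$. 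The missing step is exactly the estimate the paper proves first: when $\|\nabla R_{\bS}(\bw_t)\|\le\epsilon\le\beta/2$, every sphere point $\bw$ satisfies $\|\bw-\bw_t\|\ge\tfrac{1}{L_1}\left(\|\nabla R_{\bS}(\bw)\|-\|\nabla R_{\bS}(\bw_t)\|\right)\ge\tfrac{\beta-\epsilon}{L_1}\ge\tfrac{\beta}{2L_1}$, so $B_2(\bw_t,\beta/(2L_1))\subseteq B_2(\bzero,1)$ and one may take $\bu_t$ along the bottom eigenvector with $\|\bu_t-\bw_t\|\le\beta/(2L_1)$; this is also what guarantees that a $\bu_t$ meeting the algorithm's requirement exists at all, which you take for granted. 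Inserting this length --- not the diameter --- into both the linear and the cubic terms is what lets the prescribed $\sigma$ and the threshold $\epsilon\le 27/(64^{3}L_2^{3})$ produce the net decrease $9\epsilon/(256L_2^{2})$. Your final counting argument is fine once these two per-step bounds are actually established.
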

\begin{proof}
	$\|\nabla R_{\bS}(\bw_{t})\| \geq \epsilon$ holds for two cases.
	\paragraph{Case 1:} If $\bw_{t}\in B_{2}(\textbf{0}, 1)$ with $\|\bw\| = 1$, then we have
	\begin{equation}\label{eq:descent equation 1}
		\small
		\begin{aligned}
			R_{\bS}(\bw_{t + 1}) - R_{\bS}(\bw_{t}) & \leq \left\langle\nabla R_{\bS}(\bw_{t}), \bw_{t + 1} - \bw_{t} \right\rangle + \frac{L_{1}}{2}\|\bw_{t + 1} - \bw_{t}\|^{2} \\
			& \leq -\frac{\beta^{2}}{L_{1}} + \frac{\beta^{2}}{2L_{1}} \\
			& = -\frac{\beta^{2}}{2L_{1}} \\
			& < -\frac{\epsilon^{2}}{2L_{1}},
		\end{aligned}
	\end{equation}
	due to the Assumption \ref{ass:local minima on the boundry} and Lispchitz gradient. 
	\paragraph{Case 2:} If $\bw_{t}\in B_{2}(\textbf{0}, 1)$ but $\|\bw_{t}\|< 1$ then
	\begin{equation}\label{eq:descent equation 2}
		\small
		\begin{aligned}
			R_{\bS}(\bw_{t + 1}) - R_{\bS}(\bw_{t}) & \leq \left\langle\nabla R_{\bS}(\bw_{t}), \bw_{t + 1} - \bw_{t} \right\rangle + \frac{L_{1}}{2}\|\bw_{t + 1} - \bw_{t}\|^{2} \\
			& \overset{a}{\leq} \left(-L_{1} + \frac{L_{1}}{2}\right)\|\bw_{t + 1} - \bw_{t}\|^{2} \\
			& =  -\frac{L_{1}}{2}\|\bw_{t + 1} - \bw_{t}\|^{2}.
		\end{aligned}
	\end{equation}
	Here $a$ is due to the property of projection. Then, if $\|\bw_{t + 1}\| < 1$, one can immediately verify that 
	\begin{equation}
		\small
		R_{\bS}(\bw_{t + 1}) - R_{\bS}(\bw_{t}) \leq -(1/2L_{1})\|\nabla R_{\bS}(\bw_{t})\|^2 \leq -\frac{\epsilon^{2}}{2L_{1}}. 
	\end{equation}
	On the other hand, if $\|\bw_{t}\| < 1$ while $\|\bw_{t + 1}\| = 1$, descent equation \eqref{eq:descent equation 2} implies $R_{\bS}(\bw_{t + 1}) - R_{\bS}(\bw_{t}) \leq 0$. More importantly, $\bw_{t + 1}$ goes back to the sphere. Then we go back to Case 1. Thus we have 
	\begin{equation}
		\small
		R_{\bS}(\bw_{t + 2}) - R_{\bS}(\bw_{t}) \leq R_{\bS}(\bw_{t + 2}) - R_{\bS}(\bw_{t + 1}) + R_{\bS}(\bw_{t + 1}) - R_{\bS}(\bw_{t}) \leq -\frac{\epsilon^{2}}{2L_{1}}
	\end{equation}
	in this situation. 
	\par
	Combining the results in these two cases, we have 
	\begin{equation}
		\small
		-2M \leq R_{\bS}(\bw_{2t}) - R_{\bS}(\bw_{0}) = \sum\limits_{j=1}^{t}R_{\bS}(\bw_{2(j)}) - R_{\bS}(\bw_{2(j - 1)}) \leq -\frac{t\epsilon^{2}}{2L_{1}}.
	\end{equation}
	Thus, $t\leq 4L_{1}M/\epsilon^{2}$. Then we can verify that $\bw_{t}$ approximates a first-order stationary point in the number of $\cO\left(\epsilon^{-2}\right)$ iterations.
	\par
	On the other hand, when $\|\nabla R_{\bS}(\bw_{t})\| \leq \epsilon \leq \beta / 2$, we notice that 
	\begin{equation}
		\small
		\|\nabla R_{\bS}(\bw)\| = \|\nabla R_{\bS}(\bw)\|\|\bw\| \geq \langle\nabla R_{\bS}(\bw), \bw\rangle \geq \beta,
	\end{equation}
	for any $\bw \in B_{2}(\textbf{0}, 1)$ with $\|\bw\| = 1$. Then by Lipschitz gradient, we have  
	\begin{equation}
		\small
		\begin{aligned}
			\|\bw - \bw_{t}\| & \geq \frac{1}{L_{1}}\|\nabla R_{\bS}(\bw) - \nabla R_{\bS}(\bw_{t})\| \\
			& \geq \frac{1}{L_{1}}\left(\|\nabla R_{\bS}(\bw)\| - \|\nabla R_{\bS}(\bw_{t})\|\right) \\
			& \geq \frac{1}{L_{1}}\left(\beta - \epsilon\right) \\
			& \geq \frac{\beta}{2L_{1}},
		\end{aligned}
	\end{equation}
	for any $\bw$ satisfies $\|\bw\| = 1$. Thus we can choose the $\bu_{t}$ in Algorithm \ref{alg:psgd}, and $\bu_{t}\in B_{2}(\textbf{0}, 1)$. Then with the Lipschitz Hessian, by taking $\sigma = \frac{3L_{1}\epsilon^{\frac{1}{3}}}{2\beta L_{2}}$ and $\epsilon\leq\min\left\{\frac{8\beta^{3}L_{2}^{3}}{27L_{1}^{3}}, \frac{27}{64^{3}L_{2}^{3}}\right\}$,
	\begin{equation}
		\small
		\begin{aligned}
			R_{\bS}(\bw_{t + 1}) - R_{\bS}(\bw_{t}) & \leq \sigma\left\langle R_{\bS}(\bw_{t}), \bu_{t} - \bw_{t}\right\rangle + \frac{\sigma^{2}}{2}(\bu_{t} - \bw_{t})^{T}\nabla^{2}R_{\bS}(\bw_{t})(\bu_{t} - \bw_{t}) + \frac{\sigma^{3}L_{2}}{6}\|\bu_{t} - \bw_{t}\|^{3} \\
			& \leq \sigma\|R_{\bS}(\bw_{t})\|\|\bu_{t} - \bw_{t}\| - \sigma^{2}\frac{\beta^{2}\epsilon^{\frac{1}{3}}}{16L_{1}^{2}} + \frac{\sigma^{3}L_{2}}{6}\left(\frac{\beta}{2L_{1}}\right)^{3} \\
			& \overset{a}{\leq} \sigma\frac{\beta\epsilon}{2L_{1}} - \sigma^{2}\frac{\beta^{2}\epsilon^{\frac{1}{3}}}{16L_{1}^{2}} + \sigma^{3}\frac{L_{2}\beta^{3}}{48L_{1}^{3}}\\
			& \leq \frac{3\epsilon^{\frac{4}{3}}}{4L_{2}} - \frac{9\epsilon}{128L_{2}^{2}}\\
			& \leq -\frac{9\epsilon}{256L_{2}^{2}}, 
		\end{aligned}
	\end{equation}
	where $a$ is from the value of $\bu_{t}$, and the last two inequality is due to the choice of $\sigma$ and $\epsilon$. Thus, combining this with  \eqref{eq:descent equation 1} and \eqref{eq:descent equation 2}, we see the Algorithm break after at most 
	\begin{equation}
		\small
		2M\max\left\{\frac{4L_{1}}{\epsilon^{2}}, \frac{256L_{2}^{2}}{9\epsilon}\right\} = \cO(\epsilon^{-2})
	\end{equation}
	iterations.
\end{proof}
From the result, we see that PGD approximates some $(\epsilon, \epsilon^{\frac{1}{3}})$ second-order stationary point at a computational cost of $\cO(\epsilon^{-2})$. 

\subsection{Excess Risk Under Non-convex problems}
We have the following corollary about the expected excess risk of the proposed PGD Algorithm \ref{alg:psgd}. This corollary is proved when we respectively plug $\zeta(t) = \max\left\{2\sqrt{ML_{1} / t}, 512L_{2}^{2} /9t \right\}$, $\rho(t) = \zeta(t)^{\frac{1}{3}}$ and $\delta = 0$ into the Theorem \ref{thm:generalization error for non-convex}. 
\begin{corollary}
	Under Assumption \ref{ass:smoothness}, \ref{ass:local strong convexity}, \ref{ass:strict saddle}, and \ref{ass:local minima on the boundry}. For $t$ satisfies with 
	\begin{equation}
		\small
		\max\left\{2\sqrt{\frac{ML_{1}}{t}}, \frac{512L_{2}^{2}}{9t}\right\} \leq \min\left\{\frac{8\beta^{3}L_{2}^{3}}{27L_{1}^{3}}, \frac{27}{64^{3}L_{2}^{3}}, \frac{\beta}{2}, \frac{\alpha^{2}}{2L_{0}}, \frac{\lambda^{3}}{8}\right\}
	\end{equation}
	we have 
	\begin{equation}
		\small
		\begin{aligned}
			\min_{1\leq s\leq t}|\mE_{\cA,\bS}\left[R(\bw_{s}) - R(\bw^{*})\right]| & \leq \frac{2L_{0}}{\lambda\sqrt{n}} +  \frac{4L_{0}}{\lambda}\max\left\{2\sqrt{\frac{ML_{1}}{t}}, \frac{512L_{2}^{2}}{9t}\right\} + \frac{2KM}{\sqrt{n}} \\
			& + \frac{8KL_{0}^{2}}{n\lambda} + \left(L_{0}\min\left\{6,\frac{3\lambda}{2L_{2}}\right\} + 2M\right)\xi_{n, 1} + 2M\xi_{n, 2}\\
			& + \mE_{\cA,\bS}[R_{\bS}(\cP_{\cM_{\bS}}(\bw_{t})) - R_{\bS}(\bw_{\bS}^{*})]. 
		\end{aligned}
	\end{equation}
	where $\bw_{t}$ is updated by PGD, $\xi_{n,1}$ and $\xi_{n,2}$ are respectively defined in Theorem \ref{thm:generalization error for non-convex} with $D=2$.
\end{corollary}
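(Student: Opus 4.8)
The plan is to derive this corollary by feeding the iteration-complexity guarantee of Theorem~\ref{thm:escape from saddle point} into the generic excess-risk bound of Theorem~\ref{thm:excess risk for non-convex}. The only genuinely new ingredient is to translate ``PGD reaches a $(\epsilon,\epsilon^{1/3})$-SOSP in $\cO(\epsilon^{-2})$ steps'' into a statement of the form ``after a fixed budget of $t$ steps the best visited iterate is an accurate SOSP'', which is exactly the hypothesis \eqref{eq:escape from saddle point} required by Theorem~\ref{thm:excess risk for non-convex}.

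First I would invert the complexity bound. Recall from the proof of Theorem~\ref{thm:escape from saddle point} that, as long as the current iterate is not a $(\epsilon,\epsilon^{1/3})$-SOSP of $R_{\bS}$, one PGD step decreases $R_{\bS}$ by at least $\min\{\epsilon^{2}/(2L_{1}),\,9\epsilon/(256L_{2}^{2})\}$, while the total decrease is at most $2M$ because $0\le R_{\bS}\le M$. Setting $\epsilon=\zeta(t):=\max\{2\sqrt{ML_{1}/t},\,512L_{2}^{2}/(9t)\}$ makes $t$ times the per-step decrease exceed $2M$, so by a pigeonhole argument at least one of $\bw_{1},\dots,\bw_{t}$, call it $\bw_{s}$, must satisfy $\|\nabla R_{\bS}(\bw_{s})\|\le\zeta(t)$ and $\nabla^{2}R_{\bS}(\bw_{s})\succeq-\zeta(t)^{1/3}$. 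This is precisely \eqref{eq:escape from saddle point} with $\rho(t)=\zeta(t)^{1/3}$ and, since PGD is deterministic, with $\delta=0$; it is also what forces the conclusion to be phrased through $\min_{1\le s\le t}$ rather than at the terminal iterate.

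Next I would apply Theorem~\ref{thm:excess risk for non-convex} to $\bw_{s}$ with $D=2$ (the diameter of the parameter space $B_{2}(\bzero,1)$), $\delta=0$, and the $\zeta(t),\rho(t)$ just identified. The displayed constraint on $t$ is chosen exactly so that all hypotheses hold simultaneously: the entries $8\beta^{3}L_{2}^{3}/(27L_{1}^{3})$, $27/(64^{3}L_{2}^{3})$ and $\beta/2$ in the $\min$ are the conditions under which Theorem~\ref{thm:escape from saddle point} is valid, the entry $\alpha^{2}/(2L_{0})$ ensures $\zeta(t)<\alpha^{2}/(2L_{0})$, and the entry $\lambda^{3}/8$ gives $\rho(t)=\zeta(t)^{1/3}<\lambda/2$; these last two are exactly the requirements of Theorem~\ref{thm:excess risk for non-convex}. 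Substituting $D=2$ and $\delta=0$ into \eqref{eq:excess error for non-convex} then annihilates the $L_{0}D\delta$ term and turns $L_{0}\min\{3D,3\lambda/(2L_{2})\}$ into $L_{0}\min\{6,3\lambda/(2L_{2})\}$, reproducing every term of the claimed bound, with the optimization gap $\mE_{\cA,\bS}[R_{\bS}(\cP_{\cM_{\bS}}(\bw_{t}))-R_{\bS}(\bw_{\bS}^{*})]$ carried along unchanged.

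The main obstacle is the inversion step: I must check the constants carefully so that the per-step decrease really matches $\zeta(t)=\max\{2\sqrt{ML_{1}/t},\,512L_{2}^{2}/(9t)\}$ (keeping track of the factors of $M$ coming from the total decrease $2M$), and argue via the telescoping sum that a good SOSP is attained among the first $t$ iterates rather than only at the algorithm's stopping time. A secondary bookkeeping point is the extra $2L_{0}/(\lambda\sqrt{n})$ term, which is not present verbatim in \eqref{eq:excess error for non-convex}; I would trace whether it is produced by the selection inside $\min_{1\le s\le t}$ or by replacing the exact first-order stationarity assumed in Theorem~\ref{thm:excess risk for non-convex} with the approximate bound $\|\nabla R_{\bS}(\bw_{s})\|\le\zeta(t)$, and absorb it accordingly.
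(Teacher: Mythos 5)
Your proposal is correct and is essentially the paper's own proof: the paper's entire argument for this corollary is the single sentence that it follows by plugging $\zeta(t)=\max\left\{2\sqrt{ML_{1}/t},\,512L_{2}^{2}/(9t)\right\}$, $\rho(t)=\zeta(t)^{1/3}$ and $\delta=0$ (with $D=2$) into the excess-risk bound — i.e.\ into Theorem~\ref{thm:excess risk for non-convex}, which is the theorem you correctly use, even though the paper's text mistakenly cites Theorem~\ref{thm:generalization error for non-convex}; your inversion of Theorem~\ref{thm:escape from saddle point} via the per-step descent and pigeonhole is exactly the content hidden in that sentence, and your reading of each entry of the $\min$ constraint on $t$ matches the paper. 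On your two flagged bookkeeping points: the factor-of-$M$ looseness in the second branch of $\zeta(t)$ is a slight sloppiness present in the paper itself (it treats $M$ as a universal constant), and the extra $2L_{0}/(\lambda\sqrt{n})$ term is not produced by any step of the argument — since it is nonnegative, the bound you derive without it is stronger and implies the stated corollary, so it is harmless slack; likewise the $\min_{1\leq s\leq t}$ is most cleanly justified not by selecting an $\bS$-dependent index (which would not commute with the outer expectation) but by noting that PGD halts at the returned SOSP, so once $t$ exceeds the iteration bound the terminal iterate itself satisfies \eqref{eq:escape from saddle point} with $\delta=0$ for every $\bS$.
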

\section{Experiments}\label{app:experiments}
\begin{figure}[t!]\centering
	\begin{minipage}{0.32\linewidth}
		\includegraphics[width=1\linewidth]{./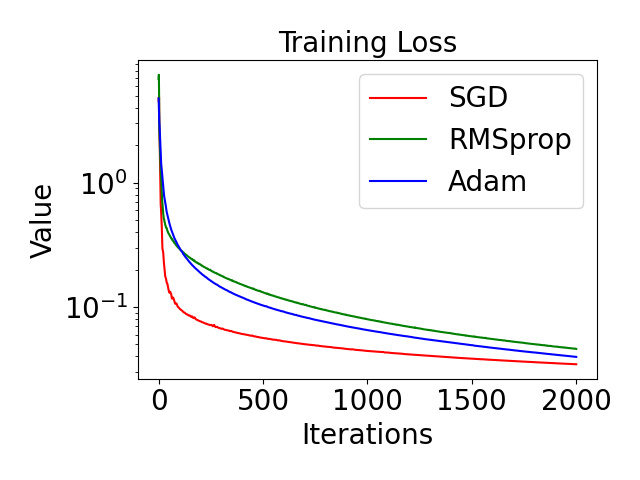}
	\end{minipage}	
	\begin{minipage}{0.32\linewidth}
		\includegraphics[width=1\linewidth]{./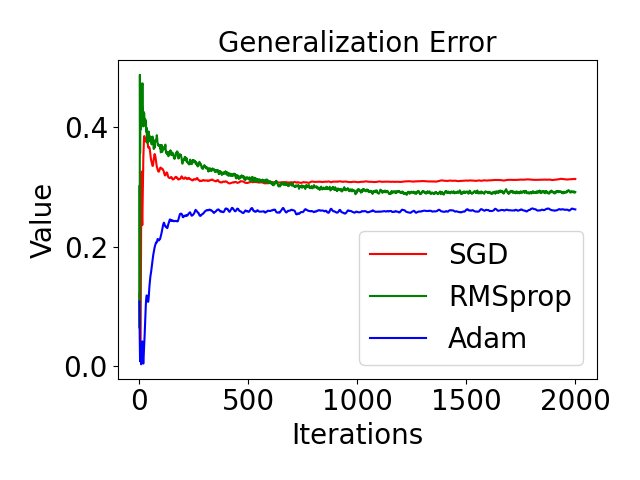}
	\end{minipage}
	\begin{minipage}{0.32\linewidth}
		\includegraphics[width=1\linewidth]{./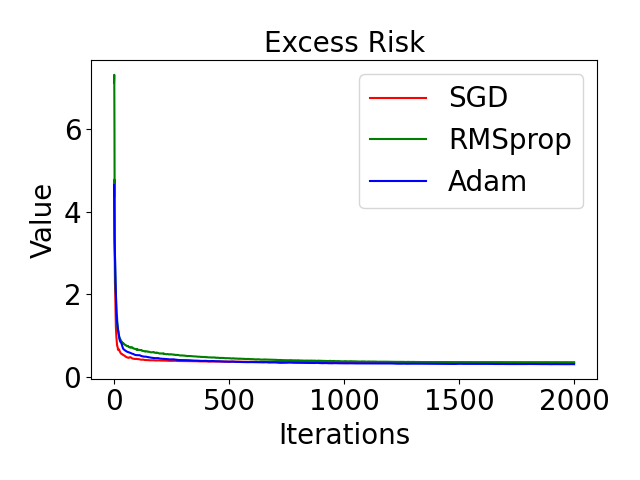}
	\end{minipage}
	\caption{Results of digits dataset under cross entropy loss. From the left to right are respectively training loss, generalization error, and excess risk.}
	\label{fig:digits}
\end{figure}

\begin{figure}[t!]\centering
	\begin{minipage}{0.32\linewidth}
		\includegraphics[width=1\linewidth]{./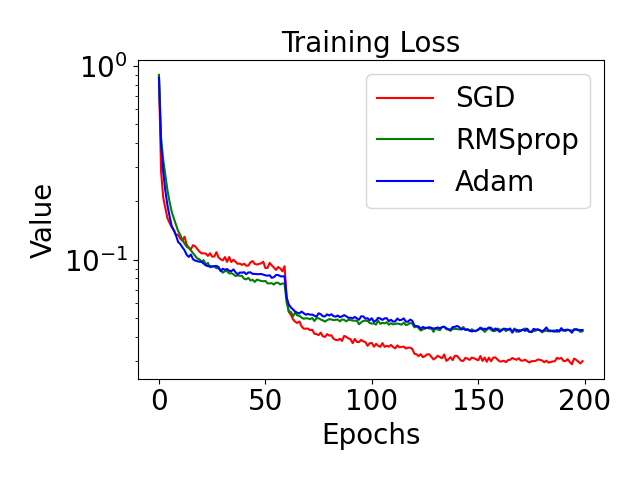}
	\end{minipage}	
	\begin{minipage}{0.32\linewidth}
		\includegraphics[width=1\linewidth]{./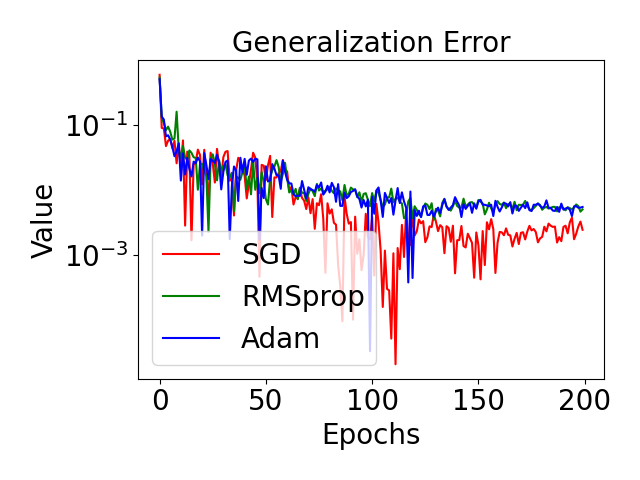}
	\end{minipage}
	\begin{minipage}{0.32\linewidth}
		\includegraphics[width=1\linewidth]{./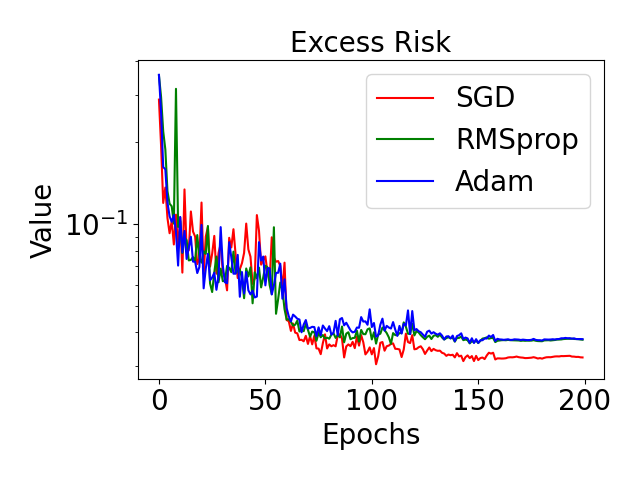}
	\end{minipage}
	\caption{Results of MNIST dataset on LeNet5. From the left to right are respectively training loss, generalization error and excess risk.}
	\label{fig:lenet}
\end{figure}

\begin{figure}[t!]\centering
	\begin{minipage}[t]{0.32\linewidth}
		\centering
		\includegraphics[width=1\linewidth]{./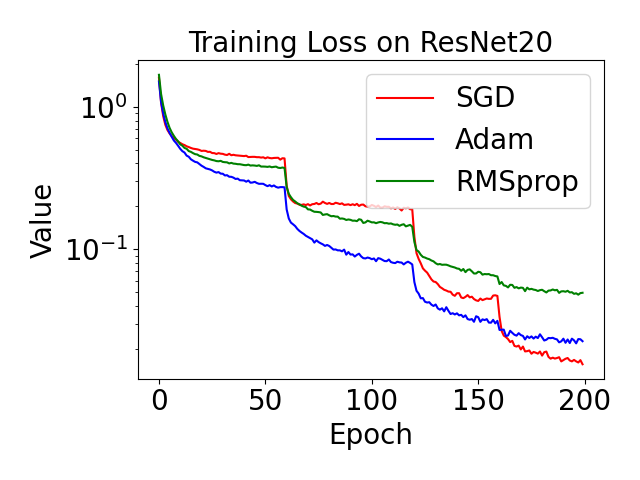}
		\vspace{0.02cm}
		\includegraphics[width=1\linewidth]{./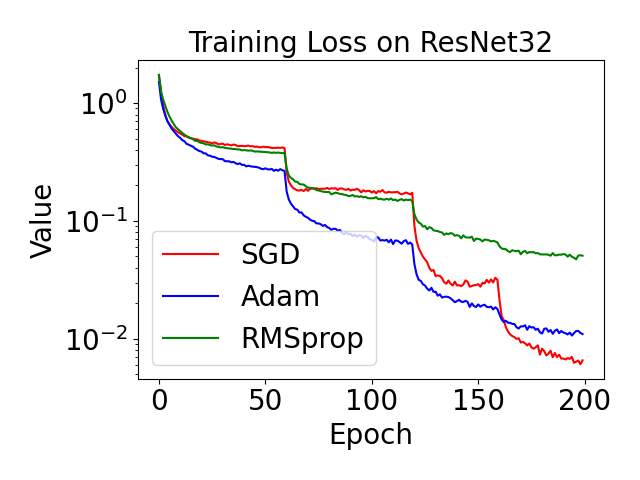}
		\vspace{0.02cm}
		\includegraphics[width=1\linewidth]{./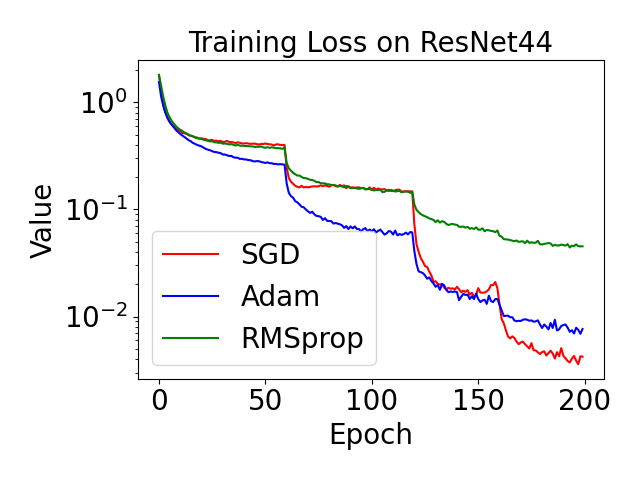}
		\vspace{0.02cm}
		\includegraphics[width=1\linewidth]{./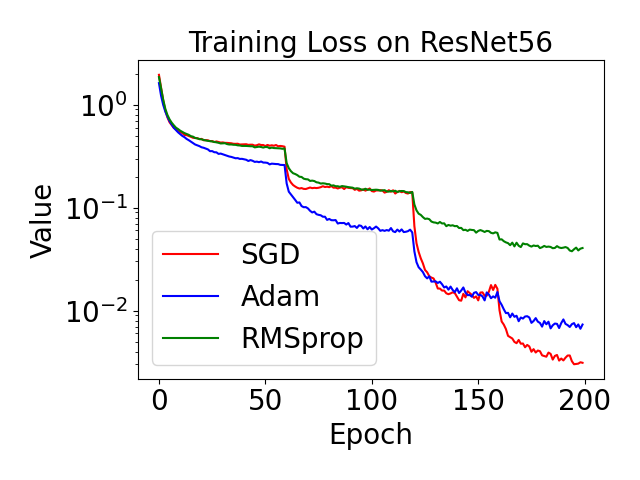}
		\vspace{0.02cm}
	\end{minipage}
	\begin{minipage}[t]{0.32\linewidth}
		\centering
		\includegraphics[width=1\linewidth]{./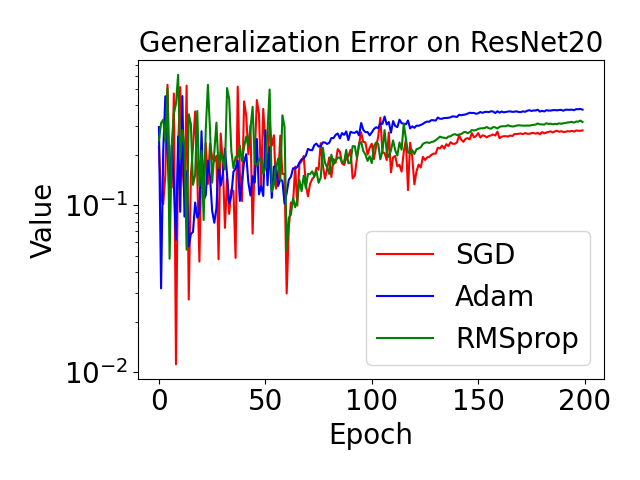}
		\vspace{0.02cm}
		\includegraphics[width=1\linewidth]{./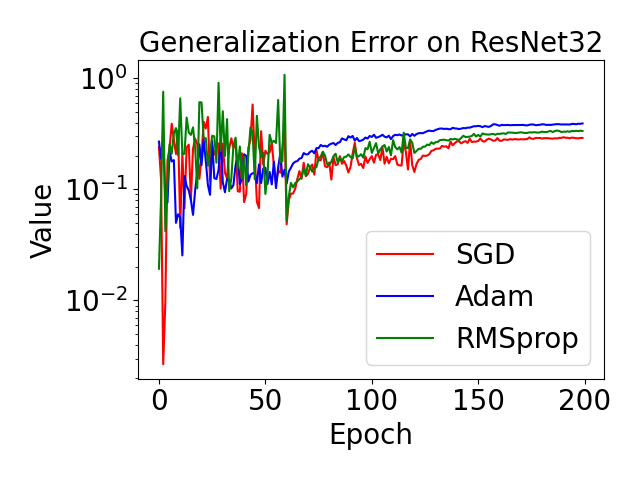}
		\vspace{0.02cm}
		\includegraphics[width=1\linewidth]{./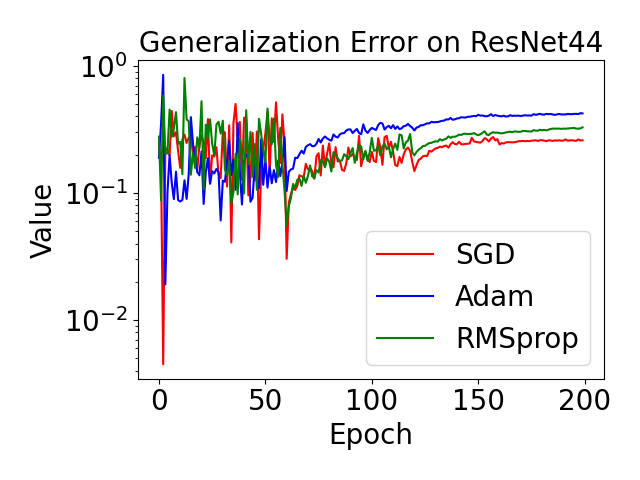}
		\vspace{0.02cm}
		\includegraphics[width=1\linewidth]{./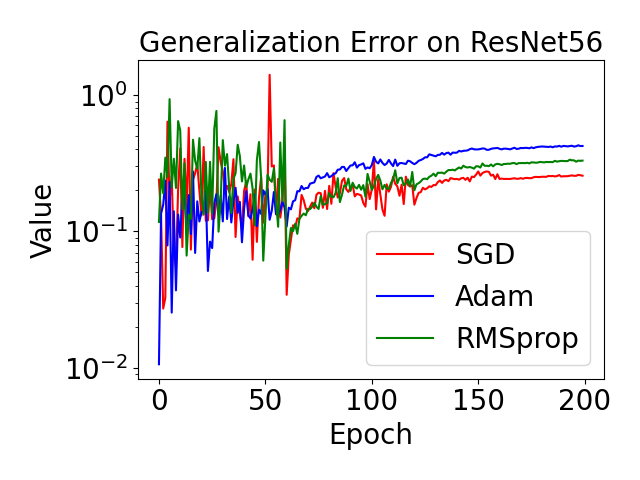}
		\vspace{0.02cm}
	\end{minipage}
	\begin{minipage}[t]{0.32\linewidth}
		\centering
		\includegraphics[width=1\linewidth]{./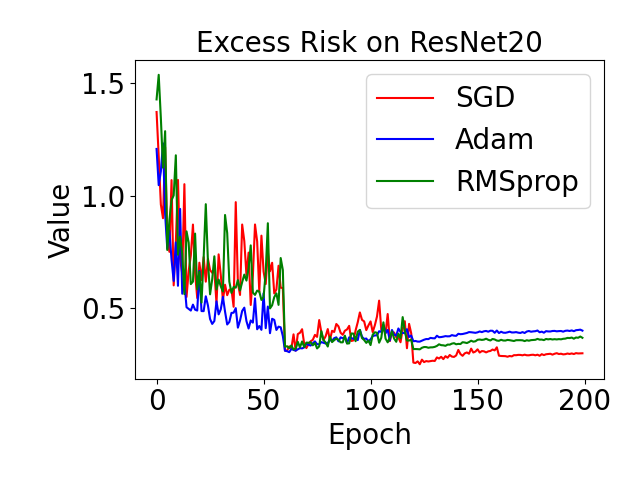}
		\vspace{0.02cm}
		\includegraphics[width=1\linewidth]{./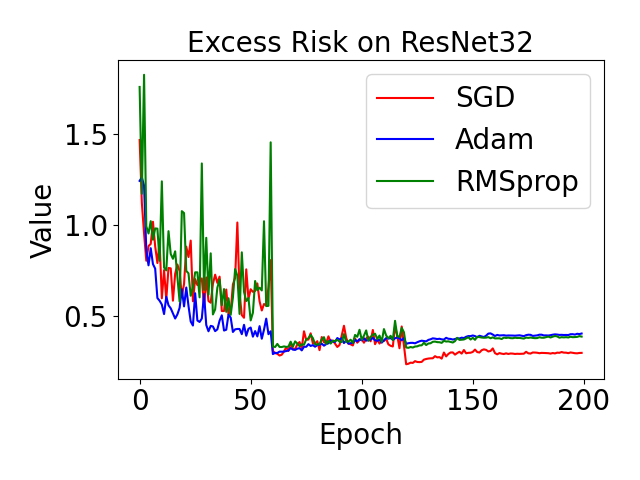}
		\vspace{0.02cm}
		\includegraphics[width=1\linewidth]{./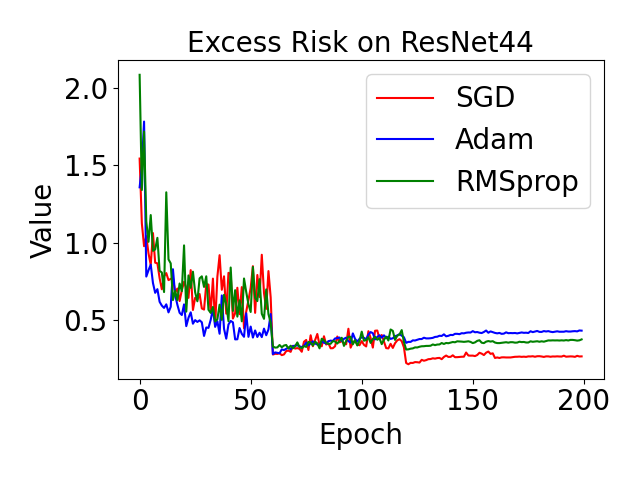}
		\vspace{0.02cm}
		\includegraphics[width=1\linewidth]{./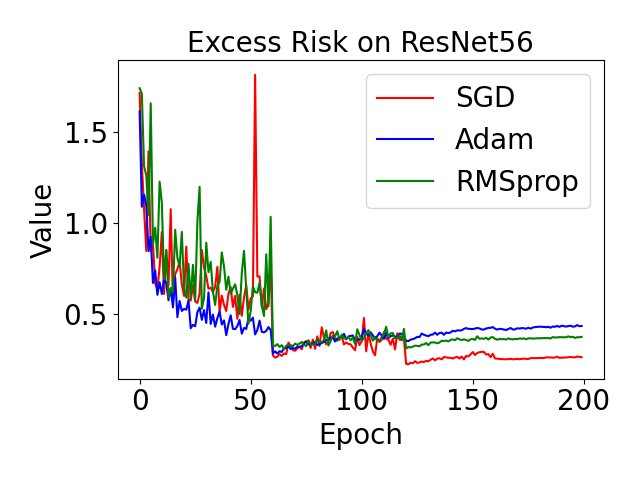}
		\vspace{0.02cm}
	\end{minipage}
	\caption{Results of CIFAR10 dataset on various structures of ResNet i.e., $20, 32, 44, 56$. From the left to right are respectively training loss, generalization error and excess risk.}
	\label{fig: cifar10}
\end{figure}

\begin{figure}[t!]\centering
	\begin{minipage}[t]{0.32\linewidth}
		\centering
		\includegraphics[width=1\linewidth]{./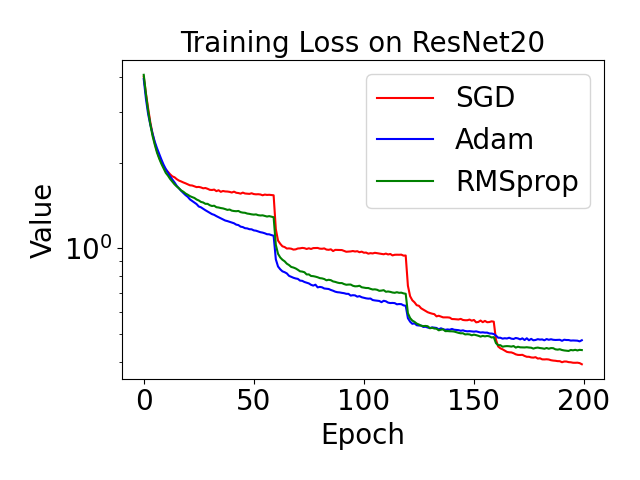}
		\vspace{0.02cm}
		\includegraphics[width=1\linewidth]{./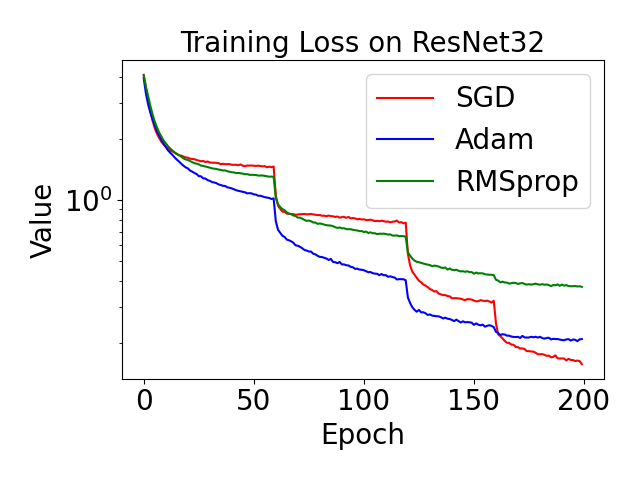}
		\vspace{0.02cm}
		\includegraphics[width=1\linewidth]{./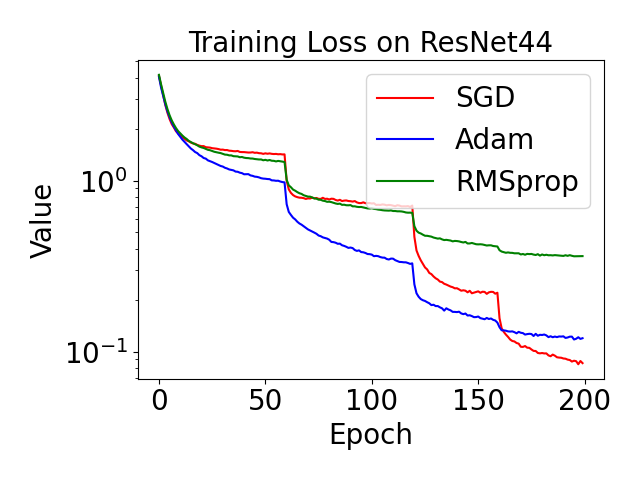}
		\vspace{0.02cm}
		\includegraphics[width=1\linewidth]{./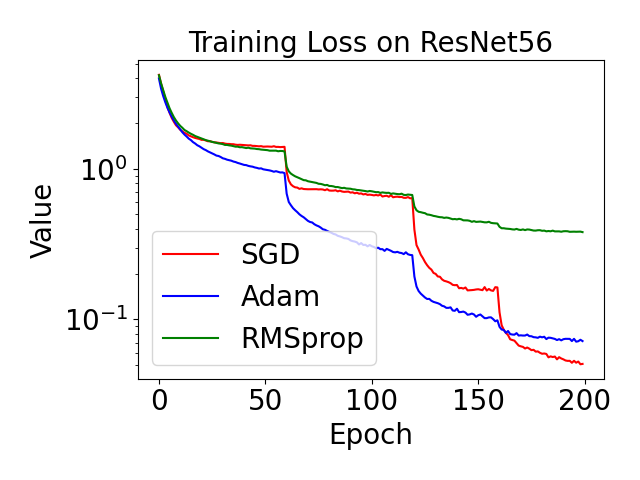}
		\vspace{0.02cm}
	\end{minipage}
	\begin{minipage}[t]{0.32\linewidth}
		\centering
		\includegraphics[width=1\linewidth]{./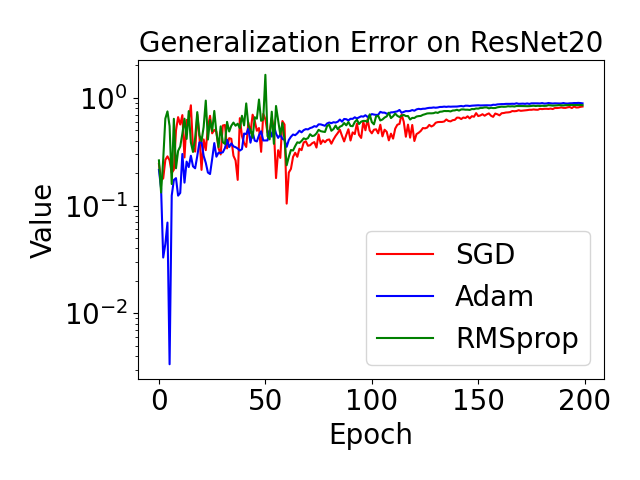}
		\vspace{0.02cm}
		\includegraphics[width=1\linewidth]{./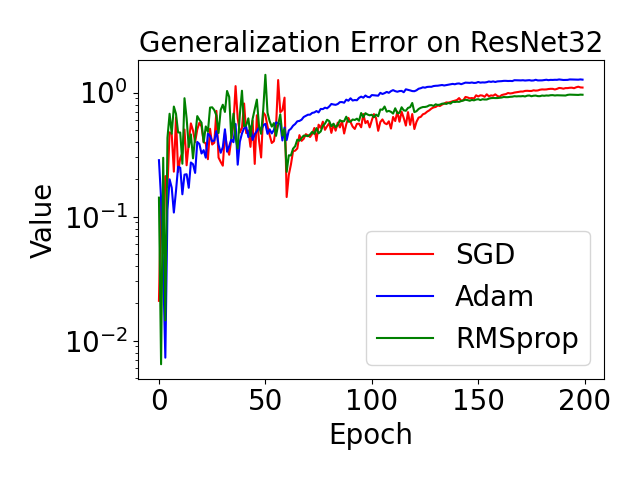}
		\vspace{0.02cm}
		\includegraphics[width=1\linewidth]{./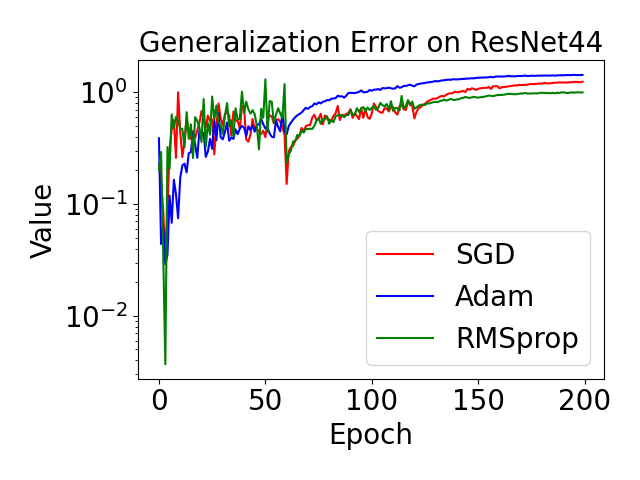}
		\vspace{0.02cm}
		\includegraphics[width=1\linewidth]{./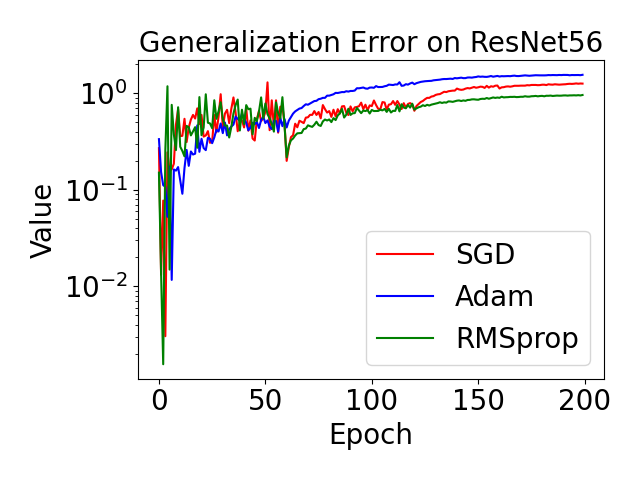}
		\vspace{0.02cm}
	\end{minipage}
	\begin{minipage}[t]{0.32\linewidth}
		\centering
		\includegraphics[width=1\linewidth]{./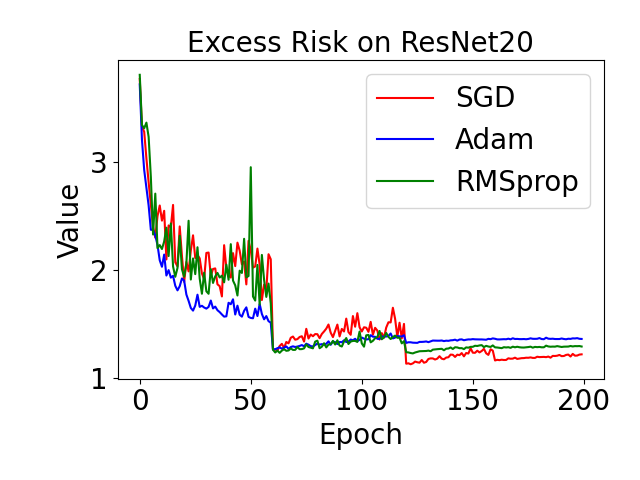}
		\vspace{0.02cm}
		\includegraphics[width=1\linewidth]{./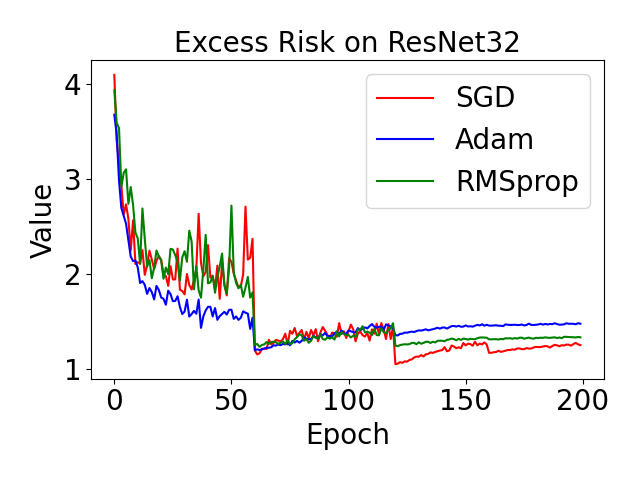}
		\vspace{0.02cm}
		\includegraphics[width=1\linewidth]{./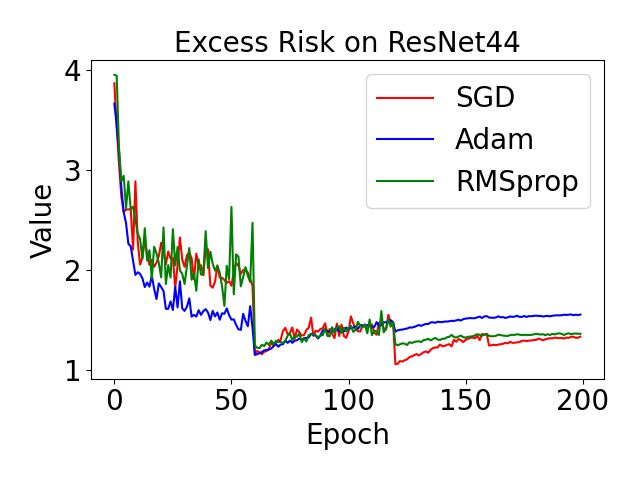}
		\vspace{0.02cm}
		\includegraphics[width=1\linewidth]{./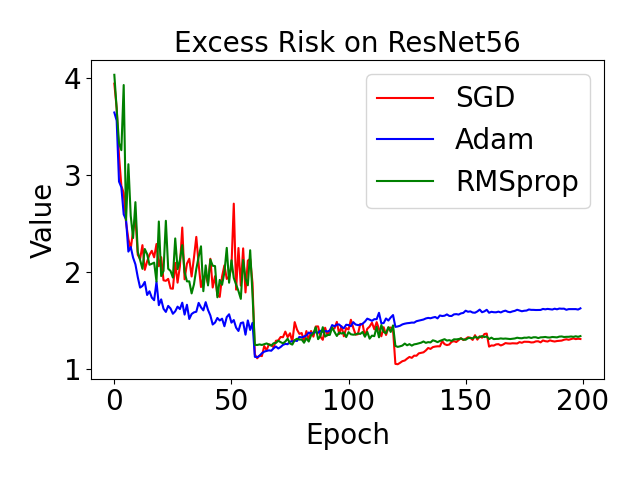}
		\vspace{0.02cm}
	\end{minipage}
	\caption{Results of CIFAR100 dataset on various structures of ResNet i.e., $20, 32, 44, 56$. From the left to right are respectively training loss, generalization error and excess risk.}
	\label{fig: cifar100}
\end{figure}
In this section, we empirically verify our theoretical results in this paper. The experiments are respectively conducted for convex and non-convex problems. We choose SGD \citep{robbins1951stochastic}; RMSprop \citep{tieleman2012lecture}, and Adam \citep{kingma2014adam} as three proper algorithms which are widely used in the field of machine learning. Since we can not access the exact population risk $R(\bw_{t})$ as well as $\inf_{\bw} R(\bw)$ during training. Hence, we use the loss on test set to represent the excess risk. Our experiments are conducted on a server with single NVIDIA V100 GPU. All the reported results are the average over five independent runs. 
\subsection{convex problems}
We conduct the experiments on multi-class logistic regression to verify our results for convex problems. We use the dataset \emph{digits} which is a set with $1800$ samples from $10$ classes. The dataset is available on package \emph{sklearn} \citep{scikit-learn}. 
\par
We split $70\%$ data as the training set and the others are used as the test set. We follow the training strategy that all the experiments are conducted for 2000 steps, the learning rates are respectively $0.1$, $0.001$, and $0.001$ for SGD, RMSprop, and Adam. They are decayed with the inverse square root of update steps. The results are summarized in the Figure \ref{fig:digits}.
\par
From the results, we see that training loss for the three proper algorithms converge close to zero, while the generalization error and excess risk converge to a constant. The observation is consistent with our theoretical conclusion in Section \ref{sec:testing error of convex function}.  
\subsection{Non-convex problems on Neural Network}
For the non-convex problem, we conduct experiments on image classification with various neural network models. Specifically, we use convolutional neural networks LeNet5 \citep{lecun1998gradient} and ResNet \citep{he2016deep}. The two structures are widely used in the image classification tasks, and they are leveraged to verify our conclusions for non-convex problems with model parameters in the same order of $n$ and much larger than $n$.
\par
For both structures, we follow the classical training strategy. All the experiments are conducted for $200$ epochs with cross entropy loss. The learning rates are set to be $0.1, 0.002, 0.001$ respectively for SGD, RMSprop, and Adam. More ever, the learning rates are decayed by a factor $0.2$ at epoch $60, 120, 160$. We use a uniform batch size $128$ and weight decay $0.0005$. 
\subsubsection{Model Parameters in the Same Order of Training Samples}
\paragraph{Data.} The dataset is MNIST \citep{lecun1998gradient} which contain binary images of handwritten digits with $50000$ training samples and $10000$ test samples. 
\paragraph{Model.} The model is LeNet5 which is a five layer convolutional neural network with nearly $60,000$ number of parameters.  
\paragraph{Main Results.} The results are summarized in Figure \ref{fig:lenet}. Our code is based on \url{https://github.com/activatedgeek/LeNet-5}. From the results, we see that the training loss monotonically decreases with the update steps, while both the generalization error and excess risk tend to converge to some constant. This is consistent with our theoretical results in Section \ref{sec: testing error for non-convex function} when $d$ is in the same order of $n$. 
\subsection{Model Parameters Larger than the Order of Training Samples}
\paragraph{Data.} The datasets are CIFAR10 and CIFAR100 \citep{krizhevsky2009learning}, which are two benchmark datasets of colorful images both with $50000$ training samples, $10000$ testing samples but from $10$ and $100$ object classes respectively. 
\paragraph{Model.} The model we used is ResNet in various depths i.e., $20, 32, 44, 56$. The four structures respectively have nearly $0.27$, $0.46$, $0.66$, and $0.85$ millions of parameters.   
\paragraph{Main Results.} The experimental results for CIFAR10 and CIFAR100 are respectively in Figure \ref{fig: cifar10} and \ref{fig: cifar100}. Our code is based on \url{https://github.com/kuangliu/pytorch-cifar}. The results show the optimization error, generalization error, and excess risk exhibit similar trends as the results on MNIST dataset. Thus, although our bounds in Section \ref{sec:analysis of non-convex function} are non-vacuous when $d$ is in the same order of $n$. The empirical verification on the over-parameterized neural network indicates that our results potentially can be applied to the regime of $d\gg n$.  
\section{Examples}\label{app:examples}
In this Section, we present three examples satisfies our assumptions imposed in this paper. Let us start with a linear regression problem for convex optimization. 
\begin{example}[Linear Regression]
	Let $\bz = (\bx, y)$, $y = \bx^{\top}\bw^{*} + \epsilon$ for independent noise $\epsilon$, and $f(\bw, \bz) = (y - \bw^{\top}\bx)^{2}$.
\end{example}
For any $\bz$, the quadratic loss $f(\bw, \bz)$ is convex, and satisfies our smoothness condition Assumption \ref{ass:smoothness}. Obviously, when the Hessian of population risk $E[\bx\bx^{\top}]$ is positively definite, the population risk is local (global) strongly convex, thus Assumptions \ref{ass:smoothness}, \ref{ass:local strong convexity}, and \ref{ass:convexity} are satisfied. However, for any instantaneous loss $f(\bw, z)$ has Hessian of $\bx\bx^{\top}$ which means $f(\bw, \bz)$ is not necessarily strongly convex with respect to $\bw$ for any $\bz$. Thus, we can only treat it as a convex loss function when applying the technique in \citep{hardt2016train}, and get the excess risk bound of order $O(\sqrt{1/n})$. However, the empirical minimizer has a excess risk of order $O(1/n)$ which matches our result. By the way, the technique in \citep{zhang2017empirical} also can be applied here, while they require the number of data is sufficiently large, while we do not have such requirement.
\par
The above example has a globally strongly convex population risk, let us consider the following example with locally but not globally strongly convex population risk.  
\begin{example}[Robust Regression]
	Let $\bz = (\bx, y)$, $y = \bx^{\top}\bw^{*} + \epsilon$ for independent noise $\epsilon$, and $f(\bw, \bz) = \phi(y - \bw^{\top}\bx)$, with 
	\begin{equation}
		\small
		\phi(u) = 
		\begin{cases} u^{2} - \frac{1}{3}u^{3} \qquad & 0\leq u \leq 1, \\
		u^{2} + \frac{1}{3}u^{3} \qquad & 0\leq u \leq 1, \\
		|u| & |u| \geq 1.
		\end{cases}
	\end{equation}
\end{example}
By computing the gradient and Hessian, one can verify that for any $\bz$, our robust regression loss $f(\bw, \bz)$ is convex, and satisfies our smoothness condition Assumption \ref{ass:smoothness}. Again, when the matrix $E[\bx\bx^{\top}]$ is positively definite, the population risk of this example is locally but not globally strongly convex. Then the example satisfies our Assumption \ref{ass:smoothness}-\ref{ass:convexity}. One can also show that the empirical risk minimizer has the generalization bound of order $\cO(1 / n)$ when $\mE[\epsilon^{2}]$ is small enough. The error also matches our generalization bound in Theorem \ref{thm:stability of convex function}.
\par
Finally, we consider an example of non-convex loss that satisfies our imposed Assumptions \ref{ass:smoothness} and \ref{ass:strict saddle}. 
\begin{example}
	Let $\bz_{i}$ be mixture Gaussian data such that $\bz_{i}\sim \frac{1}{2}\cN(\bw_{1}^{*}, \bI) +  \frac{1}{2}\cN(\bw_{2}^{*}, \bI) = p_{\bw^{*}}(\cdot)$. The maximizing likelihood loss is $f(\bw, \bz) = -\log{p_{\bw}(\bz)}$. 
\end{example}
By checking the gradient and Hessian, the loss function $f(\bw, \bz)$ satisfies smoothness Assumption \ref{ass:smoothness}. The population risk $R(\bw) = -E_{\bz\sim p_{\bw^{*}}}[\log{p_{\bw}(\bz)}]$, which has two global minima $(\bw_{1}^{*}, \bw_{2}^{*}), (\bw_{2}^{*}, \bw_{1}^{*})$, and a saddle point $((\bw_{1}^{*} + \bw_{2}^{*}) / 2, (\bw_{1}^{*} + \bw_{2}^{*}) / 2)$. Thus, this problem violates the PL-inequality which says that every local minima are global minima. However, by Lemma 16 in \citep{mei2018landscape}, we can compute the Hessian to check that the two population global minima are all strict local minima, while the saddle point is strict saddle point. Thus, the example satisfies our Assumptions \ref{ass:smoothness} and \ref{ass:strict saddle}. 

	\clearpage

\end{document}